\documentclass[11pt]{article}
\usepackage{amsmath,amssymb,amsthm}
\usepackage{graphicx}
\usepackage{geometry}
\usepackage{algorithm}
\usepackage[noend]{algpseudocode}
\usepackage{xspace}
\usepackage{listings}
\usepackage[
pagebackref,colorlinks=true,urlcolor=blue,
linkcolor=blue,citecolor=blue
]{hyperref}
\usepackage{aliascnt}
\usepackage[nameinlink]{cleveref}

\Crefname{ALC@unique}{Line}{Lines}
\crefname{ALC@unique}{line}{lines}
\makeatletter
\AtBeginEnvironment{algorithmic}{\let\c@ALC@unique\c@ALG@line}
\makeatother
\algrenewcommand{\textproc}{\textnormal}
\makeatletter
\providecommand*{\theHALG@line}{}
\renewcommand*{\theHALG@line}{\thealgorithm.\arabic{ALG@line}}
\makeatother

\newtheorem{theorem}{Theorem}[section]
\newaliascnt{lemma}{theorem}
\newtheorem{lemma}[lemma]{Lemma}
\aliascntresetthe{lemma}
\newaliascnt{proposition}{theorem}
\newtheorem{proposition}[proposition]{Proposition}
\aliascntresetthe{proposition}
\newaliascnt{corollary}{theorem}

\aliascntresetthe{corollary}
\theoremstyle{definition}
\newaliascnt{definition}{theorem}
\newtheorem{definition}[definition]{Definition}
\aliascntresetthe{definition}
\theoremstyle{remark}
\newaliascnt{remark}{theorem}
\newtheorem{remark}[remark]{Remark}
\aliascntresetthe{remark}

\crefname{theorem}{theorem}{theorems}
\Crefname{theorem}{Theorem}{Theorems}
\crefname{lemma}{lemma}{lemmas}
\Crefname{lemma}{Lemma}{Lemmas}
\crefname{proposition}{proposition}{propositions}
\Crefname{proposition}{Proposition}{Propositions}
\crefname{corollary}{corollary}{corollaries}
\Crefname{corollary}{Corollary}{Corollaries}
\crefname{definition}{definition}{definitions}
\Crefname{definition}{Definition}{Definitions}
\crefname{remark}{remark}{remarks}
\Crefname{remark}{Remark}{Remarks}

\DeclareMathOperator{\opt}{opt}
\DeclareMathOperator{\interval}{interval}
\DeclareMathOperator{\probub}{prob}
\DeclareMathOperator{\biasub}{bias}
\DeclareMathOperator{\cellub}{cell}
\DeclareMathOperator{\calerr}{calerr}

\newcommand{\PlayerP}{Player-P\xspace}
\newcommand{\PlayerL}{Player-L\xspace}

\newcommand{\AlgA}{\textsf{A}\xspace}
\newcommand{\AlgB}{\textsf{B}\xspace}
\newcommand{\initialize}{\textsf{initialize}\xspace}
\newcommand{\placeSign}{\textsf{label}\xspace}
\newcommand{\simulateGame}{\texttt{simulateGame}}

\newcommand{\currentB}{\text{currentB}\xspace}
\newcommand{\cnt}{\text{count}\xspace}

\newcommand{\cntHalf}{\text{countHalf}}
\newcommand{\phase}{\text{phase}}
\newcommand{\half}{\text{half}}
\newcommand{\shiftedA}{\textsf{shiftedA}}

\newcommand{\remainingSigns}{\textsf{remainingSigns}}
\newcommand{\persistentSigns}{\textsf{persistentSigns}}

\newcommand{\executionSteps}{\textsf{executionSteps}}

\title{Explicit Asymptotic Bounds for Sequential Calibration Beyond $T^{2/3}$}
\author{%
Eric Dai\\
{\small Edison Academy Magnet School}\\
{\small\href{mailto:eric.dai.kj@gmail.com}{\texttt{eric.dai.kj@gmail.com}}}
\and
Maxwell Fishelson\\
{\small Institute for Advanced Study}\\
{\small\href{mailto:maxfish@ias.edu}{\texttt{maxfish@ias.edu}}}
}
\date{October 7, 2026}

\begin{document}
  \maketitle

  \begin{abstract}
  Probability forecasts are \emph{calibrated} when predicted probabilities
  match empirical outcome frequencies: among events assigned a
  probability $p$, we'd hope that the fraction of positive outcomes is close to $p$.
  We study the problem of sequential forecasting of binary outcomes.
  The classical $O(T^{2/3})$ bound on expected
  cumulative $\ell_1$-calibration error established by Foster and Vohra
  \cite{foster1998} stood for over two decades
  until Dagan et al.\ \cite{dagan2024} reduced the exponent $2/3$ by an
  unspecified constant.
  We establish a new two-phase recursive labeling strategy for the
  sign-preservation-with-reuse game that yields the bound
  $O(n^{\alpha}t^\beta)$
  for all choices of space and time. We then sharpen the reduction from upper
  bounds on sign preservation to calibration by modifying the equivalence of
  Dagan et al.\ to use only $O(\log T)$ instances of the
  sign-preservation-with-reuse game.
  This lets us establish an explicit bound of $O(T^{0.662942288})$ as the first explicit exponent below $2/3$ for sequential calibration by combining both improvements and choosing explicit feasible parameters.
\end{abstract}

  \section{Introduction}
\label{sec:introduction}

Calibration is a fundamental criterion in statistical theory with
well-known theoretical and empirical foundations \cite{dawid1982bayesian}.
A forecaster is \emph{calibrated} if its probability forecasts agree with the
frequencies of observed outcomes. For example,
on the days when a calibrated weather forecaster reports a $30\%$ chance of
rain, it should rain on approximately $30\%$ of those days. Therefore,
calibration is a
natural desideratum for probability forecasts.

We study the classical \emph{sequential calibration} problem.
The problem describes a two-player game between a \emph{forecaster} and an
\emph{adversary} who interact for $T$ time steps.
At each time step $t \in [T]$, the
forecaster chooses a prediction $p_t \in [0,1]$ and the adversary
simultaneously chooses an outcome $y_t \in \{0,1\}$,
and then both choices are revealed. Each player may randomize and use the
history of past
predictions and outcomes, but the adversary cannot observe the current
realized prediction before choosing $y_t$. The prediction $p_t$
represents the forecaster's probability for the outcome $y_t = 1$.

Let $n_T(p)$ be the number of time steps
on which the prediction is $p$, and let $m_T(p)$ be the number of those
time steps with outcome $1$. The \emph{cumulative $\ell_1$-calibration error} is
\begin{equation*}
  \calerr(T) := \sum_{p \in [0,1]}|m_T(p) - p \cdot n_T(p)|.
\end{equation*}

Foster and Vohra \cite{foster1998} were the first to provide the classical upper
bound $O(T^{2/3})$
on expected cumulative calibration error. Following their work,
the question of whether the
exponent $2/3$ could be improved remained unresolved for more than two
decades. Recently, Dagan et al.\ \cite{dagan2024} broke the $2/3$ barrier by
proving a bound of
$O(T^{2/3 - \eta})$ for some absolute constant $\eta > 0$. However,
their proof does not
state a specific number for $\eta$, and thus does not explicitly state an
exponent below $2/3$ for calibration error.
This motivates the quantitative question: how much can the exponent $2/3$ be
reduced in the sequential calibration
problem?

\subsection{Our contributions}

We obtain the following explicit upper bound on expected cumulative
calibration error.
The exponent is an improvement of more than $0.00372$ over the classical upper
bound of $2/3$.
\begin{theorem}[Main result, informal]
  \label{thm:intro-calibration}
  For every horizon $T\geq1$, there exists a randomized forecaster whose
  expected cumulative calibration error against any adversary is
  \[
    \mathbb E[\calerr(T)]
    = O\!\left(T^{0.662942288}\right).
  \]
\end{theorem}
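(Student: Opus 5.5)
The proof of \Cref{thm:intro-calibration} will go through the sign-preservation-with-reuse game of Dagan et al.\ \cite{dagan2024}, and proceed in three stages.

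\emph{First stage: a new labeling strategy.} I will construct a two-phase recursive labeling strategy for the sign-preservation-with-reuse game on $n$ cells over $t$ rounds, and show that \PlayerL can keep the number of sign-violating cells at $O(n^{\alpha}t^{\beta})$ for explicit exponents $(\alpha,\beta)$.
\begin{itemize}
\item In the first phase, \PlayerL labels the cells using a coarse strategy, for instance the trivial Foster--Vohra-type labeling restricted to blocks.
\item In the second phase, the game is restricted to the surviving cells and the strategy is applied recursively.
\item The cost satisfies a recursion of the form $f(n,t)\le \max\{\text{phase-1 loss}, f(n',t')+\text{overhead}\}$.
\item Solving this recursion under the ansatz $f=Cn^{\alpha}t^{\beta}$ gives a family of feasible pairs $(\alpha,\beta)$.
\item These pairs are certified by checking a finite system of inequalities, namely that the induction closes. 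The base cases are the trivial bounds $f\le n$ and $f\le t$.
\end{itemize}

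\emph{Second stage: a sharper reduction to calibration.} Following Dagan et al., the forecaster discretizes $[0,1]$ into $n\approx T^{\gamma}$ buckets. Calibration error then splits into:
\begin{itemize}
\item a discretization term $T/n$;
\item a concentration term of order $\sqrt{T n}$ at a finer scale;
\item a term controlled by the number of cells where the forecaster's sign choice fails, which is bounded by the game value.
\end{itemize}
Their reduction loses polynomial factors from running many game instances. I will instead use a dyadic, doubling-style scheme over epochs or scales, so that only $O(\log T)$ instances of the game are run. Each instance has horizon at most $T$ and at most $n$ cells. Summing gives
\[
\mathbb E[\calerr(T)] \lesssim \frac{T}{n} + \log T\cdot \mathrm{poly}(n,T)\, n^{\alpha}t^{\beta},
\]
with only logarithmic overhead, which is absorbed into the exponent.

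\emph{Third stage: optimization.} I will substitute $n=T^{\gamma}$ and the game horizon $t=T^{\delta}$ as prescribed by the reduction. The resulting exponent is $\max\{1-\gamma,\ g(\gamma,\delta,\alpha,\beta)\}$, which I minimize subject to feasibility of $(\alpha,\beta)$ from the first stage. I will exhibit explicit rational parameters and verify numerically that the maximum equals $0.662942288$.

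The main obstacle I expect is the first stage: making the recursion close with exponents strictly better than the trivial ones. The adversary's reuse of cells across phases can charge the same cell in both phases. I would first try to prove that cells fixed in phase one cost at most a constant factor in phase two. I would do this by an amortization over the number of times each cell is reused, picking the phase split point as a power $t^{\theta}$ and optimizing $\theta$ jointly with $(\alpha,\beta)$.
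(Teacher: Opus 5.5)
Your outline has the same three-part architecture as the paper: an SPR labeler with separate space and time exponents, a reduction using $O(\log T)$ instances, and explicit parameters. But neither substantive stage contains the idea that does the work, and the second stage as written cannot beat $T^{2/3}$.

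\textbf{The reduction.} Suppose the error contains a discretization term $T/n$ and a concentration term of order $\sqrt{Tn}$ (or larger, at a finer scale). Then with $n=T^\gamma$ it is at least $T^{\max\{1-\gamma,(1+\gamma)/2\}}\ge T^{2/3}$. Reaching $1-\gamma\le 0.662942288$ forces $\gamma>1/3$, and hence a concentration term above $T^{2/3}$. In the paper the game does not add a third term; it \emph{replaces} the $T/n$ term, as follows.
\begin{itemize}
\item The forecaster is first given $e_t=\mathbb E[y_t\mid H_t]$. This access is removed at the end by finite-game minimax, a step your plan omits.
\item An SPR cell is an interval of conditional means whose sign selects the endpoint to predict. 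A placement therefore moves that endpoint's stored bias by $0\le\sigma(e_t-p_t)\le 2^{-(i+1)}$ in the labeler's direction.
\item Bias removal ensures that every sign a request may delete carries bias at most $1$.
\item The bias term is then $\sum 2^{j-i}\opt(2^i,r_{i,j,\ell})$ plus $O(\text{used cells})$. Levels down to resolution $2^{-(\tau-i_0)}$ keep the rounding cost and the number of distinct predictions at $O(n)$, with $n=2^{i_0}$.
\end{itemize}
Equally essential, the game horizons are not $T$. In the reduced transcript, successive recorded queries to a cell are separated by rebuilding bias to the coarser threshold, which gives $r_{i_0,j,\ell}=2^{\tau-j+1}$ at threshold $2^{j-i_0}$. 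With horizon $T$, as you propose, the SPR bound is worthless. For the exponents behind $0.662942288$ ($\alpha\approx 0.0083$, $\beta\approx 0.957$) and $n\approx T^{1/3}$, one gets $n^\alpha T^\beta\approx T^{0.96}\gg n\ge \opt(n,T)$. This horizon reduction is what turns $\beta<1$ into $S_0+S_1=O(T/n^{1+\varepsilon})$ and the exponent $(2+\varepsilon)/(3+2\varepsilon)$. Your unspecified $\mathrm{poly}(n,T)$ factor hides exactly this step.

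\textbf{The labeler.} The labeler cannot ``restrict the game to the surviving cells'': the pointer chooses every cell and may reuse emptied ones. The quantity to control is the number of surviving signs. Survivors from both phases coexist, so the recursion must add their counts, not take a maximum. Any scheme that charges two halves or two phases independently gives $2(n/2)^\alpha(t/2)^\beta=2^{1-\alpha-\beta}n^\alpha t^\beta$, which does not close when $\alpha+\beta<1$. Amortizing over reuse or tuning a split point $t^\theta$ does not by itself change this. It never uses the asymmetry of deletions (minuses left of a request, pluses right of it), which is where all savings must come from. The paper exploits that asymmetry as follows.
\begin{itemize}
\item It proves per-sign-type bounds $C\lambda^{b\sigma}n^\alpha t^\beta$ for an instance with integer bias $b$.
\item It runs the two half-interval children until $t_l\ge\rho t_h$ and $\delta t_l\ge\tau$. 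The triggering request, in the light half, deletes every sign of one type in the heavy half.
\item A full-interval continuation with bias shifted by one then runs for $\lfloor\kappa t_h\rfloor$ requests. It spends exactly the freed allowance (factor $\lambda 2^\alpha$), while the other sign type gets the cheaper factor $\lambda^{-1}2^\alpha$.
\item Across epoch restarts, once both halves have been visited, only persistent inherited signs survive, with coefficient $K_p<1$.
\end{itemize}
These are what make the admissibility inequalities, ending in $D+F^{-\beta}<2^\alpha$, satisfiable. Without an analogue, your recursion yields no pair with $\alpha+\beta<1$, let alone the certified $\varepsilon=0.0342855641$.
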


\paragraph{Related work and approach.}

The key technical idea is to reduce calibration to a combinatorial
game. Qiao and Valiant \cite{qiao2021} introduced the \emph{sign-preservation
game} to prove
lower bounds on calibration error. Dagan et al.\ \cite{dagan2024} introduced
the \emph{sign-preservation-with-reuse (SPR) game} and established its
connection
to calibration upper bounds. SPR takes place on an ordered board of
cells. In each round, one player, the \emph{pointer}, selects an empty cell, and a
second player, the
\emph{labeler}, places a plus or minus sign there. The labeler may
remove minus signs to the left of the selected cell and plus signs to
its right.
The labeler and the pointer seek to minimize and maximize the number of signs
remaining on the board, respectively.
Dagan et al.\ show that given a labeling strategy that preserves
$O(n^{1 - \varepsilon})$ signs after at most $n$ rounds on $n$ cells,
there exists a forecaster with expected calibration error
$O(T^{2/3 - \varepsilon/18})$.

This reduction from calibration to a labeling strategy in SPR uses the minimax
approach of Hart \cite{hart2023minimax}. His proof establishes the classical
$O(T^{2/3})$ bound of Foster and Vohra \cite{foster1998} by first allowing the
forecaster
to know the adversary's conditional probability of the next outcome,
and then applying minimax to obtain a strategy without this information.
Dagan et al.\ use SPR to improve the bound in this argument and break the $2/3$
barrier. One limitation of this approach is that the minimax step does not give
an efficient forecasting algorithm.
Zhang \cite{zhang2026efficient} gave a polynomial-time
randomized forecaster with expected calibration error
$O(T^{2/3 - \eta})$ for some constant $\eta > 0$, combining the
SPR forecasting procedure with a correction based on Blackwell's approachability
theorem
that controls the error from using surrogate conditional means.
Our contribution focuses on improving the explicit upper bound on the
calibration exponent.

\paragraph{An improved labeling strategy.}
We give a two-phase recursive labeling strategy,
building on the recursive bias ideas of Dagan et al.\ \cite{dagan2024}.
The strategy first runs
on the two halves of an interval until their request counts satisfy a
balance condition. When the condition is satisfied,
the strategy continues with an adjusted
bias toward one sign type. The analysis allows us to establish separate bounds
for newly placed signs and signs that survive from
earlier epochs.
This labeling strategy yields bounds that
depend separately on the number of cells and the number of rounds.

\paragraph{A sharper passage to calibration.}
We modify the forecasting construction in the reduction of
Dagan et al.\ \cite{dagan2024}. We analyze this construction using the labeling
bound $O(n^\alpha t^\beta)$ at each partition scale.
Our construction uses only $O(\log T)$ SPR
instances, compared with the $O((\log T)^2)$ instances used by Dagan et al.\
\cite{dagan2024}.
For $\alpha \in (0,1)$, $\beta \in (0,1)$ and
$\varepsilon = 1 - \alpha - \beta > 0$, our construction obtains an expected
calibration error of
\[
  O\!\left(T^{\frac{2 + \varepsilon}{3 + 2\varepsilon}}\right)
  = O\!\left(T^{\frac23 - \frac{\varepsilon}{9 + 6\varepsilon}}\right).
\]
The analysis of this construction sums the conditional-means contributions of
the SPR instances across different space and bias scales,
and then balances their total against the fluctuations of outcomes around
their conditional means.

\paragraph{Explicit parameters.}
Our final calibration exponent depends on the parameters chosen for the
labeling strategy. We prove that the parameters we choose satisfy certain
inequalities that are sufficient to establish our upper bound for SPR.
The resulting game bound is
\[
  O\!\left(n^{0.0082540892}t^{0.9574603467}\right).
\]
Applying the reduction with these exponents proves
\Cref{thm:intro-calibration}.

  \section{Preliminaries}
\label{sec:preliminaries}

In this section, we formally introduce and define sequential calibration and
the sign-preservation-with-reuse game. For an integer $n\geq1$,
write $[n] := \{1,\dots,n\}$.
Unless otherwise specified, horizons, board sizes, and round counts are
integers.

\subsection{Sequential calibration}
\label{sec:sequential-calibration}

\emph{Sequential calibration} is a theoretical framework where the task of making reliable predictions over time is modeled as a repeated online game between a forecaster and an
adversary.
Fix a horizon $T\geq1$ and a finite prediction set
$P\subset[0,1]$ which the forecaster chooses. At each time step $t \in [T]$, the forecaster predicts
$p_t \in P$ and the adversary simultaneously chooses an outcome $y_t \in \{0,1\}$,
meaning the adversary cannot observe the
current prediction $p_t$ before choosing $y_t$.
Both players may randomize and use the history of the game
$H_t = ((p_i,y_i))_{i = 1}^{t - 1}$, but the forecaster does not know the
adversary's conditional probability of choosing $y_t = 1$ given that history.

For $p \in P$, let $n_t(p) = \sum_{i = 1}^t\mathbf{1}\{p_i = p\}$ be the
number of predictions at $p$, and let
$m_t(p) = \sum_{i = 1}^t y_i\mathbf{1}\{p_i = p\}$ count their positive
outcomes. The \emph{cumulative $\ell_1$-calibration error} is
\[
  \calerr(t) = \sum_{p \in P}|m_t(p) - n_t(p) \cdot p|.
\]
We wish to obtain a bound on $\mathbb E[\calerr(T)]$ against every
adversary, where the expectation is over both players' randomness.

\subsection{Sign-preservation-with-reuse game}
\label{sec:spr}

The sign-preservation game, introduced by Qiao and Valiant \cite{qiao2021}, is a two-player combinatorial game used to model a simplified version of sequential calibration. In this paper, we consider the generalized version of the sign-preservation game, which Dagan et al.\
\cite{dagan2024} introduce as the \emph{sign-preservation-with-reuse game}. Fix integers $n\geq1$ and $t\geq0$.
There are initially $n$ empty cells, numbered $1$, \dots, $n$, and two
players: \PlayerP\ (the \emph{pointer}) and \PlayerL\ (the
\emph{labeler}). Each of at most $t$ rounds proceeds as follows:
\begin{enumerate}
  \item The pointer may terminate the game. Otherwise, it selects
  an empty cell $j \in [n]$.
  \item The labeler may remove any subset of the minus signs to
  the left of $j$ and the plus signs to the right of $j$.
  \item The labeler places either a plus or a minus in cell $j$.
\end{enumerate}
Sign-preservation-with-reuse differs from sign-preservation by allowing the pointer to select cells that have been emptied before. The game ends when the pointer stops, all $t$ rounds have occurred,
or no empty cell remains. The pointer and labeler seek to maximize and minimize
the number of signs
remaining, respectively. Both players may observe the
full history of the game and randomize in their strategies.

Let $\opt(n,t)$ denote the number of signs remaining at the end of a game of SPR
with $n$ cells and $t$ rounds under optimal play.
Qiao and Valiant \cite{qiao2021} described strategies for the pointer in order to prove
calibration lower bounds. Conversely, we and Dagan et al.\
\cite{dagan2024} describe strategies for the labeler, yielding upper
bounds on $\opt(n,t)$ and calibration error.

  \section{Overview of \texorpdfstring{\Cref{thm:intro-calibration}}{Theorem \getrefnumber{thm:intro-calibration}}}
\label{sec:proof-overview}

In this section, we outline the proof of \Cref{thm:intro-calibration}, which is broken down into three main parts.
\Cref{sec:signflipping-ub} describes the new labeler strategy for the
sign-preservation-with-reuse game. \Cref{sec:optimizing-calibration} describes
the improved reduction from calibration to the sign-preservation-with-reuse
game. \Cref{sec:explicit-parameters} provides the specific parameters used to obtain the final calibration exponent.

\subsection{Overview of \texorpdfstring{\Cref{sec:signflipping-ub}}{Section \getrefnumber{sec:signflipping-ub}}}

We seek the upper bound on $\opt(n, t)$ of \Cref{thm:real-root-bound},
\begin{equation}
  \opt(n, t) = O(n^\alpha t^\beta)
  \label{eq:overview-target}
\end{equation}
with $\varepsilon := 1 - \alpha - \beta > 0$, by choosing the parameters from an admissible tuple as in \Cref{def:real-admissible}.
The definition is involved because it collects the parameter constraints
needed in the different cases of the inductive proof.
For now, assume $n$ is a power of two. Note that simply maintaining independent labelers on
the two halves does not suffice: if each labeler receives $t/2$ requests,
summing their bounds gives
\begin{equation}
  2(n/2)^\alpha(t/2)^\beta
  = 2^{1 - \alpha - \beta}n^\alpha t^\beta.
  \label{eq:overview-naive-split}
\end{equation}
Since $2^{1 - \alpha - \beta} > 1$, \eqref{eq:overview-naive-split} fails to close the induction.
Instead, we must account for deletions between the two halves to obtain stronger
bounds.

Our labeling strategy deletes every minus to the left of a request and every
plus to its right, which affects the balance of positive and negative signs
within the game. Following Dagan et al.\ \cite{dagan2024}, we use
recursive bias to exploit this asymmetry: we wish to show an instance with integer bias parameter $b$
leaves at most
\begin{equation}
  C\lambda^{b\sigma}n^\alpha t^\beta
  \label{eq:overview-bias-bound}
\end{equation}
surviving signs of each type $\sigma \in \{-1,+1\}$ among those it placed,
where $C$ is a constant and $\lambda > 1$.
We can see that by changing $b$ by one, we trade off the bounds in \eqref{eq:overview-bias-bound} on the two sign types. For example,
decreasing $b$ by one decreases
the plus bound by a factor of $\lambda$ while increasing the minus
bound by a factor of $\lambda$. We will show that \eqref{eq:overview-bias-bound} holds by induction on space and strongly on time at each fixed space.

Each epoch first runs instances in phase~1 on the two halves.
Write $t_h \geq t_l$ for their request counts, denoting the number of times the \emph{heavy half} and \emph{light half} were requested, respectively, and $\tau$ for the requests
in earlier epochs. By the inductive hypothesis, the two children's contributions are bounded by
\begin{equation}
  C\lambda^{b\sigma}(n/2)^\alpha t_h^\beta
  \quad\text{and}\quad
  C\lambda^{b\sigma}(n/2)^\alpha t_l^\beta.
  \label{eq:overview-child-bounds}
\end{equation}

Since some signs survive between epochs, treating each epoch separately gives
too weak a bound. Instead,
we maintain a bound for all remaining signs
and a stronger bound for \emph{persistent signs}.
Persistent signs are pluses in the left
half of an interval and minuses in the right half.
Opposite-half requests cannot delete persistent signs, and once both halves
have been visited in the current epoch, only persistent signs from earlier
epochs can remain. At each restart, for each sign type $\sigma$, we bound persistent and total signs,
respectively, by
\begin{equation}
  C\lambda^{b\sigma}(n/2)^\alpha K_p\tau^\beta
  \quad\text{and}\quad
  C\lambda^{b\sigma}(n/2)^\alpha K_t\tau^\beta,
  \label{eq:overview-inherited-bounds}
\end{equation}
where $0<K_p<1$ and $K_t>K_p$.

Admissibility ensures that the contributions from the current epoch and
earlier epochs fit within the inductive bounds. Although this requires
several constraints, our simpler two-phase strategy makes them tractable
enough to search for feasible parameters numerically and verify them
rigorously. \Cref{sec:explicit-parameters} gives an explicit tuple, whose
admissibility we establish in \Cref{app:parameter-verification}.
We distinguish three cases according to the state of the current epoch.

\paragraph{Case 1: phase~1, only one half has been visited.}
Here $t_l=0$ and $t=\tau+t_h$. Both persistent and non-persistent signs
from earlier epochs may remain, so we use the total-sign bound in
\eqref{eq:overview-inherited-bounds}. Adding the visited child's
contribution from \eqref{eq:overview-child-bounds} gives
\begin{equation}
  C\lambda^{b\sigma}(n/2)^\alpha(K_t\tau^\beta+t_h^\beta)
  \leq C\lambda^{b\sigma}(n/2)^\alpha Dt^\beta.
  \label{eq:overview-one-half}
\end{equation}
Thus, the admissibility condition $D<2^\alpha$ makes \eqref{eq:overview-one-half}
smaller than the desired bound \eqref{eq:overview-bias-bound}.

\paragraph{Case 2: phase~1, both halves have been visited.}
Only persistent signs from earlier epochs remain, so the coefficient
$K_t$ in \eqref{eq:overview-one-half} is replaced by $K_p$,
while both contributions in \eqref{eq:overview-child-bounds} must be counted.
Since $0<\beta<1$, for a fixed sum $t_h+t_l$, the expression $t_h^\beta+t_l^\beta$ is
largest when the counts are equal. Thus, while $t_l < \rho t_h$,
the imbalance improves the bound, and we can continue using the two
half-interval instances.
If $\delta t_l < \tau$, the smaller coefficient $K_p$ in
\eqref{eq:overview-inherited-bounds} supplies the saving
even when the two half-counts are balanced.
These two sources of saving explain why phase~1 continues until
both $t_l\geq\rho t_h$ and $\delta t_l\geq\tau$.
While either phase~1 condition holds, admissibility gives, with $t=\tau+t_h+t_l$,
\begin{equation}
  C\lambda^{b\sigma}(n/2)^\alpha
  (K_p\tau^\beta+t_h^\beta+t_l^\beta)
  \leq C\lambda^{b\sigma}(n/2)^\alpha Mt^\beta.
  \label{eq:overview-both-halves}
\end{equation}
Thus, $M\leq D<2^\alpha$ makes \Cref{eq:overview-both-halves} smaller than the desired bound
\eqref{eq:overview-bias-bound}.

\paragraph{Case 3: phase~2.}
When the counts become sufficiently balanced and both are large relative
to the count from earlier epochs, we trigger phase~2.
Suppose the trigger lies in the right half, whose count
is $t_l$. It deletes every minus in the left half, removing the larger
child's contribution to the minus bound. The strategy then uses a
full-interval continuation with bias $b - 1$ for subsequent requests. After $t_c$ continuation requests,
its contributions to the plus and minus bounds are, respectively,
\begin{equation}
  C\lambda^{b-1}n^\alpha t_c^\beta
  \quad\text{and}\quad
  C\lambda^{-(b-1)}n^\alpha t_c^\beta.
  \label{eq:overview-continuation-bounds}
\end{equation}
The deleted signs provide room for the larger minus allowance in
\eqref{eq:overview-continuation-bounds}, while the plus bound benefits
from the smaller allowance.
This continuation lasts for $\lfloor\kappa t_h\rfloor$ requests in order to
limit the time for which the instance has changed bias.
A left-half trigger exchanges the roles of the signs.

At an epoch's end, the deletion and bias change ensure that the
persistent and total survivors satisfy \eqref{eq:overview-inherited-bounds} with the updated
request count. This allows the argument to continue across restarts.
The proof must also bound executions that stop before an epoch ends.
Combining \eqref{eq:overview-one-half}, \eqref{eq:overview-both-halves}, and
the phase~2 contributions in \eqref{eq:overview-continuation-bounds}
with the inherited bounds \eqref{eq:overview-inherited-bounds} gives,
for sufficiently large $t$, the bound
\begin{equation}
  C\lambda^{b\sigma}(n/2)^\alpha(Dt^\beta+1).
  \label{eq:overview-combined-bound}
\end{equation}
The admissibility condition $D<2^\alpha$ absorbs the additive term in
\eqref{eq:overview-combined-bound} for large $t$, while executions with small $t$ are handled by the constant $C$, proving the desired bound of \eqref{eq:overview-bias-bound}.
Summing over the two sign types
at bias zero and padding the board to a power of two then gives
the bound \eqref{eq:overview-target} of \Cref{thm:real-root-bound}
for arbitrary board sizes.

This simpler two-phase strategy replaces
the four-phase strategy of Dagan et al.\ \cite{dagan2024}.
\Cref{sec:explicit-parameters} gives an admissible parameter tuple with
$\varepsilon>0$, completing the claimed bound of \eqref{eq:overview-target}.
\subsection{Overview of \texorpdfstring{\Cref{sec:optimizing-calibration}}{Section \getrefnumber{sec:optimizing-calibration}}}

\Cref{thm:intro-reduction} converts the bound on the number of remaining signs in SPR into calibration
error
\begin{equation}
  O(T^{\frac{2 + \varepsilon}{3 + 2\varepsilon}}),
  \label{eq:overview-calibration-target}
\end{equation}
where
$\varepsilon = 1 - \alpha - \beta > 0$ is derived from an admissible tuple as
defined in \Cref{def:real-admissible}. We modify the forecasting construction
of Dagan et al.\ \cite{dagan2024} and analyze it using the separate space and
time
bounds for SPR, rather than only using the diagonal regime $t = n$.
As in their minimax approach, we first allow the forecaster
to observe the conditional mean $e_t$ of the next outcome.

For $T = 2^\tau$, the forecaster uses $h$ partition
levels starting at $i_0$. The coarsest level, $i = i_0$, uses several bias
thresholds, while each finer level, $i \in \{i_0 +1, \dots, i_0 + h-1\}$, uses just one bias threshold. At each level and bias threshold pair, we establish an even and odd instance of SPR, so that cells
in one game are separated. This construction uses only $O(\log T)$ SPR
instances, compared to the $O((\log T)^2)$ instances used by Dagan et al.\
\cite{dagan2024}.
At level $i$, each cell represents an interval of length $2^{-(i+1)}$,
and a threshold index $j$ specifies the \emph{bias threshold} $2^{j-i}$.
A plus selects the left endpoint, and a minus selects the right endpoint.
Each cell in each game keeps a separate bias record at each endpoint, adding $e_t-p_t$ whenever
that endpoint is selected for the cell. The record corresponding to the cell's current
sign is its \emph{active bias}, taken to be zero for an empty cell.

\paragraph{Bias removal.}
The forecaster first tries to reduce an active bias, in a process called \emph{bias removal}.
If the active bias is positive and $e_t$ lies to the left of its endpoint,
predicting that endpoint gives $e_t-p_t<0$ and reduces the bias.
Similarly, if the active bias is negative and $e_t$ lies to the right of its endpoint, predicting that endpoint gives $e_t - p_t > 0$ and reduces its magnitude. Bias removal is performed only when the
active bias magnitude exceeds one, so an update of magnitude at most one cannot reverse its sign.

The importance of bias removal is twofold. If bias removal is performed, then the total sum of magnitudes of bias across all cells in all games cannot increase. If bias removal is not performed, every sign that the placement request can legally delete has active bias of magnitude at most one.
\paragraph{Bias placement.}
If no removal is available, the forecaster
uses the first eligible game, in increasing order of $i$ and then $j$, to place bias by predicting the endpoint
specified by the cell's sign. A cell is \emph{eligible} when its interval contains $e_t$
and its active bias magnitude is below its bias threshold. 

If the cell is empty, we will simulate a round of the SPR instance, where the pointer player points towards this cell in the corresponding game, and the labeler will return its sign. Let $\sigma$ be the sign in this cell. Then, when we predict the endpoint corresponding to the chosen sign $\sigma$, the bias increment satisfies
\begin{equation}
  0\leq \sigma(e_t-p_t)\leq 2^{-(i+1)}.
  \label{eq:overview-placement-increment}
\end{equation}
Thus placement adds a small nonnegative amount of bias in the direction chosen by the labeler. If no game is eligible,
the forecaster rounds $e_t$ to the nearest prediction point on a fixed grid.

Since $\opt(n, t)$ depends on time $t$, it is important to bound the number of recorded rounds in a game.
To do this, we consider a \emph{reduced transcript}, undoing moves whose signs
are deleted by the next remaining move.
Except at the smallest coarsest-level threshold, between successive queries in this transcript to the same cell $c$ in the
same instance $G$, bias in another instance must be
rebuilt to its threshold, which takes time.
The increment bound \eqref{eq:overview-placement-increment} therefore
limits these transcripts' lengths, allowing the SPR guarantee to bound
the surviving signs.

\paragraph{From bias to calibration.}
To relate calibration error to bias, we use the decomposition
\begin{equation}
  y_t-p_t=(e_t-p_t)+(y_t-e_t).
  \label{eq:overview-calibration-split}
\end{equation}
After summing over rounds at each prediction and applying the triangle
inequality, \eqref{eq:overview-calibration-split} bounds calibration error
by a \emph{bias term} from $e_t-p_t$ and a \emph{variance term} from $y_t-e_t$.

To bound the bias term, note that the bias thresholds ensure an active bias has magnitude at most $2^{j-i}+1$, while an
inactive bias has magnitude at most one.
Thus, if $S_G$ signs remain in $G$ and $N_G$ cells have received placements,
the total absolute stored bias in that game is at most
\begin{equation}
  2^{j-i}S_G+2N_G.
  \label{eq:overview-stored-bias}
\end{equation}
The first term in \eqref{eq:overview-stored-bias} is bounded by multiplying
the number of remaining signs in each SPR instance by its bias threshold.
For the second term, calling a cell in a game with fine intervals
requires its parent to reach its bias threshold first.
By \eqref{eq:overview-placement-increment}, this requires many placements,
limiting both the number of used cells and therefore the number of distinct
predictions.
The contribution to bias from the fixed grid \emph{rounding fallback} is
$O(2^{\tau-i_0-h})$.

For the variance term, the increments $y_t-e_t$ have conditional
mean zero. The bound on used cells gives at most
$L=O(2^{i_0}+2^{\tau-i_0-h})$ distinct predictions, including those made
by rounding. A martingale second-moment bound and Cauchy--Schwarz bound the expected variance
term by $\sqrt{LT}$.

Combining these bounds and summing the threshold-weighted SPR guarantees
gives the calibration estimate.
Setting $h = \tau - 2i_0$ therefore gives
\begin{equation}
  \mathbb E[\calerr(T)]
  = O\!\left(2^{\tau - (1 + \varepsilon)i_0}
  +h2^{i_0} + 2^{(\tau + i_0)/2}\right).
  \label{eq:overview-calibration-balance}
\end{equation}
By setting $i_0 = \lfloor\tau/(3 + 2\varepsilon)\rfloor$, we balance the
first and third terms of \eqref{eq:overview-calibration-balance} up to constant factors,
which is sufficient because the second is smaller. The minimax step
removes the forecaster's access to conditional means, giving
\eqref{eq:overview-calibration-target} and completing the proof of
\Cref{thm:intro-reduction}. Its exponent depends only on
$\varepsilon$, so we choose admissible parameters to maximize
$1 - \alpha - \beta$. The explicit choice in \Cref{sec:explicit-parameters}
then gives the claimed calibration exponent.

  \section{An improved labeling strategy for the sign-preservation-with-reuse
game}
\label{sec:signflipping-ub}

In this section, we prove the following theorem, giving a strategy for
\PlayerL whose guarantee depends separately on the numbers of cells and rounds.

\begin{theorem}
  \label{thm:real-root-bound}
  For every admissible parameter tuple
  $(\alpha,\beta,\lambda,\rho,\kappa,\delta)$ in the sense of
  \Cref{def:real-admissible}, there is a constant $K > 0$ such that,
  for all integers $n \geq 1$ and $t \geq 0$,
  \[
    \opt(n,t) \leq K n^\alpha t^\beta.
  \]
\end{theorem}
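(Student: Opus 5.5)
We construct an explicit recursive labeling strategy and prove a stronger, bias-dependent statement by induction; \Cref{thm:real-root-bound} then follows by specialization.

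\textbf{Reduction to powers of two.} First reduce to $n=2^k$. Pad the board to $n'=2^{\lceil\log_2 n\rceil}<2n$ cells, which the pointer never selects. This costs only a factor $2^\alpha$. The case $t=0$ is trivial.

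\textbf{The strengthened statement.} For an instance on an interval of $2^k$ cells with integer bias parameter $b$, the two-phase strategy of \Cref{sec:proof-overview}, run for any $t\ge 1$ requests, leaves at most $C\lambda^{b\sigma}n^\alpha t^\beta$ signs of type $\sigma$ among those it placed. Here $C$ depends only on the admissible tuple. Summing over $\sigma\in\{\pm1\}$ at $b=0$ gives $\opt(n,t)\le 2C n^\alpha t^\beta$, so $K=2^{1+\alpha}C$ suffices.

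\textbf{The strategy.} Define it precisely before the induction. An epoch runs independent child instances with bias $b$ on the two halves. This is phase~1. It lasts until the light count satisfies $t_l\ge\rho t_h$ and $\delta t_l\ge\tau$, where $\tau$ counts requests in earlier epochs. At the triggering request, the labeler deletes every sign of the opposite type in the other half. It then runs a full-interval continuation with bias $b\mp1$ for $\lfloor\kappa t_h\rfloor$ requests, and then restarts. Alongside this, the labeler always performs the deletions of minuses to the left and pluses to the right of each request.

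\textbf{The induction.} Induct on $k$, and strongly on $t$ for fixed $k$. Carry an auxiliary invariant at every restart: persistent survivors of type $\sigma$ number at most $C\lambda^{b\sigma}(n/2)^\alpha K_p\tau^\beta$, and total survivors at most $C\lambda^{b\sigma}(n/2)^\alpha K_t\tau^\beta$. Then split any stopping time into three cases.
\begin{itemize}
\item \emph{Case 1: one half visited.} The bound follows from the children's bounds together with $K_t\tau^\beta+t_h^\beta\le Dt^\beta$.
\item \emph{Case 2: both halves visited, phase~1 not yet triggered.} Only persistent old signs survive. The bound follows from $K_p\tau^\beta+t_h^\beta+t_l^\beta\le Mt^\beta$, which is exactly the admissibility inequality on the region where $t_l<\rho t_h$ or $\delta t_l<\tau$.
\item \emph{Case 3: phase~2.} Suppose the trigger is in the right half. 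For pluses, the continuation's $\lambda^{b-1}$ bound is added to the old and child plus bounds. For minuses, the heavy child's minus contribution has been deleted, which pays for the continuation's $\lambda^{-(b-1)}=\lambda\cdot\lambda^{-b}$ bound. In both cases $t_c\le\kappa t_h$ is used.
\end{itemize}
Each case yields $C\lambda^{b\sigma}(n/2)^\alpha(Dt^\beta+1)$. The additive $+1$ comes from floors and from the trigger request itself. Since $D<2^\alpha$, this is at most $C\lambda^{b\sigma}n^\alpha t^\beta$ once $t\ge t_0$. For $t<t_0$, use the trivial bound of $t$ signs and enlarge $C$. Because $\lambda^{b\sigma}$ can be small, this step needs care: take $C$ large relative to $\lambda^{|b|}$ only for the finitely many biases reachable within $t_0$ requests. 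Alternatively, note that bias shifts by one per epoch, so $|b|$ is bounded by $O(\log t)$ epochs, and check that the induction tolerates this.

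\textbf{Main obstacle.} The hardest step is re-establishing the restart invariant at each epoch's end, with the updated $\tau'=\tau+t_h+t_l+t_c$. Persistent survivors are the old persistent ones, plus the pluses in the left and minuses in the right that survive from the children and the continuation. These must be bounded by $K_p\tau'^\beta$ after converting the continuation's full-interval bound with a factor $2^\alpha$ and absorbing the bias shift $\lambda^{\pm1}$. The same must be done for totals with $K_t$. I would first write each piece as a multiple of $t_l^\beta$ using $\rho t_h\le t_l$ and $\tau\le\delta t_l$. This reduces the invariant to a one-variable inequality in the ratios, which is precisely what the remaining clauses of \Cref{def:real-admissible} should encode. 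Verifying that these clauses are the right ones, rather than the numerics, is where I expect the real work to lie.
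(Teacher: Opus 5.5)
Your architecture is the paper's: the per-sign bound $C\lambda^{b\sigma}n^\alpha t^\beta$, induction on $n$ and strongly on $t$, the restart invariant with $K_p,K_t$, the three cases collapsing to $C\lambda^{b\sigma}(n/2)^\alpha(Dt^\beta+1)$, absorption via $D+F^{-\beta}<2^\alpha$, and padding with $K=2^{1+\alpha}C$. The step you have not secured is the short-execution base case. The bound must hold for every $b\in\mathbb Z$ with a single $C$, because the time hypothesis is invoked at bias $b\pm1$ for \textsf{shiftedA}. Instances with arbitrary bias and very few processed requests genuinely occur, for example a freshly initialized continuation deep inside a long run. So there is no finite set of biases over which to enlarge $C$, and your first option fails.

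Your second option misidentifies the mechanism. Epochs never change an instance's bias, since every epoch's children carry bias $b$, and the instance's own $|b|$ is unbounded. What is bounded is the drift from $b$ to the bias $b'$ of the leaf that actually produced a returned sign. Follow that sign down the chain of nested instances. Phase~1 hops keep the bias and do not increase the request count. Each bias shift enters a \textsf{shiftedA} with $t_c\le\lfloor\kappa t_h\rfloor\le t_h$ requests, inside a parent that has processed at least $t_h+t_l+t_c\ge 2t_c$. So there are at most $\log_2 t$ shifts. Since the leaf returns $\operatorname{sign}(b')$, any returned $\sigma$ has $b\sigma\ge-\log_2 t$, hence $\lambda^{b\sigma}\ge 1/t$ because $\lambda\le 2$. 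Then $C\ge F^2/K_p$ gives $C\lambda^{b\sigma}n^\alpha t^\beta\ge F^2/t\ge t$ for $t\le F$. Any finite drift bound would do with a larger $C$, but you need one.

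The same argument is needed in a place you do not mention: the restart invariant when $u=\tau+t_h+t_l+L\le F$. The terminal clauses of \Cref{def:real-admissible} are assumed only for $t\ge F$, so there you must use $C\lambda^{b\sigma}(n/2)^\alpha K_p u^\beta\ge CK_p/u\ge u$.

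For $u>F$, by contrast, the invariant is not where the work lies. It is just the continuation-count bounds at $t_c=L$ substituted into the three terminal clauses. What you must check is that an actual trigger satisfies their count hypotheses. That is, it fires in the light half at the first moment both tests hold, so that $\delta(t_l-1)<\tau$ or $t_l-1<\rho t_h$. This holds because a request to the strictly heavier half, or to one of two equal halves, cannot newly satisfy the test. The same fact justifies your unproved Case~3 assumption that a right-half trigger deletes the \emph{heavy} child's minuses.

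This upper bound on $t_l$ is essential. With the parameters of \Cref{prop:explicit-parameters}, take $\tau=0$ and $t_l=t_h$ large. The total-sign clause would then require $2+\lambda^{-1}2^\alpha\kappa^\beta\approx 2.22$ to be below $K_t(2+\kappa)^\beta\approx 2.14$, which is false. So normalizing by $t_l$ using only the lower bounds $\rho t_h\le t_l$ and $\tau\le\delta t_l$, as you propose, cannot yield the invariant.
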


We will prove \Cref{thm:real-root-bound} by establishing a per-sign
bound for the recursive strategy in \Cref{lem:real-labeling}, then
applying it to the root instance and padding to arbitrary board sizes.

\subsection{Description of the strategy}

\paragraph{Notation.}
The strategy for \PlayerL is given by two mutually recursive algorithms,
$\AlgA$ and $\AlgB$. The calls $\AlgA.\initialize(l,r,b)$ and
$\AlgB.\initialize(l,r,b,\tau)$ create instances with the specified
parameters and initialize their local state. Subsequent calls to
$\AlgA.\placeSign(s)$ or $\AlgB.\placeSign(s)$ process a request at cell
$s$ and return its label, except that $\AlgB.\placeSign(s)$ returns
$\perp$ when its epoch has ended.

An instance initialized on $[l,r]$ \emph{covers} $n := r - l + 1$ cells and has
\emph{bias} $b \in \mathbb Z$, where $l \in \mathbb N$,
$r \in \mathbb N$, and $l \leq r$.
For an instance $\mathcal{I}$ of either $\AlgA$ or $\AlgB$,
let $\executionSteps(\mathcal{I})$ denote the number of calls to
$\mathcal{I}.\placeSign$ that have returned a sign in $\{-1,1\}$;
calls returning $\perp$ are not counted. An \emph{epoch} of $\AlgA$
is the sequence of requests answered by one of its instances of $\AlgB$.
The epoch's parameter $\tau$ counts the number of requests answered in all earlier
epochs, and signs placed in earlier epochs that still remain are called
\emph{inherited signs}. Note that $\executionSteps(\AlgA)$ includes requests
from earlier epochs,
while $\executionSteps(\AlgB)$ counts only those in that epoch.

For $\sigma \in \{-1,1\}$,
let $\remainingSigns(\mathcal{I},\sigma)$ denote the number of remaining signs
of type $\sigma$ placed in its covered cells in response to requests made to $\mathcal{I}$, including recursive calls. For an instance covering at least two cells, a sign
is \emph{persistent} if it is a plus in the left half or a minus in
the right half. Notice that persistent signs
survive requests in the opposite half but can still be deleted by a
request in their own half. We write $\persistentSigns(\mathcal{I},\sigma)$ for
the number of such signs among those counted by
$\remainingSigns(\mathcal{I},\sigma)$.

We use the
``.'' notation for the local variables and routines of an instance,
as in $\AlgB.\phase$ and $\AlgA.\placeSign(s)$.
We use the convention $\operatorname{sign}(b) = +1$ for $b\geq0$ and
$\operatorname{sign}(b) = -1$ for $b < 0$,
so a zero bias also returns a legal sign, and positive bias favors plus signs.

\paragraph{Overview of the labeling routines.}
For now, to simplify the recursion, let $n$ be a power of two.
We initialize a root instance $\AlgA_0$ with
$l = 1$, $r = n$, $b = 0$. At each time step, after \PlayerP selects,
or \emph{requests}, an empty
cell $s \in [n]$, we delete every minus to the left of $s$ and every
plus to its right, and then place the sign returned by
$\AlgA_0.\placeSign(s)$. Each instance of $\AlgA$ covering more than
one cell passes the request to its current instance $\currentB$ of
$\AlgB$.

In phase~1, this instance passes the request to one of its
two children $\AlgA[0]$, $\AlgA[1]$, according to whether $s$ belongs to
the left or right half of its interval. In phase~2, it instead passes
the request to a new instance $\shiftedA$ of $\AlgA$, which covers the
same interval and has bias one larger or one smaller than that of $\AlgA$.
The recursive calls eventually
reach an instance covering one cell, which returns the sign of its
bias. This answer is then returned through the preceding calls to
$\AlgA_0$. After a prescribed number of continuation requests,
$\AlgB$ returns $\perp$ on the next request, and $\AlgA$ starts a
new epoch to answer that request.

\begin{algorithm}[tbp]
  \caption{Labeling strategy across epochs (\AlgA)}
  \label{alg:A}
  \begin{algorithmic}[1]
    \Require Parameters $l \in \mathbb{N}$, $r \in \mathbb{N}$, $l \leq r$,
    $b \in \mathbb{Z}$, with $r - l + 1$ a power of two.
    ($[l,r]$ is the covered interval and $b$ is the bias.)

    \Function{\AlgA.\initialize}{$l,r,b$}
      \State $\cnt \gets 0$.
      \If{$l < r$}
        \State $\currentB \gets \AlgB.\initialize(l,r,b,0)$.
      \EndIf
    \EndFunction

    \Function{\AlgA.\placeSign}{$s$}\Comment{$s \in [l,r]$}
      \If{$l = r$}
        \State $\sigma \gets \operatorname{sign}(b)$.
      \Else
        \State $\sigma \gets \currentB.\placeSign(s)$.
        \If{$\sigma = \perp$}
          \State $\currentB \gets
          \AlgB.\initialize(l,r,b,\cnt)$.\label{line:b-reinit}
          \State $\sigma \gets \currentB.\placeSign(s)$.
        \EndIf
      \EndIf
      \State $\cnt \gets \cnt + 1$.
      \State \Return $\sigma$.
    \EndFunction
  \end{algorithmic}
\end{algorithm}

\begin{algorithm}[tbp]
  \caption{Two-phase labeling epoch (\AlgB)}
  \label{alg:B}
  \begin{algorithmic}[1]
    \Require Parameters $l \in \mathbb{N}$, $r \in \mathbb{N}$, $l < r$,
    $b \in \mathbb{Z}$, $\tau\geq0$ an integer, with $r - l + 1$ a power of two.
    ($\tau$ is the total number of requests answered in all earlier epochs
    of the parent $\AlgA$ instance.)
    \Function{\AlgB.\initialize}{$l,r,b,\tau$}
      \State $m \gets \left\lfloor\frac{l + r}{2}\right\rfloor$.
      \State $\AlgA[0] \gets \AlgA.\initialize(l,m,b)$.
      \State $\AlgA[1] \gets \AlgA.\initialize(m + 1,r,b)$.
      \State $\cntHalf[0] \gets 0$, $\cntHalf[1] \gets 0$, $\phase \gets 1$.
    \EndFunction

    \Function{\AlgB.\placeSign}{$s$}
      \If{$\phase = 1$}
        \State $\half \gets 0$ if $s \leq m$ else $\half \gets 1$.
        \State $\cntHalf[\half] \gets \cntHalf[\half] + 1$.
        \State $\sigma \gets \AlgA[\half].\placeSign(s)$.
        \label{line:old-child-request}
        \State $t_h \gets \max\{\cntHalf[0],\cntHalf[1]\}$.
        \State $t_l \gets \min\{\cntHalf[0],\cntHalf[1]\}$.
        \If{$\delta t_l \geq \tau$ and $\rho t_h \leq t_l$}
          \label{line:trigger-test}
          \State $L \gets \lfloor\kappa t_h\rfloor$, $t_c\gets0$.
          \If{$L > 0$}
            \If{$\half = 0$}
              \State $\shiftedA \gets \AlgA.\initialize(l,r,b + 1)$.
            \Else
              \State $\shiftedA \gets \AlgA.\initialize(l,r,b - 1)$.
            \EndIf
          \EndIf
          \State $\phase\gets2$.
        \EndIf
        \State \Return $\sigma$.
        \label{line:trigger-old-answer}

      \ElsIf{$\phase = 2$}
        \If{$t_c = L$}\Comment{The continuation is complete.}
          \State \Return $\perp$.
          \label{line:countdown-termination}
        \EndIf
        \State $\sigma \gets \shiftedA.\placeSign(s)$.
        \label{line:continuation-request}
        \State $t_c \gets t_c + 1$.
        \State \Return $\sigma$.

      \EndIf
    \EndFunction
  \end{algorithmic}
\end{algorithm}

\paragraph{Algorithm parameters.}
The algorithms in \Cref{alg:A,alg:B} depend on three parameters
$\rho \in (0,1]$, $\kappa \in (0,1]$ and $\delta > 0$.
The parameter $\rho$ determines how balanced
the two counts must be before $\AlgB.\phase$ changes to 2, and
$\kappa$ determines the number of requests answered in phase~2.
The parameter $\delta$ scales the smaller count in the comparison with $\tau$.

\paragraph{Summary of \AlgA (\Cref{alg:A}).}
On input $(l,r,b)$, $\AlgA.\initialize$ initializes an instance of
$\AlgB$ with parameters $(l,r,b,0)$ if $l < r$,
and sets a counter for the number of rounds,
$\cnt = \executionSteps(\AlgA)$, to zero.
If $l = r$, then $\AlgA$ does not initialize a corresponding $\AlgB$ instance
and $\AlgA.\placeSign(s)$ chooses $\operatorname{sign}(b)$.
Otherwise, it calls $\currentB.\placeSign(s)$ to obtain a sign.
If this call returns $\perp$, meaning that the current instance of $\AlgB$ has
terminated, then $\AlgA$ initializes a new instance with parameters
$(l,r,b,\cnt)$ and passes the same request to it. After obtaining a
sign, $\AlgA$ increments $\cnt$ and returns the sign. Thus,
$\AlgA$ maintains a sequence of instances of $\AlgB$,
each of which answers requests until it returns $\perp$ or the game ends.
Note that at the end of every round involving a non-leaf instance,
\begin{equation}
  \executionSteps(\AlgA) = \tau + \executionSteps(\AlgA.\currentB).
  \label{eq:real-epoch-time}
\end{equation}
The variable $\cnt$ is the total number of requests processed so far across all
epochs. At a restart, $\cnt$ is exactly the number of requests answered in
earlier epochs, so the new epoch receives this number as $\tau$.
\paragraph{Summary of \AlgB (\Cref{alg:B}).}
On input $(l,r,b,\tau)$, $\AlgB.\initialize$ initializes two instances
of $\AlgA$, denoted by $\AlgA[0]$, $\AlgA[1]$, covering the left and
right halves of $[l,r]$, respectively, both with bias $b$. It also
maintains the two phase~1 request counts and denotes their minimum by
$t_l$ and their maximum by $t_h$.
In phase~1, each request is answered by the child covering its cell.
After obtaining this answer, $\AlgB$ checks whether the smaller count
$t_l$ is at least $\tau/\delta$ and at least $\rho t_h$, where $t_h$ is the
larger count. When both conditions hold, a \emph{trigger} occurs. It sets
$L = \lfloor\kappa t_h\rfloor$ and enters phase~2 for $L$ requests. If $L > 0$,
after obtaining the triggering request's answer from the old child, it
initializes $\shiftedA$ on the full interval $[l,r]$, with bias $b + 1$
if the trigger was in the left half, and $b - 1$
otherwise. The next $L$ requests, which are the \emph{continuation} of $\AlgB$,
are answered by $\shiftedA$, after which
$\AlgB$ returns $\perp$ on the following request. If $L = 0$, it returns
$\perp$ on the first request after entering phase~2. Note that the game
may end during either phase, in which case the instance need never
return $\perp$.

\subsection{Parameters and the labeling lemma}

\paragraph{Analysis parameters.}
The exponents $\alpha > 0$ and $\beta \in (0,1)$ describe how the number of signs remaining depends on space and time, respectively. The constant $\lambda \in (1,2]$
controls how changing bias affects the bounds, while $0 < K_p < 1$ and $K_t > K_p$ control,
respectively, the numbers of persistent and total inherited signs.
The constant $C > 0$ in the bound will be chosen in the proof.
We write $t = \executionSteps(\AlgA)$ and take $n$ to be a power
of two, so that each split gives two intervals of equal size. Our
goal is to choose the parameters with $\alpha + \beta$ as small as possible.

For an instance of $\AlgA$ with bias $b$ covering $n\geq2$ cells,
the coefficient in the bound for signs of type $\sigma \in \{-1,1\}$
for each of the two children of a $\AlgB$ instance is
$C\lambda^{b\sigma}\left(\frac{n}{2}\right)^\alpha$.
When phase~2 begins, the new instance of $\AlgA$ covers $n$ cells and
has bias $b + 1$ or $b - 1$. Relative to
$C\lambda^{b\sigma}\left(\frac{n}{2}\right)^\alpha$ for the original children,
its
contribution is multiplied by one of
  $\lambda^{-1}2^\alpha$ or $\lambda2^\alpha$. We collect all the inequalities needed by the analysis in the following
definition, and only later give values satisfying these inequalities in
\Cref{prop:explicit-parameters}.

\begin{definition}
  \label{def:real-admissible}
  A tuple $(\alpha,\beta,\lambda,\rho,\kappa,\delta)$ is \emph{admissible} if
  \begin{equation}
    \alpha > 0,\qquad 0 < \beta < 1,\qquad 1 < \lambda\leq2,\qquad
    0 < \rho\leq1,\qquad 0 < \kappa\leq1,\qquad \delta > 0,
    \label{eq:real-parameter-ranges}
  \end{equation}
  and there exist real constants $F$, $K_p$, $K_t$, $M$, $D$, $P$, $Q$
  satisfying all of the following conditions.

  Here, $F$ is the maximum number of rounds handled by the base case for the
  induction on time. $K_p$, $K_t$ are coefficients that bound
  inherited persistent signs and total signs, respectively.
  $M$ bounds epochs in phase~1, $D$ bounds epochs in all phases, and $P$,
  $Q$ control persistent
  and total signs at the end of epochs, respectively. The tuple is admissible if:

  \begin{enumerate}
    \item The constants lie in the ranges
    \begin{equation}
      \begin{gathered}
        F\geq1,\qquad
        0 < K_p < 1,\qquad K_t > K_p,\\
        K_p \leq M \leq D,\qquad P > 0,\qquad Q > 0.
      \end{gathered}
      \label{eq:real-auxiliary-ranges}
    \end{equation}

    \item To close the induction for short executions, persistent signs, and total signs, we have the bounds
    \begin{equation}
      \begin{gathered}
        P(1 + F^{-1}) + F^{-\beta} < K_p,\qquad Q(1 + F^{-1}) + F^{-\beta} <
        K_t,\\
        D + F^{-\beta} < 2^\alpha.
      \end{gathered}
      \label{eq:real-induction-slack}
    \end{equation}

    \item If an execution stops during phase 1, the inherited signs and the two half-interval instances are bounded as follows for all positive $x$ and $y$:
    \begin{equation}
      K_t x^\beta + y^\beta
      \leq D(x + y)^\beta.
      \label{eq:real-one-half-condition}
    \end{equation}
    If it stops during the continuation in phase 2, the inherited signs and the continuing instance are bounded as follows for all positive $x$ and $y$:
    \begin{equation}
        Mx^\beta + \lambda^{-1}2^\alpha y^\beta
       \leq D(x + y)^\beta.
      \label{eq:real-continuation-condition}
    \end{equation}

    \item If an instance is in phase~1, the inherited persistent signs and the signs from both halves are bounded. Specifically, for all nonnegative integers $\tau$, $t_h$, $t_l$ with
    $t_l \leq t_h$, if $\delta t_l < \tau$ or $t_l < \rho t_h$, then 
    \begin{equation}
      K_p\tau^\beta + t_h^\beta + t_l^\beta
      \leq M(\tau + t_h + t_l)^\beta.
      \label{eq:real-waiting-condition}
    \end{equation}
    If an instance has just switched into phase~2, the same bounds hold with an additional 1. Specifically, if instead $1 \leq t_l \leq t_h$, $\delta t_l \geq \tau$,
    $t_l \geq \rho t_h$, and either
    $\delta(t_l - 1) < \tau$ or $t_l - 1 < \rho t_h$, then
    \begin{equation}
      K_p\tau^\beta + t_h^\beta + t_l^\beta
      \leq M(\tau + t_h + t_l)^\beta + 1.
      \label{eq:real-trigger-condition}
    \end{equation}

    \item If an instance is triggered into phase~2, then $(z, r) \in \mathcal{T}$, where $z = \frac{\tau}{t_h}$ and $r = \frac{\max \{\tau/\delta, \rho t_h \}}{t_h}$ for integers $1 \leq t_l \leq t_h$, and 
    \begin{equation}
      \mathcal T := \{(z,r):0 \leq z \leq \delta r,\ \rho \leq r\leq1,
      \ r = \rho\text{ or }z = \delta r\}.
      \label{eq:real-trigger-domain}
    \end{equation}
    Then the numbers of persistent pluses, persistent minuses, and total signs, respectively, are bounded by
    \begin{align}
      K_pz^\beta + 1 + \lambda^{-1}2^\alpha\kappa^\beta
      & \leq P(z + 1 + r + \kappa)^\beta,
      \label{eq:real-normalized-p-plus}\\
      K_pz^\beta + r^\beta + \lambda2^\alpha\kappa^\beta
      & \leq P(z + 1 + r + \kappa)^\beta,
      \label{eq:real-normalized-p-minus}\\
      K_pz^\beta + 1 + r^\beta + \lambda^{-1}2^\alpha\kappa^\beta
      & \leq Q(z + 1 + r + \kappa)^\beta.
      \label{eq:real-normalized-q}
    \end{align}

    \item When a continuation finishes and an epoch ends, the numbers of persistent minuses, persistent pluses, and total signs are bounded. Specifically, for all nonnegative integers $\tau$, $t_h$, $t_l$ satisfying
    the count conditions of \eqref{eq:real-trigger-condition}, set
    $L = \lfloor\kappa t_h\rfloor$ and $t = \tau + t_l + t_h + L$.
    Whenever $t \geq F$,
    \begin{align}
      K_p\tau^\beta + t_l^\beta + \lambda2^\alpha L^\beta
      &< K_pt^\beta,
      \label{eq:real-terminal-p-minus}\\
      K_p\tau^\beta + t_h^\beta + \lambda^{-1}2^\alpha L^\beta
      &< K_pt^\beta,
      \label{eq:real-terminal-p-plus}\\
      K_p\tau^\beta + t_h^\beta + t_l^\beta
      +\lambda^{-1}2^\alpha L^\beta
      &< K_tt^\beta.
      \label{eq:real-terminal-q}
    \end{align}
    At any point during the continuation, the persistent count of the lesser sign remains within its induction bound, so for every real $0 \leq t_c \leq L$,
    \begin{equation}
      K_p\tau^\beta + t_l^\beta + \lambda2^\alpha t_c^\beta
      \leq K_p(\tau + t_l + t_h + t_c)^\beta.
      \label{eq:real-intermediate-condition}
    \end{equation}
  \end{enumerate}
\end{definition}

\begin{lemma}
  \label{lem:real-labeling}
  Fix an admissible tuple $(\alpha,\beta,\lambda,\rho,\kappa,\delta)$
  as in \Cref{def:real-admissible}.
  There exists a constant $C > 0$, depending only on this tuple, such that
  every instance $\AlgA$ with bias $b \in \mathbb Z$, covering a
  power-of-two number $n$ of cells, satisfies the following bound. For
  $t := \executionSteps(\AlgA)$ and each $\sigma \in \{-1,1\}$,
  \begin{equation}
    \remainingSigns(\AlgA,\sigma)
    \leq C\lambda^{b\sigma}n^\alpha t^\beta.
    \label{eq:real-inductive-bound}
  \end{equation}
\end{lemma}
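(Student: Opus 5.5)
We prove \eqref{eq:real-inductive-bound} by induction on $n$ (over powers of two), and for fixed $n$ by strong induction on $t=\executionSteps(\AlgA)$, uniformly over all biases $b\in\mathbb Z$. For $n=1$ at most one sign can remain, so the bound holds whenever $C\lambda^{-|b|}\geq 1$. This fails for large $|b|$, so we strengthen the inductive statement: we prove $\remainingSigns(\AlgA,\sigma)\leq \min\{t,\ C\lambda^{b\sigma}n^\alpha t^\beta\}$. We also note that an instance with bias $b$ only ever places signs of type $\operatorname{sign}(b)$ at its leaves, up to the shifts of its continuations. If the minimum with $t$ causes trouble, we will instead take $C$ large and rely on $\lambda\le2$ together with the fact that bias shifts by at most one per level. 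Short executions $t< F$ (together with the leaf case) are absorbed by choosing $C\ge F$ times a constant, using $\remainingSigns\le t$.

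For $n\ge2$ and $t\ge F$, we decompose the current epoch of $\AlgA$ via \eqref{eq:real-epoch-time}, writing $t=\tau+t_h+t_l+t_c$. As a separate auxiliary claim, proved in the same induction, we show that at each restart with parameter $\tau\ge F$ the inherited signs satisfy two bounds: persistent ones number at most $C\lambda^{b\sigma}(n/2)^\alpha K_p\tau^\beta$, and all of them at most $C\lambda^{b\sigma}(n/2)^\alpha K_t\tau^\beta$. We then split into the three cases of the overview.

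\textbf{Case 1 (one half visited).} Non-persistent inherited signs may survive, so we use the $K_t$ bound. The induction hypothesis on the child of size $n/2$ with $t_h\le t$ requests gives \eqref{eq:overview-one-half} by \eqref{eq:real-one-half-condition}.

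\textbf{Case 2 (both halves visited, phase~1).} Any inherited non-persistent sign has been deleted: a request in the right half removes the minuses to its left, and a request in the left half removes the pluses to its right. Hence only $K_p$ survivors remain. Conditions \eqref{eq:real-waiting-condition} and \eqref{eq:real-trigger-condition} give a bound of $M t^\beta$ plus at most $1$ in the triggering round.

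\textbf{Case 3 (phase~2).} Suppose the trigger is in the right half. It deletes every minus in the left half, so the remaining minuses are at most the inherited persistent ones plus the right child's minuses plus $\shiftedA$'s minuses. We bound $\shiftedA$ by induction on time, since it has the same $n$ but $t_c<t$ steps; its factor relative to the children is $\lambda^{\mp1}2^\alpha$. For the favoured sign, \eqref{eq:real-continuation-condition} with $Mx^\beta$ gives the bound. For the other sign, \eqref{eq:real-intermediate-condition} keeps the count within its bound.

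In every case we obtain $C\lambda^{b\sigma}(n/2)^\alpha(Dt^\beta+1)$. Since $t\ge F$, \eqref{eq:real-induction-slack} gives $D+F^{-\beta}<2^\alpha$, which closes the induction. For executions stopping at the epoch's end, the terminal inequalities \eqref{eq:real-terminal-p-minus}--\eqref{eq:real-terminal-q} re-establish the inherited bounds for the next epoch with $\tau'=t$. The normalized conditions \eqref{eq:real-normalized-p-plus}--\eqref{eq:real-normalized-q} on the trigger domain $\mathcal T$ handle epochs whose terminal time is below $F$, or the rounding in $L=\lfloor\kappa t_h\rfloor$, with slack $P(1+F^{-1})+F^{-\beta}<K_p$. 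Case 3 is the reason time must be the inner induction: $\shiftedA$ has the same size $n$.

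The main obstacle is the bookkeeping of which signs are persistent and which are deleted across the trigger. One must verify that the trigger request really deletes all minuses of the heavy half. This holds when the trigger lies in the light half; if instead it lies in the heavy half, the roles swap and \eqref{eq:real-normalized-p-minus} uses $r^\beta$ for the light half. Next, signs placed by $\shiftedA$ in either half must be attributed correctly to the persistent counts of the next epoch. Finally, the integer effects — the $+1$ from the triggering round, the floors, and the small-$\tau$ epochs — must all be covered by the $F$-slack. I would first carefully prove the auxiliary inherited-sign claim as a separate lemma inside the induction, parametrized by the trigger side, and then derive the main bound from it.
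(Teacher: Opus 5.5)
\textbf{There is a genuine gap in the small-time regime.} Your architecture matches the paper's: outer induction on $n$, strong induction on $t$, inherited persistent/total bounds with $K_p$ and $K_t$, the three cases, and closing via $D+F^{-\beta}<2^\alpha$.

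Taking $C\ge F$ times a constant and using $\remainingSigns(\AlgA,\sigma)\le t$ does not give $t\le C\lambda^{b\sigma}n^\alpha t^\beta$. The factor $\lambda^{b\sigma}$ is arbitrarily small as $b\sigma\to-\infty$, and $C$ may not depend on $b$. Adding $\min\{t,\cdot\}$ to the statement changes nothing, since the $\le t$ part is trivial. The remark ``bias shifts by at most one per level'' does not bound the number of shifts either: $\shiftedA$ covers the same $n$ cells, so the recursion depth is not controlled by space.

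The missing idea is a time-halving argument. Follow a returned sign $\sigma$ down the chain of instances to the leaf; the leaf's bias $b'$ satisfies $b'\sigma\ge 0$. Bias changes only when passing to a $\shiftedA$. There $t_c\le L=\lfloor\kappa t_h\rfloor\le t_h$, while the parent has processed at least $t_h+t_l+t_c\ge 2t_c$ requests, so each shift at least halves the request count. Hence if $\AlgA$ has returned any sign of type $\sigma$, then $b\sigma\ge-\log_2 t$. With $\lambda\le 2$ this gives $\lambda^{b\sigma}\ge 1/t$. Taking $C\ge F^2/K_p$ then yields $C\lambda^{b\sigma}n^\alpha t^\beta\ge F^2/t\ge t$ for $t\le F$.

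\textbf{The same gap affects your inherited-sign claim.} You state it only for restarts with $\tau\ge F$. But Cases 1--3 at time $t\ge F$ can occur in an epoch whose $\tau$ is below $F$, so you need the inherited bounds at every restart. The normalized conditions on $\mathcal T$ do not cover restarts with $u=\tau+t_h+t_l+L<F$. They only feed the terminal inequalities, and the rounding error $+1$ is absorbed by the slack $P(1+F^{-1})+F^{-\beta}<K_p$ only when $u\ge F$. For $u\le F$ the paper again uses $\lambda^{b\sigma}\ge 1/u$, which gives $C\lambda^{b\sigma}(n/2)^\alpha K_p u^\beta\ge F^2/u\ge u$.

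\textbf{A smaller point concerns the trigger side.} A trigger can never occur on a request to the currently heavier half, or to one of two equal halves. Such a request leaves $t_l$ unchanged and does not decrease $t_h$, so it cannot make the trigger test newly true. Hence the triggering half always carries the count $t_l$. Your suggestion that a heavy-half trigger is handled by swapping roles is incorrect. Such a trigger would delete only the light half's signs, and the terminal bound containing $t_l^\beta$ would no longer apply. You should prove this case is impossible rather than symmetrize it.
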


\paragraph{Proof overview of \Cref{lem:real-labeling}.}
\label{par:real-inductive-setup}
Fix any admissible tuple
$(\alpha,\beta,\lambda,\rho,\kappa,\delta)$ and constants
$F$, $K_p$, $K_t$, $M$, $D$, $P$, $Q$ from \Cref{def:real-admissible}. Take $C \geq F^2/K_p$. We prove the bound
\eqref{eq:real-inductive-bound} in \Cref{lem:real-labeling} by an
outer induction on cells $n$, where $n$ is a power of two, and an inner
strong induction on requests $t$.
For an instance $\mathcal I$ of $\AlgA$ with bias $b'$, covering $n'$ cells and
having answered $t'$ requests, the \emph{space hypothesis} is, for $n' < n$,
\begin{equation}
  \remainingSigns(\mathcal I,\sigma)
  \leq C\lambda^{b'\sigma}(n')^\alpha(t')^\beta.
  \label{eq:overview-space-hypothesis}
\end{equation}
At the fixed space $n$, the \emph{time hypothesis} is, for $t' < t$,
\begin{equation}
  \remainingSigns(\mathcal I,\sigma)
  \leq C\lambda^{b'\sigma}n^\alpha(t')^\beta.
  \label{eq:overview-time-hypothesis}
\end{equation}
Both hypotheses use the same constant $C$ and hold
for all biases $b' \in \mathbb Z$ and sign types $\sigma \in \{-1,+1\}$.
Assuming both hypotheses, we fix a parent
instance $\AlgA$ with bias $b$ on $n\geq2$ cells and $t$ requests, noting that the base cases $t=0$ and $n=1$ are immediate.

\begin{itemize}
    \item  If $t\leq F$, we show that $C$ is large enough to establish the inductive claim by \Cref{lem:real-short-executions}. Thus, it suffices to only consider $t>F$.
  \item If only one half has been visited in the current epoch, the epoch is still in phase~1,
  and both persistent and non-persistent inherited signs may remain.
  We bound these with the total sign coefficient $K_t$ by \Cref{lem:real-inherited-signs}, giving 
  $K_t\tau^\beta+t_h^\beta\leq Dt^\beta$.
  \item Once both halves have been visited in the current epoch, the only inherited signs that remain are persistent, so we can sharpen the bound on inherited signs to the smaller persistent sign coefficient $K_p$
  by \Cref{lem:real-inherited-signs}. If the epoch is still in phase~1, then $t_l<\rho t_h$ or $\delta t_l<\tau$, so admissibility gives
  $K_p\tau^\beta+t_h^\beta+t_l^\beta\leq Mt^\beta$.
  \item If the epoch has entered phase~2, we account for the triggering
  deletion and combine the contributions of both the child instances and the shifted bias instance from the continuation, as in \Cref{lem:real-continuation-counts}. Using the inherited
  bounds from \Cref{lem:real-inherited-signs} gives coefficients $Dt^\beta+1$ and
  $K_pt^\beta$ for pluses and minuses, respectively, when the trigger is on the right; a left-half trigger exchanges the sign types.
  \item Since $K_p\leq M\leq D$, all three cases give
  $C\lambda^{b\sigma}(n/2)^\alpha(Dt^\beta+1)$, which is sufficient since the given inequality $D+F^{-\beta}<2^\alpha$ closes the induction.
  \end{itemize}

\subsection{Lemmas in the proof of \texorpdfstring{\Cref{lem:real-labeling}}{Lemma \getrefnumber{lem:real-labeling}}}

We begin by bounding the number of total signs and persistent signs in a continuation instance.
\label{subsec:epoch-bounds}
\begin{lemma}
  \label{lem:real-continuation-counts}
  Suppose an epoch of $\AlgA$, with instance $\AlgB$ and parameter
  $\tau$, starts with at most
  $C\lambda^{b\sigma}(n/2)^\alpha K_p\tau^\beta$ inherited persistent
  signs of each type, where $K_p$ is the inherited persistent-sign
  coefficient from \Cref{def:real-admissible}.
  Suppose $\AlgB.\phase = 2$ is triggered by the right half.
  Let $t_h$, $t_l$ be the larger and smaller counts at the trigger,
  and set $L = \lfloor\kappa t_h\rfloor$.
  During phase~2, after $0 \leq t_c \leq L$ continuation requests, we have
  \begin{align*}
    \remainingSigns(\AlgA, -1)& \leq C \lambda^{-b} \left( \frac{n}{2}
    \right)^\alpha \left( K_p \tau^\beta + t_l^\beta + \lambda2^\alpha t_c^\beta
    \right), \\
    \persistentSigns(\AlgA, +1)& \leq C \lambda^{b} \left( \frac{n}{2}
    \right)^\alpha \left( K_p \tau^\beta + t_h^\beta + \lambda^{-1}2^\alpha
    t_c^\beta \right), \\
    \remainingSigns(\AlgA, +1)& \leq C \lambda^{b} \left( \frac{n}{2}
    \right)^\alpha \left(K_p \tau^\beta + t_h^\beta + t_l^\beta +
    \lambda^{-1}2^\alpha t_c^\beta \right).
  \end{align*}
  For a left-half trigger, swap the two sign types and exchange the factors
  $\lambda^b$ and $\lambda^{-b}$.
\end{lemma}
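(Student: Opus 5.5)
The plan is to decompose the signs counted by $\remainingSigns(\AlgA,\sigma)$ during phase~2 according to their origin and bound each group separately. There are four sources: (i) inherited signs from earlier epochs; (ii) signs placed by the left child $\AlgA[0]$ in phase~1; (iii) signs placed by the right child $\AlgA[1]$ in phase~1; and (iv) signs placed by $\shiftedA$ during the $t_c$ continuation requests. Once placed, a sign can only be deleted, never restored. Also, after the trigger neither child receives further requests. So each child's contribution is at most the bound it satisfied when it last answered a request, and deletions can only help.

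\textbf{Step 1: the trigger is in the light half.} The trigger test on line~\ref{line:trigger-test} can only change from false to true when $t_l$ increases. Increasing $t_h$ cannot make $\rho t_h\le t_l$ newly true, and the test $\delta t_l\ge\tau$ does not involve $t_h$. Hence the triggering request lies in the half whose count is $t_l$; if the counts are tied, either labelling works. So for a right-half trigger, $\AlgA[1]$ answered $t_l$ requests and $\AlgA[0]$ answered $t_h$ requests. By the space hypothesis \eqref{eq:overview-space-hypothesis} applied to the children, which have $n/2$ cells and bias $b$:
\begin{itemize}
\item $\AlgA[0]$ retains at most $C\lambda^{b\sigma}(n/2)^\alpha t_h^\beta$ signs of each type $\sigma$;
\item $\AlgA[1]$ retains at most $C\lambda^{b\sigma}(n/2)^\alpha t_l^\beta$ signs of each type $\sigma$.
\end{itemize}

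\textbf{Step 2: which inherited signs survive.} Since $t_l\ge1$, both halves have been requested in this epoch. A non-persistent inherited sign is either a minus in the left half or a plus in the right half. The first kind is deleted by any right-half request, and the second by any left-half request. So only persistent inherited signs remain, and by assumption there are at most $C\lambda^{b\sigma}(n/2)^\alpha K_p\tau^\beta$ of each type. In addition, the triggering right-half request deletes every minus in the left half. This removes all minuses placed by $\AlgA[0]$, so the $t_h^\beta$ term disappears from the minus count.

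\textbf{Step 3: the continuation instance.} $\shiftedA$ covers the same $n$ cells with bias $b-1$ and has answered $t_c\le t-t_h-t_l<t$ requests. The space hypothesis does not apply, because the number of cells is unchanged, so I use the time hypothesis \eqref{eq:overview-time-hypothesis} instead. This gives at most $C\lambda^{(b-1)\sigma}n^\alpha t_c^\beta$ signs of type $\sigma$. Rewriting relative to the children's coefficient, this equals $C\lambda^{b\sigma}(n/2)^\alpha\cdot\lambda^{-\sigma}2^\alpha t_c^\beta$. The factor is therefore $\lambda2^\alpha$ for minuses and $\lambda^{-1}2^\alpha$ for pluses.

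\textbf{Step 4: summing.} The three claimed inequalities follow by adding the relevant groups.
\begin{itemize}
\item Minuses: persistent inherited, plus those of $\AlgA[1]$, plus those of $\shiftedA$, giving $K_p\tau^\beta+t_l^\beta+\lambda2^\alpha t_c^\beta$.
\item Persistent pluses, that is, pluses in the left half: persistent inherited, plus those of $\AlgA[0]$, plus at most all pluses of $\shiftedA$, giving $K_p\tau^\beta+t_h^\beta+\lambda^{-1}2^\alpha t_c^\beta$. Pluses of $\AlgA[1]$ lie in the right half and so are not counted here.
\item All pluses: add the term $t_l^\beta$ from $\AlgA[1]$.
\end{itemize}
The left-half trigger case is symmetric under reflecting the board, which swaps the sign types and replaces $b\pm1$ by $b\mp1$.

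\textbf{Main obstacle.} The delicate point is the bookkeeping in Steps 1 and 3. One must justify that the triggering half carries count $t_l$, handling the $L=0$ and tie cases. One must also check that invoking the time hypothesis for $\shiftedA$ is legitimate at fixed $n$, which holds because $t_c<t$ strictly as $t_h\ge1$.
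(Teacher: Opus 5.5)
Your proposal is correct and follows the paper's proof essentially step for step. It uses the same decomposition by origin (inherited persistent signs, $\AlgA[0]$, $\AlgA[1]$, $\shiftedA$) and the same argument that the trigger lands in the lighter half, so the right child carries count $t_l$. It also observes, as the paper does, that only persistent inherited signs survive once both halves are visited, that the right-half trigger deletes all minuses placed by $\AlgA[0]$, and that the time hypothesis for $\shiftedA$ yields the factors $\lambda^{-1}2^\alpha$ for pluses and $\lambda2^\alpha$ for minuses. The case $t_c=0$ (including $L=0$, where $\shiftedA$ is never initialized), which you flag but do not write out, is immediate: that contribution is then zero, exactly as the paper notes.
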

\begin{proof}
  $\AlgB.\phase = 2$ implies that both halves have been visited. In particular,
  the first visit to the left half deletes all right-half pluses from earlier
  epochs, and the first visit to the right half deletes all earlier left-half
  minuses. Thus, only inherited persistent signs can remain from earlier epochs.

  Notice that the right count cannot be larger than the left count immediately
  after the phase changes. Otherwise, it was already at least the left count
  before this request, so the smaller count was unchanged and the larger count
  could only increase. Neither condition for changing the phase could therefore
  become true for the first time. Thus, $t_h$ denotes the count on the left and
  $t_l$ denotes the count on the right.

  By the inductive hypothesis on space, the signs of type $\sigma$ placed by
  $\AlgB.\AlgA[0]$ and $\AlgB.\AlgA[1]$ that remain are at most
  $C\lambda^{b\sigma}\left(\frac{n}{2}\right)^\alpha t_h^\beta$ and
  $C\lambda^{b\sigma}\left(\frac{n}{2}\right)^\alpha t_l^\beta$, respectively,
  because both these instances cover $\frac{n}{2} < n$ cells.
  After $\AlgB.\phase$ changes to 2, neither instance receives further requests,
  so these contributions cannot increase.

  If $t_c > 0$, then $L > 0$ and $\AlgB.\shiftedA$ has been initialized.
  Since the triggering request is on the right, this instance has
  bias $b - 1$ and covers $n$ cells. The parent $\AlgA$ has
  processed at least $t_h + t_l + t_c > t_c$ requests, so the inductive
  hypothesis on time applies to $\AlgB.\shiftedA$. Its plus and minus counts
  are therefore at most
  \begin{align*}
    C\lambda^{b - 1}n^\alpha t_c^\beta
    &= C \lambda^{b} \left( \frac{n}{2} \right)^\alpha \lambda^{-1}2^\alpha
    t_c^\beta,\\
    C\lambda^{-(b - 1)}n^\alpha t_c^\beta
    &= C\lambda^{-b}\left(\frac{n}{2}\right)^\alpha\lambda2^\alpha t_c^\beta.
  \end{align*}

  For $t_c = 0$, both contributions are zero, whether or not the
  continuation has been initialized.

  To bound the number of minuses, note that the right request on which the phase
  changes deletes every minus placed by $\AlgB.\AlgA[0]$.
  It suffices to consider the minuses placed by $\AlgB.\AlgA[1]$
  ($C\lambda^{-b}\allowbreak \left(\frac{n}{2}\right)^\alpha\allowbreak
  t_l^\beta$), the inherited persistent minuses ($C\lambda^{-b}\allowbreak
  \left(\frac{n}{2}\right)^\alpha\allowbreak  K_p \tau^\beta$),
  and the minuses placed by $\AlgB.\shiftedA$ ($C\lambda^{-b}\allowbreak
  \left(\frac{n}{2}\right)^\alpha\allowbreak \lambda2^\alpha t_c^\beta$).
  Summing these bounds gives the desired inequality on
  $\remainingSigns(\AlgA,-1)$.

  To bound the number of pluses, we first bound the number of persistent pluses.
  It suffices to consider the pluses placed by $\AlgB.\AlgA[0]$
  ($C\lambda^{b}\left(\frac{n}{2}\right)^\alpha t_h^\beta$),
  the inherited persistent pluses ($C\lambda^{b}\left(\frac{n}{2}\right)^\alpha
  K_p \tau^\beta$), and the pluses placed by $\AlgB.\shiftedA$
  ($C\lambda^{b}\left(\frac{n}{2}\right)^\alpha\lambda^{-1}2^\alpha t_c^\beta$).
  Counting all pluses from the last instance also bounds its persistent pluses.
  Summing these bounds gives the desired inequality on
  $\persistentSigns(\AlgA,+1)$. To obtain the inequality on
  $\remainingSigns(\AlgA,+1)$, we also include the pluses placed by
  $\AlgB.\AlgA[1]$ ($C\lambda^{b}\left(\frac{n}{2}\right)^\alpha t_l^\beta$).
\end{proof}

To induct on time, we first handle instances which have processed
at most $F$ requests. 

\begin{lemma}
  \label{lem:real-short-executions}
  Take the cutoff $F$ from \Cref{def:real-admissible} and
  $C \geq F^2/K_p$.
  Let $\AlgA$ be an instance with bias $b$ covering a power-of-two
  number $n$ of cells, and let $t = \executionSteps(\AlgA) \leq F$.
  Then, for each $\sigma \in \{-1,1\}$,
  \[
    \remainingSigns(\AlgA,\sigma)
    \leq C\lambda^{b\sigma}n^\alpha t^\beta.
  \]
\end{lemma}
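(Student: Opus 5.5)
The plan is to avoid the induction hypotheses entirely and prove the short-execution bound from a direct counting argument. Two elementary facts drive it. First, each request places exactly one sign, so $\remainingSigns(\AlgA,\sigma)\le t$. Second, when $t=0$ nothing has been placed, so both sides vanish and only $1\le t\le F$ needs work. Since $C\ge F^2/K_p > F^2$ (using $0<K_p<1$) and $n^\alpha\ge 1$, it suffices to establish the auxiliary estimate
\[
  \remainingSigns(\AlgA,\sigma)\le t^{2}\,\lambda^{b\sigma}\qquad(1\le t\le F).
\]
This estimate implies the statement, because $t^{2}=t^{2-\beta}t^\beta\le F^{2}t^\beta\le C t^\beta\le C n^\alpha t^\beta$.

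When $b\sigma\ge0$, the auxiliary estimate follows immediately from $\remainingSigns\le t\le t^2\lambda^{b\sigma}$. The real content is therefore the case $b\sigma<0$, where the factor $\lambda^{b\sigma}<1$ must be paid for by showing that few signs of the disfavoured type $\sigma$ can appear. The structural observation is that every sign is produced by some leaf instance ($l=r$), and a leaf returns $\operatorname{sign}(b')$ for its own bias $b'$. Along the chain of recursive calls from $\AlgA$ down to that leaf, the bias changes only when passing from an $\AlgB$ instance to its $\shiftedA$, and each such step changes it by exactly $\pm1$; passing to a child $\AlgA[0]$ or $\AlgA[1]$ keeps the bias. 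Hence a sign of type $\sigma$ can be produced only through a chain containing at least $k:=|b|\ge1$ shifts. I will then bound, by induction on the nesting depth, the number of requests that can reach a $\shiftedA$ at shift-depth $k$.

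At each shift, the continuation receives at most $L=\lfloor\kappa t_h\rfloor\le t_h$ requests. The trigger also requires $t_l\ge\max\{1,\rho t_h\}$, so before the continuation starts the enclosing instance has already answered at least $t_h+t_l\ge t_h+1$ requests. Consequently the number of requests reaching depth-$k$ instances is at most $t-k$. In particular it is zero unless $k<t$.

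The step I expect to be the main obstacle is converting this into the factor $\lambda^{-k}\ge 2^{-k}$ (using $\lambda\le 2$) without losing an exponential-in-$F$ constant. The crude bound above, namely that at most $t-k$ requests reach depth $k$, only gives $t\le t^{2}2^{-k}$ when $2^{k}\le t$. I would first try to sharpen the depth recursion. The continuation budget $\kappa t_h$ is at most $t_h$, while the enclosing instance has already spent $t_h+t_l\ge(1+\rho)t_h$ requests. So the requests available at each successive shift-depth shrink by a factor of at least $(1+\rho)^{-1}$ relative to the requests used to set it up. Combining this with $t\le F$ should give at most $t\,2^{-k}\cdot t$ requests reaching shift-depth $k$. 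If the geometric factor from $\rho$ alone is too weak when $\rho<1$, the fallback is to use that each request at depth $k$ has $k$ distinct enclosing triggering events. Each such event consumes at least two earlier requests in disjoint stretches of the transcript, which yields $2k\le t$ together with the factor $t$ from the possible positions. The goal in either route is to land exactly on $t^2\lambda^{b\sigma}$, after which the display above closes the lemma with the stated constant $C\ge F^2/K_p$.
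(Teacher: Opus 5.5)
Your reduction is sound and is the same one the paper uses. Because $\remainingSigns(\AlgA,\sigma)\le t$ and $C\ge F^2$, it is enough to show that $\lambda^{-b\sigma}\le t$ whenever a sign of type $\sigma$ has been returned at all. You also correctly note that, when $b\sigma<0$, such a sign comes from a leaf reached through a chain with at least $|b|$ bias shifts.

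The gap is the one quantitative step everything rests on: a chain with $k$ shifts must force $t\ge 2^k$. You leave this as ``should give,'' and neither of your routes delivers it.
\begin{itemize}
\item Comparing the continuation's $t_c\le t_h$ requests only with the $t_h+t_l\ge(1+\rho)t_h$ phase-1 requests yields at best $t\gtrsim(1+\rho)^k$. That gives only $\lambda^{k}\le 2^k\le t^{1/\log_2(1+\rho)}$. For the explicit $\rho\approx0.0164$ used later, this is a power of $t$ with exponent about $42.5$, not $t$.
\item The fallback $2k\le t$ is weaker still. It gives only $\lambda^k\le 2^{t/2}$, which is exponential in $F$ and cannot be absorbed by the fixed constant $C=F^2/K_p$.
\end{itemize}
So, as written, your auxiliary estimate $\remainingSigns(\AlgA,\sigma)\le t^2\lambda^{b\sigma}$ is not proved in the case $b\sigma<0$.

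The missing observation is to compare the shifted instance's count with the enclosing instance's \emph{total} count, which includes the continuation requests themselves. We have $t_c\le L=\lfloor\kappa t_h\rfloor\le t_h$, while the enclosing $\AlgA$ has processed at least $t_h+t_l+t_c\ge 2t_c$. This factor of two does not depend on $\rho$.

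A phase-1 child has processed at most as many requests as its parent. So the counts along the chain are nonincreasing and drop by at least a factor of two at every shift. The leaf has processed at least one request, so with $j$ shifts, $1\le t2^{-j}$, i.e.\ $j\le\log_2 t$.

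The leaf returns $\operatorname{sign}(b')$ with $b'\sigma\ge0$ and $|b'-b|\le j$. Hence $b\sigma\ge-\log_2 t$ and $\lambda^{b\sigma}\ge 2^{-\log_2 t}=1/t$. Therefore $C\lambda^{b\sigma}n^\alpha t^\beta\ge F^2/t\ge t\ge\remainingSigns(\AlgA,\sigma)$.

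You never need to count how many requests reach shift-depth $k$. A single witness chain suffices, because the trivial bound by $t$ already controls the number of signs.
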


\begin{proof}
  Since $K_p < 1$, our choice of $C$ gives $C \geq F^2$.
  If $t = 0$ or the instance has returned no sign of type $\sigma$,
  the desired bound is immediate. Otherwise, we first show that
  \[
    C\lambda^{b\sigma} \geq \frac{F^2}{t}.
  \]
  Fix a request on which the given instance returns $\sigma$. Write $\AlgA_0$,
  $\AlgA_1$, \dots, $\AlgA_k$ for the successive instances whose answers are
  passed back to produce this $\sigma$, where $\AlgA_0$ is the given instance
  and $\AlgA_k$ covers one cell. For each $i < k$,
  let $\AlgB_i$ be the instance called by $\AlgA_i$ that returns $\sigma$. Thus,
  $\AlgB_i$ obtains its answer from $\AlgA_{i + 1}$.
  Evaluate each instance's number of processed requests just after this request.

  If $\AlgB_i.\phase = 1$ when it calls $\AlgA_{i + 1}$,
  then $\AlgA_{i + 1}$ is one of $\AlgB_i.\AlgA[0]$ and $\AlgB_i.\AlgA[1]$.
  It has the same bias as $\AlgA_i$ and has processed at most as many requests.
  Otherwise, $\AlgB_i.\phase = 2$ and $\AlgA_{i + 1} = \AlgB_i.\shiftedA$,
  which was initialized with bias differing by one from that of $\AlgA_i$ when
  $\AlgB_i.\phase$ changed from 1 to 2. Let $t_h$,
  $t_l$ be the counts at that change, and let $t_c$ be the number of requests
  processed by $\AlgA_{i + 1}$. Then $t_c \leq L = \lfloor\kappa t_h\rfloor \leq
  t_h$, whereas $\AlgA_i$ has processed at least $t_h + t_l + t_c\geq2t_c$
  requests. Thus, each bias change reduces the number of processed requests by
  at least a factor of two. If there are $j$ bias changes among these instances,
  $\AlgA_k$ has processed at least one request and at most $t2^{-j}$ requests,
  so $j \leq \log_2t$.

  Let $b'$ be the bias of $\AlgA_k$. Since each bias change is by one,
  $|b' - b| \leq j$. The instance $\AlgA_k$ returns $\sigma =
  \operatorname{sign}(b')$, so $b'\sigma\geq0$. Consequently,
  $b\sigma \geq - j \geq - \log_2t$, and
  \[
    \lambda^{b\sigma} \geq \lambda^{-\log_2t} \geq 2^{-\log_2t} = \frac{1}{t}.
  \]
  The second inequality uses $\lambda\leq2$.
  Multiplying by $C \geq F^2$ gives the claimed inequality.

  Now, since $n^\alpha \geq 1$, $t^\beta \geq 1$ and $\frac{F^2}{t} \geq t$ for
  $t \leq F$, we have $C\lambda^{b\sigma}n^\alpha t^\beta \geq t$.
  This proves the desired bound, as the total number of signs preserved cannot
  be more than the total number of requests processed by the instance.
\end{proof}

\begin{lemma}
  \label{lem:real-inherited-signs}
  Let $K_p$ and $K_t$ be the inherited persistent- and total-sign coefficients.
  At the start of every epoch with parameter $\tau$, the following
  inequalities hold for each $\sigma \in \{-1,1\}$.
  \begin{align*}
    \persistentSigns(\AlgA,\sigma)
    & \leq C\lambda^{b\sigma}\left(\frac{n}{2}\right)^\alpha K_p\tau^\beta,\\
    \remainingSigns(\AlgA,\sigma)
    & \leq C\lambda^{b\sigma}\left(\frac{n}{2}\right)^\alpha K_t\tau^\beta.
  \end{align*}
\end{lemma}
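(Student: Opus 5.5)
We argue by induction over the successive epochs of the fixed instance $\AlgA$, working inside the outer space induction and the inner time induction of \Cref{lem:real-labeling}. The first epoch has $\tau=0$ and no inherited signs, so both inequalities are trivial. Now suppose the epoch with parameter $\tau$ satisfies both inequalities at its start, and that it ends with a continuation of length $L=\lfloor\kappa t_h\rfloor$. The next epoch then starts with $\tau' = \tau+t_h+t_l+L$. We must show the two bounds with $\tau'$ in place of $\tau$.

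\emph{Case $\tau'<F$.} Here we need an argument in the spirit of \Cref{lem:real-short-executions}. That lemma already gives $\remainingSigns(\AlgA,\sigma)\le C\lambda^{b\sigma}n^\alpha(\tau')^\beta$, but we need the stronger coefficient $(n/2)^\alpha K_p$. The choice $C\ge F^2/K_p$ is made exactly for this. The proof of \Cref{lem:real-short-executions} shows that whenever a sign of type $\sigma$ survives, $C\lambda^{b\sigma}\ge F^2/(K_p\tau')$. Hence
\[
C\lambda^{b\sigma}(n/2)^\alpha K_p(\tau')^\beta \ge F^2/\tau' \ge \tau',
\]
using $(n/2)^\alpha\ge1$ because $n\ge2$. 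Since at most $\tau'$ signs exist, both bounds hold, and the same holds for the persistent count.

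\emph{Case $\tau'\ge F$.} Assume the trigger is in the right half; a left-half trigger is symmetric after swapping sign types. Apply \Cref{lem:real-continuation-counts} with $t_c=L$. Its hypothesis is exactly the persistent part of our inductive assumption.
\begin{itemize}
\item For minuses, the total count is at most $C\lambda^{-b}(n/2)^\alpha(K_p\tau^\beta+t_l^\beta+\lambda2^\alpha L^\beta)$. By \eqref{eq:real-terminal-p-minus} this is below $C\lambda^{-b}(n/2)^\alpha K_p(\tau')^\beta$. This bounds both the persistent and the total minuses, since $K_p<K_t$.
\item For persistent pluses, the bound combines with \eqref{eq:real-terminal-p-plus} in the same way.
\item For total pluses, the bound combines with \eqref{eq:real-terminal-q}.
\end{itemize}
The count conditions required in \eqref{eq:real-terminal-p-minus}--\eqref{eq:real-terminal-q} hold because $(\tau,t_h,t_l)$ are the counts at the first moment the trigger test succeeds. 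Thus the trigger condition holds at $t_l$, and the preceding request failed it. A subtlety here is that the failing request may have come in the left half rather than reducing $t_l$. In that case $t_h$ was one smaller before, and the ``either'' clause must still be verified. The earlier argument in \Cref{lem:real-continuation-counts}, that the trigger occurs on the light half, shows the triggering request incremented $t_l$, so the clause is satisfied.

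\emph{Interpretation of the epoch indices.} When the new epoch starts, the counts measured in \Cref{lem:real-continuation-counts} are counts of signs placed by requests to $\AlgA$. These are exactly the inherited signs of the new epoch, since the new $\AlgB$ has placed nothing yet. The inductive hypotheses used inside \Cref{lem:real-continuation-counts} apply because the time argument for $\shiftedA$ uses $L<\tau'\le t$.

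\emph{Main obstacle.} The hard part is the bookkeeping that justifies invoking \Cref{lem:real-continuation-counts} at $t_c=L$ within the time induction, together with verifying the count conditions at the trigger. The short-execution case also needs care: it must track the extra factor $K_p(1/2)^\alpha$, which $C\ge F^2/K_p$ and $n\ge2$ absorb.
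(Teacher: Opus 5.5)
Your proposal is correct and follows the paper's proof essentially step for step. Both argue by induction over epochs. The short case uses the chain argument of \Cref{lem:real-short-executions} with $C\geq F^2/K_p$ and $(n/2)^\alpha\geq1$. The long case applies \Cref{lem:real-continuation-counts} at $t_c=L$ together with \eqref{eq:real-terminal-p-minus}--\eqref{eq:real-terminal-q}. Your explicit check of the trigger count conditions is carried out in the paper within \Cref{lem:real-waiting-counts}. The paper also says outright what you leave implicit: deletions made by the request that opens the new epoch can only lower the inherited counts.
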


\begin{proof}
  At the start of the first epoch, $\AlgA$ has placed no signs,
  so both bounds hold with $\tau = 0$. Suppose the bounds hold at the start of
  an epoch with parameter $\tau$, and that the next epoch begins. Let $t_h$,
  $t_l$ be the completed epoch's counts when $\AlgB.\phase$ changes from 1 to 2,
  and set $L = \lfloor\kappa t_h\rfloor$.
  The epoch answers $t_h + t_l + L$ requests before returning $\perp$ on the
  next request. Thus, the next epoch has parameter
  $u = \tau + t_h + t_l + L$, by \eqref{eq:real-epoch-time},
  and $\AlgA$ has answered exactly $u$ requests before that next epoch begins.
  Any deletions on the request that starts the new epoch can only reduce the
  inherited counts. Each continuation used to bound this completed epoch has
  fewer than $u \leq t$ requests, so the fixed time hypothesis applies.

  Suppose first that $u \leq F$. If $\AlgA$ has
  returned no sign of type $\sigma$, its count is zero.
  Otherwise, the argument in the proof of \Cref{lem:real-short-executions} gives
  $\lambda^{b\sigma} \geq \frac{1}{u}$. Since $n\geq2$ and
  $C \geq \frac{F^2}{K_p}$, we have
  \[
    C\lambda^{b\sigma}\left(\frac{n}{2}\right)^\alpha
    K_pu^\beta
    \geq \frac{CK_p}{u}
    \geq \frac{F^2}{u}
    \geq u.
  \]
  The total number of signs of type $\sigma$ placed by
  $\AlgA$ that remain is at most $u$. This proves both
  bounds for the next epoch, since persistent signs are
  a subset of all remaining signs and $K_p < K_t$.

  Suppose instead that $u > F$.
  First consider the case where the request on which
  $\AlgB.\phase$ changes from 1 to 2 is on the right.
  Applying \Cref{lem:real-continuation-counts} at $t_c = L$ and
  \eqref{eq:real-terminal-p-minus}, \eqref{eq:real-terminal-p-plus},
  and \eqref{eq:real-terminal-q} gives
  \begin{align*}
    \remainingSigns(\AlgA,-1)
    & \leq C \lambda^{-b} \left( \frac{n}{2} \right)^\alpha \left(K_p \tau^\beta
    + t_l^\beta + \lambda2^\alpha L^\beta \right)\\
    & \leq C\lambda^{-b}\left(\frac{n}{2}\right)^\alpha K_pu^\beta,\\
    \persistentSigns(\AlgA,+1)
    & \leq C \lambda^{b} \left( \frac{n}{2} \right)^\alpha \left(K_p \tau^\beta
    + t_h^\beta + \lambda^{-1}2^\alpha L^\beta \right)\\
    & \leq C\lambda^{b}\left(\frac{n}{2}\right)^\alpha K_pu^\beta,\\
    \remainingSigns(\AlgA,+1)
    & \leq C \lambda^{b} \left( \frac{n}{2} \right)^\alpha \left(K_p \tau^\beta
    + t_h^\beta + t_l^\beta
    +\lambda^{-1}2^\alpha L^\beta\right)\\
    & \leq C\lambda^{b}\left(\frac{n}{2}\right)^\alpha K_tu^\beta.
  \end{align*}
  Since $K_p < K_t$ and
  $\persistentSigns(\AlgA,-1)
  \leq \remainingSigns(\AlgA,-1)$,
  the required bounds for both sign types hold.
  If the trigger is on the left half instead,
  exchange the two sign types and the same arguments apply.

\end{proof}

\subsection{Completing the labeling induction}

It remains to combine the bounds above for an instance whose current
epoch need not have ended.

\begin{proof}[Proof of \Cref{lem:real-labeling}]
  Fix an admissible parameter tuple and constants
$F$, $K_p$, $K_t$, $M$, $D$, $P$, $Q$ from \Cref{def:real-admissible}. Take $C \geq F^2/K_p$. 
  We induct on space (the power of two $n$), and strongly on time ($t$) for each
  fixed $n$, uniformly over all biases and histories allowed in the lemma.

  For $t = 0$, the count is zero. For $n = 1$, at most one sign
  remains, and any returned type $\sigma$ satisfies $b\sigma\geq0$,
  so $C\lambda^{b\sigma}t^\beta\geq1$ when $t\geq1$.

  Now suppose $n\geq2$. \Cref{lem:real-short-executions} handles $t \leq F$,
  so assume
  $t > F$. The inductive hypothesis on space applies to both
  $\AlgB.\AlgA[0]$ and $\AlgB.\AlgA[1]$, since they cover fewer
  cells, and the inductive hypothesis on time applies to each $\AlgB.\shiftedA$,
  since it processes fewer
  requests. Thus, \Cref{lem:real-continuation-counts,lem:real-inherited-signs}
  apply.

  Let $\AlgB = \AlgA.\currentB$, with $t_c = 0$ in phase~1. By
  \eqref{eq:real-epoch-time}, we have $t = \tau + t_h + t_l + t_c$.
  There are three cases.

  \paragraph{Case 1: only one half of $\AlgB$ has been visited.}
  Then $\AlgB.\phase = 1$, and only the instance corresponding
  to the visited half has placed signs in this epoch.
  Both persistent and non-persistent signs from earlier epochs
  may remain. Here $t_l = 0$ and $t = \tau + t_h$. We have
  \begin{align*}
    \remainingSigns(\AlgA,\sigma)
    & \leq C \lambda^{b\sigma} \left( \frac{n}{2} \right)^\alpha \left(K_t
    \tau^\beta + t_h^\beta \right)\\
    & \leq C\lambda^{b\sigma}\left(\frac{n}{2}\right)^\alpha D t^\beta.
  \end{align*}
  The first inequality follows from \Cref{lem:real-inherited-signs} and the
  inductive hypothesis on space; the second follows from
  \eqref{eq:real-one-half-condition}.

  \paragraph{Case 2: both halves of $\AlgB$ have been visited,
  and $\AlgB.\phase = 1$.}
  Only persistent signs from earlier epochs remain.
  Since phase~1 has not ended, $\delta t_l < \tau$ or $t_l < \rho t_h$.
  With $t = \tau + t_h + t_l$, we have
  \begin{align*}
    \remainingSigns(\AlgA,\sigma)
    & \leq C \lambda^{b\sigma} \left( \frac{n}{2} \right)^\alpha \left(K_p
    \tau^\beta + t_h^\beta + t_l^\beta \right)\\
    & \leq C\lambda^{b\sigma}\left(\frac{n}{2}\right)^\alpha M t^\beta.
  \end{align*}
  The first inequality follows from \Cref{lem:real-inherited-signs}
  and the inductive hypothesis on space; the second follows from
  \eqref{eq:real-waiting-condition}.

  \paragraph{Case 3: $\AlgB.\phase = 2$.}
  Let $t_h$, $t_l$ be the counts when the phase changes from 1
  to 2, and let $t_c$ be the continuation count, with $t_c = 0$ if $L = 0$.
  Then $t = \tau + t_h + t_l + t_c$. Since $t_c \leq L$,
  \[
    \tau + t_h + t_l + L \geq t > F,
  \]
  so \eqref{eq:real-intermediate-condition} applies.

  Suppose the request on which the phase changes is on the
  right. For the plus signs, we have
  \begin{align*}
    \remainingSigns(\AlgA,+1)
    & \leq C \lambda^{b} \left( \frac{n}{2} \right)^\alpha \left(K_p \tau^\beta
    + t_h^\beta + t_l^\beta + \lambda^{-1}2^\alpha t_c^\beta \right)\\
    & \leq C \lambda^{b} \left( \frac{n}{2} \right)^\alpha \left(M(\tau + t_h +
    t_l)^\beta + \lambda^{-1}2^\alpha t_c^\beta + 1 \right)\\
    & \leq C\lambda^{b}\left(\frac{n}{2}\right)^\alpha\left(Dt^\beta + 1\right).
  \end{align*}
  The first inequality follows from
  \Cref{lem:real-continuation-counts,lem:real-inherited-signs},
  the second from \eqref{eq:real-trigger-condition}, and the third
  from \eqref{eq:real-continuation-condition}.
  For the minus signs, we have
  \begin{align*}
    \remainingSigns(\AlgA,-1)
    & \leq C \lambda^{-b} \left( \frac{n}{2} \right)^\alpha \left(K_p \tau^\beta
    + t_l^\beta + \lambda2^\alpha t_c^\beta \right)\\
    & \leq C\lambda^{-b}\left(\frac{n}{2}\right)^\alpha K_p t^\beta.
  \end{align*}
  The first inequality follows from
  \Cref{lem:real-continuation-counts,lem:real-inherited-signs};
  the second follows from \eqref{eq:real-intermediate-condition}.
  A left-half trigger exchanges the roles of the two sign types.

  To finish the proof, use $D \geq M \geq K_p$ from
  \eqref{eq:real-auxiliary-ranges}. Combining the
  three cases gives
  \[
    \remainingSigns(\AlgA,\sigma)
    \leq C \lambda^{b\sigma} \left( \frac{n}{2} \right)^\alpha \left(Dt^\beta +
    1 \right).
  \]
  Since $t > F$ and $0 < \beta < 1$,
  $Dt^\beta + 1
  = \left(D + t^{-\beta}\right)t^\beta
  \leq (D + F^{-\beta})t^\beta$.
  Therefore,
  \begin{align*}
    \remainingSigns(\AlgA,\sigma)
    & \leq C\lambda^{b\sigma}\left(\frac{n}{2}\right)^\alpha
    \left(Dt^\beta + 1\right)\\
    & \leq C\lambda^{b\sigma}n^\alpha t^\beta
    \frac{D + F^{-\beta}}{2^\alpha}\\
    & \leq C\lambda^{b\sigma}n^\alpha t^\beta,
  \end{align*}
  where the last inequality uses \eqref{eq:real-induction-slack}.
  This completes the inner induction on time and hence the outer induction on
  space.
\end{proof}

We now deduce \Cref{thm:real-root-bound} from \Cref{lem:real-labeling} by
summing the bounds for each sign type at the root and padding the board to
handle arbitrary sizes.

\begin{proof}[Proof of \Cref{thm:real-root-bound}]
  For a power-of-two board size, summing the bounds for each sign type in
  \Cref{lem:real-labeling} at the root, whose bias is $b = 0$, gives
  \[
    \opt(n,t) \leq C(\lambda^b + \lambda^{-b})n^\alpha t^\beta
    = 2Cn^\alpha t^\beta.
  \]
  For arbitrary $n$, pad
  the board to a power of two $N \in [n,2n)$ and request only the original $n$
  cells. Thus, we may take $K = 2^{1 + \alpha}C$.
\end{proof}

  \begin{samepage}
  \section{From sign-preservation bounds to calibration}
  \label{sec:optimizing-calibration}

  In this section, we prove the following theorem, converting separate
  space and time bounds for SPR into a calibration guarantee.

  \begin{theorem}
    \label{thm:intro-reduction}
    Let $\alpha \in (0,1)$, $\beta \in (0,1)$ and
    $\varepsilon := 1 - \alpha - \beta > 0$. Suppose
    $\opt(n,t) = O(n^\alpha t^\beta)$ for all integers $n\geq1$
    and $t\geq0$. Then, for every horizon $T\geq1$, there exists
    a randomized forecaster whose expected calibration error against
    any adversary satisfies
    \begin{equation*}
      \mathbb E[\calerr(T)]
      = O\!\left(
      T^{\frac{2 + \varepsilon}{3 + 2\varepsilon}}\right)
      = O\!\left(
      T^{\frac23 - \frac{\varepsilon}{9 + 6\varepsilon}}\right).
    \end{equation*}
  \end{theorem}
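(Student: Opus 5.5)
We follow the outline of \Cref{sec:proof-overview} and first prove the bound for a forecaster that observes the conditional mean $e_t=\Pr[y_t=1\mid H_t]$ before predicting. The minimax theorem, applied as in Hart \cite{hart2023minimax} and Dagan et al.\ \cite{dagan2024}, then removes this access. Since $\opt$ is monotone in $T$, we may assume $T=2^\tau$. We fix a starting level $i_0$ and a depth $h$, chosen at the end. For each level $i\in\{i_0,\dots,i_0+h-1\}$ we use one threshold index $j$, except at the coarsest level $i_0$, which uses $O(\tau)$ threshold indices. For each (level, threshold) pair we maintain an even and an odd SPR instance, each run with the labeler from \Cref{thm:real-root-bound}. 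Cells of level $i$ are dyadic intervals of length $2^{-(i+1)}$, and a plus or minus selects the left or right endpoint. Each step of the forecaster is either \emph{bias removal} (whenever some active bias of magnitude $>1$ can be reduced), or \emph{bias placement} in the first eligible game, or a fallback that rounds $e_t$ to a grid of spacing $2^{-(i_0+h)}$. In total this uses $O(\tau)=O(\log T)$ instances.

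The first key step is to bound the length $t_G$ of the transcript fed to each instance $G$, so that $\opt(n_G,t_G)=O(n_G^\alpha t_G^\beta)$ applies. Here $n_G=O(2^{i})$. We pass to a reduced transcript, discarding moves whose signs are deleted by the next remaining move. Because removal takes priority, every sign a placement deletes has active bias at most $1$, so the discarded moves cost only $O(1)$ each. Between two reduced queries to the same cell, some coarser or other game must have rebuilt bias up to its threshold $2^{j-i}$. By \eqref{eq:overview-placement-increment}, each placement adds at most $2^{-(i+1)}$, so a rebuild takes at least about $2^{j}$ steps, which gives $t_G=O(T2^{-j})$ up to lower-order terms. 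By \eqref{eq:overview-stored-bias}, game $G$ then contributes at most $2^{j-i}\cdot O(2^{i\alpha}(T2^{-j})^\beta)+2N_G$ to the bias term. We sum this over the thresholds at level $i_0$ and over the finer levels, choosing the threshold at level $i$ so that the sum is geometric and dominated by $O(2^{\tau-(1+\varepsilon)i_0})$. This is where $\varepsilon=1-\alpha-\beta$ enters: the exponent of $2^{j}$ is $1-\beta$ and that of $2^{i}$ is $\alpha-1$, so the cancellation leaves $2^{-\varepsilon i_0}T2^{-i_0}$ after optimizing.

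Next we bound the remaining terms. A cell at a finer level is used only after its parent reaches its threshold. This bounds $\sum_G N_G$ by $O(h2^{i_0})$ and bounds the number of distinct predictions by $L=O(2^{i_0}+2^{\tau-i_0-h})$. The rounding fallback contributes $O(T2^{-(i_0+h)})$ to the bias term. For the variance term we write $y_t-p_t=(e_t-p_t)+(y_t-e_t)$. For each prediction $p$, the sum of $y_t-e_t$ over steps with $p_t=p$ is a martingale, so $\mathbb E|\cdot|\le\sqrt{\mathbb E\,n_T(p)}$. Cauchy--Schwarz over the $L$ predictions then gives $O(\sqrt{LT})$. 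Setting $h=\tau-2i_0$ yields \eqref{eq:overview-calibration-balance}. Choosing $i_0=\lfloor\tau/(3+2\varepsilon)\rfloor$ balances $2^{\tau-(1+\varepsilon)i_0}$ against $2^{(\tau+i_0)/2}$ at $T^{(2+\varepsilon)/(3+2\varepsilon)}$. The middle term $h2^{i_0}=O(\tau T^{1/3})$ is smaller. The final step is the minimax argument, which gives the same bound for a forecaster that does not know $e_t$.

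We expect the transcript-length bound to be the main obstacle. We must verify that between repeated reduced queries to a cell, bias really has to be rebuilt elsewhere. We must also handle the smallest coarsest-level threshold separately, since there no rebuilding is forced and we would simply use $t_G\le T$ with its small threshold. Finally, the even/odd separation must ensure that removal steps never silently consume placement budget. We would first try to prove the per-cell rebuild claim by tracking the parent cell's active bias across the two queries.
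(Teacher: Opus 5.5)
Your construction and bookkeeping match the paper's proof: even/odd SPR instances over $O(\log T)$ level--threshold pairs, removal before placement, reduced transcripts with budgets $O(T2^{-j})$, the per-game bound $2^{j-i}S_G+2N_G$, $h=\tau-2i_0$, $i_0=\lfloor\tau/(3+2\varepsilon)\rfloor$, then minimax. Your exponent arithmetic is also right.

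\textbf{The variance step fails as written.} Write $M_p$ for the sum of $y_t-e_t$ over rounds with $p_t=p$. You bound $\mathbb E|M_p|\le\sqrt{\mathbb E\,n_T(p)}$ for each $p$ and then apply Cauchy--Schwarz ``over the $L$ predictions''. But $L$ only bounds the number of distinct predictions on a single run. Which fine endpoints and rounding points get used depends on the realized outcomes through $e_t$. So once you take expectations, the sum runs over every $p$ used with positive probability, which can be a large part of $\mathcal P$ (about $2^{\tau-i_0}$ points). Suppose $N$ equally likely runs each spend most of their $T$ rounds on their own prediction value; an adversary can aim for this by using early coin flips to decide where to hold $e_t$. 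Then $\sum_p\sqrt{\mathbb E\,n_T(p)}\approx N\sqrt{T/N}=\sqrt{NT}$, far above $\sqrt{LT}$. The fix, which is what the paper does, is to apply Cauchy--Schwarz pathwise before taking expectations. On every run at most $L$ of the $M_p$ are nonzero, so $\sum_p|M_p|\le\sqrt{L\sum_pM_p^2}$. Jensen and orthogonality of the martingale increments then give $\mathbb E\sum_p|M_p|\le\sqrt{L\,\mathbb E\sum_pM_p^2}\le\sqrt{LT}$.

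\textbf{The smallest coarsest-level threshold $j=i_0+1$ also needs a different bound.} If you plug $t_G\le T$ into the hypothesis, you get a contribution of order $2^{\alpha i_0}T^\beta$, i.e.\ exponent $\alpha/(3+2\varepsilon)+\beta$. This exceeds $(2+\varepsilon)/(3+2\varepsilon)$ as soon as $\beta>(1+2\varepsilon)/(2+2\varepsilon)\approx1/2$, and the paper's explicit $\beta$ is about $0.957$. Bound that game trivially instead: its threshold is $2$ and $S_G\le N_G\le 2^{i_0}$, so it contributes $O(2^{i_0})$.

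Three smaller points:
\begin{enumerate}
\item Parity is not there to stop removals from consuming budget. In the transcript-length argument, consider the two moves flanking a re-queried cell $c$: the next surviving move (on the preserving side) and the move that deletes $c$'s sign. Their conditional means must lie on opposite sides of the witness interval, which is the same interval in $G_{i_0,j-1,\ell}$, or $c$'s parent interval at level $i-1$. Removal priority then forces the witness's active bias down to magnitude at most one, unless the witness was emptied. At finer levels this requires $c$'s sibling to lie in the other parity game.
\item The reduction to $T=2^\tau$ does not come from monotonicity of $\opt$. It comes from running the $2^\tau$-budget construction for $T$ rounds: the bias and variance estimates use $2^\tau$ only as an upper bound on elapsed rounds.
\item The theorem assumes only the bound on $\opt$. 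So each instance should run a deterministic optimal labeler for space $2^i$ and budget $t_G$, wrapped in the reduced transcript; one exists because bounded-round SPR is a finite perfect-information game. Using the labeler of \Cref{thm:real-root-bound} would prove the statement only for admissible exponents.
\end{enumerate}
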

\end{samepage}

First, we establish the scale-sum
estimate in \Cref{lem:reduction-scale-sum}. Then,
we prove \Cref{thm:intro-reduction} by applying the SPR bound from
\Cref{sec:signflipping-ub}.

\subsection{Description of the strategy}

\paragraph{Conditional means.}
Our strategy uses the conditional mean approach developed by Hart
\cite{hart2023minimax} and Dagan et al.\ \cite{dagan2024}.
Fix any adversary and let
$e_t := \mathbb E[y_t\mid H_t]$, where $H_t$ is the history before
round $t$. For the construction, the forecaster observes $e_t$ before
predicting. We use deterministic labeling strategies and fixed tie-breaking
rules, so the
forecaster's prediction is a function of $H_t$. Specifically, all loops run in
increasing order, ties between grid points are resolved by choosing the
smallest, and ties between cells are resolved by choosing the leftmost.
After predicting, it observes $y_t$ and proceeds to the next round.

\paragraph{Notation.}
The forecasting strategy maintains several instances of the SPR game.
Each cell represents an interval of possible conditional means, and
the sign in that cell specifies an endpoint to predict: a plus selects
the left endpoint, and a minus selects the right endpoint. The
instances use intervals of different lengths and different thresholds
for the bias accumulated at their endpoints.

For a calibration horizon $T = 2^\tau$, choose integers $i_0$, $h$ with
$1 \leq i_0 < i_0 + h \leq \tau$. The parameter $i_0$ specifies the
coarsest partition, and $h$ is the number of partition levels. We
index these levels by
\begin{equation*}
  \mathcal I := \{i_0,\dots,i_0 + h - 1\}.
\end{equation*}
At each level $i$, a second index $j$ specifies the \emph{bias threshold}
$2^{j - i}$. The coarsest level $i = i_0$ uses $h$ thresholds, while each finer
level $i > i_0$ uses just one. Specifically, define
\begin{equation*}
  \mathcal J_i :=
  \begin{cases}
    \{i_0 + 1,\dots,i_0 + h\},&i = i_0,\\
    \{i_0 + h\},&i > i_0,
  \end{cases}
\end{equation*}
to represent the set of possible $j$ for each $i$.
At level $i$, partition $[0,1]$ into $2^{i + 1}$ equal intervals,
numbered from zero. For each $j \in \mathcal J_i$,
group the intervals by the parity of their indices and assign one instance to
each parity. The even and odd SPR instances contain
only cells representing the even-numbered and odd-numbered intervals,
respectively.
Thus any two intervals represented in the same game are separated by an interval
of the opposite parity. We write
$G_{i,j,\ell}$ for the instance with indices $i$,
$j$ and parity $\ell \in \{0,1\}$. The total number of SPR instances is
$2h + 2(h - 1) = 4h - 2 = O(\log T)$, compared with the $O((\log T)^2)$
instances used in the construction of Dagan et al.\ \cite{dagan2024}.

Note that an instance $G_{i, j, \ell}$ has $2^i$ cells.
Then for a cell $c \in [2^i]$, write
$\interval(c,G)$ for its interval and $\probub(c,\sigma,G)$ for
the prediction corresponding to the sign $\sigma \in \{-1,1\}$.
Setting $m = 2(c - 1) + \ell$, these are
\begin{equation*}
  \begin{split}
    \interval(c,G)& :=
    [m2^{-(i + 1)},(m + 1)2^{-(i + 1)}),\\
    \probub(c,\sigma,G)& :=
    \begin{cases}
      m2^{-(i + 1)},&\sigma = +1,\\
      (m + 1)2^{-(i + 1)},&\sigma = -1.
    \end{cases}
  \end{split}
\end{equation*}
The final interval also includes $1$. Note that all predictions lie in the fixed
grid
\begin{equation*}
  \mathcal P := \{k2^{-(i_0 + h + 1)}:k = 0,\dots,2^{i_0 + h + 1}\}.
\end{equation*}
Given a level $i$, a threshold index
$j$, and a conditional mean $e \in [0,1]$, the routine
$\cellub(i,j,e)$ locates the cell to which that mean belongs.
It returns the unique pair $(c,G)$ whose interval contains $e$. Specifically,
$m = \min\{\lfloor2^{i + 1}e\rfloor,2^{i + 1} - 1\}$,
$\ell = m\bmod2$, and $c = (m - \ell)/2 + 1$.

For each cell, the forecaster keeps a separate record of the bias
accumulated at each of its two endpoints. We denote these records by
$\biasub(c,\sigma,G)$ for $\sigma \in \{-1,1\}$, which both start at zero.
When a round is assigned to an endpoint, its bias is updated by adding
$e_t - p_t$. The value corresponding to the cell's
current sign is its \emph{active bias}, denoted $\biasub(c,G)$,
and the other value is \emph{inactive}. When the cell is empty, set
$\biasub(c,G) := 0$; both stored values are then inactive.

For every instance $G$, initialize the board of the game to be empty.
Fix a deterministic optimal base labeler for each instance
$G_{i,j,\ell}$ with space $2^i$ and time $r_{i,j,\ell}$, as defined in
\Cref{lem:reduction-round-count}.
The routine
$\simulateGame(c,G)$ supplies a sign for cell $c$ by simulating the
SPR game for instance $G$. When a new move is needed, it treats $c$
as the pointer's chosen cell and applies $G$'s labeling strategy.
The forecasting loop treats that strategy as a black box. In the
analysis below, we choose it by wrapping the base labeler with a
reduced transcript.

\begin{algorithm}[tbp]
  \raggedright

  \caption{Calibration strategy using sign preservation}
  \label{alg:reduction-forecast}
  \begin{algorithmic}[1]
    \Require Parameters $T = 2^\tau$, $i_0 \in \mathbb N$,
    $h \in \mathbb N$ with
    $1 \leq i_0 < i_0 + h \leq \tau$, and the deterministic
    \PlayerL strategies specified below.
    \State $\mathcal I \gets \{i_0,\dots,i_0 + h - 1\}$;
    $\mathcal J_{i_0} \gets \{i_0 + 1,\dots,i_0 + h\}$ and
    $\mathcal J_i \gets \{i_0 + h\}$ for $i > i_0$.
    \State Set $\mathcal P \gets
    \{k2^{-(i_0 + h + 1)}:k = 0,\dots,2^{i_0 + h + 1}\}$.
    \State Initialize each board $B_G$ to be empty.
    \State Initialize $\biasub(c, -1, G) = \biasub(c, 1,
    G) = 0$ for all instances $G$ and all cells $c$.
    \For{$t \in [T]$}
      \State Observe $e_t$.
      \For{$i \in \mathcal I$} \Comment{bias removal loop}
        \For{$j \in \mathcal J_i$}
          \For{$\ell \in \{0,1\}$}
            \State $G \gets G_{i,j,\ell}$.
            \If{$\exists c \in [2^i]:
              \bigl(\biasub(c,G) > 1\text{ and }
              e_t \leq \probub(c,+1,G)\bigr)
              \text{ or }
              \bigl(\biasub(c,G) < -1\text{ and }
              e_t \geq \probub(c,-1,G)\bigr)$}
              \label{line:reduction-removal}
              \State Choose the least such $c$.
              \State $\sigma \gets \operatorname{sign}(\biasub(c,G))$.
              \State Predict $p_t \gets \probub(c,\sigma,G)$. \Comment{predict to perform bias removal}
              \State $\biasub(c,\sigma,G) \gets
              \biasub(c,\sigma,G) + e_t - p_t$.
              \State Observe $y_t$ and end round $t$.
            \EndIf
          \EndFor
        \EndFor
      \EndFor
      \For{$i \in \mathcal I$} \Comment{bias placement loop}
        \For{$j \in \mathcal J_i$}
          \State $(c,G) \gets \cellub(i,j,e_t)$.
          \If{$|\biasub(c,G)| < 2^{j - i}$}
            \label{line:reduction-placement}
            \If{$c$ is empty in $B_G$}
              \State $\simulateGame(c,G)$. \Comment{place/delete signs in SPR}
            \EndIf
            \State $\sigma \gets$ sign in cell $c$ of $B_G$.
            \State Predict $p_t \gets \probub(c,\sigma,G)$. \Comment{predict to perform bias placement}
            \State $\biasub(c,\sigma,G) \gets
            \biasub(c,\sigma,G) + e_t - p_t$.
            \State Observe $y_t$ and end round $t$.
          \EndIf
        \EndFor
      \EndFor
      \State Predict a nearest point $p_t \in \mathcal P$ to $e_t$.
      \Comment{rounding fallback}
      \label{line:reduction-rounding}
      \State Observe $y_t$ and end round $t$.
    \EndFor
  \end{algorithmic}
\end{algorithm}

\paragraph{Summary of the forecaster (\Cref{alg:reduction-forecast}).}
Initially, all SPR boards are empty and all stored endpoint biases are
zero. At round $t$, after observing the conditional mean $e_t$, the
forecaster first checks every interval in the \emph{bias removal loop} to see if
bias can be removed. Specifically, it checks whether an endpoint prediction can
reduce an
active bias of magnitude greater than one. Its test at $(c,G)$ is
\begin{equation*}
  \begin{split}
    &\biasub(c,G) > 1
    \quad\text{and}\quad e_t \leq \probub(c,+1,G),\qquad\text{or}\\
    &\biasub(c,G) < -1
    \quad\text{and}\quad e_t \geq \probub(c,-1,G).
  \end{split}
\end{equation*}
If the test succeeds, the forecaster predicts the corresponding endpoint and
is able to remove some of the active bias from that cell.

If the test fails, the algorithm finds a cell for placement in the \emph{bias
placement loop} by looping through partition levels
$i \in \mathcal I$, starting from the coarsest level $i = i_0$ and ending at the
finest level $i = i_0 + h - 1$. For each $i$,
it loops through all thresholds in $\mathcal J_i$. For each $(i,j)$,
it locates the
cell $c$ whose interval contains $e_t$ and selects the first one with active
bias magnitude below $2^{j - i}$. If $c$ is empty,
we also call the routine $\simulateGame(c,G)$ and record the sign it returns.
If $\simulateGame(c, G)$ returns a plus or minus sign,
we predict the left or right endpoint of the interval of cell $c$, respectively,
and update the bias of the cell. We call a prediction and bias update at a cell
selected by this loop a \emph{placement} at that cell,
whether or not it was empty. If no such cell is selected by the bias removal or
bias placement loop, we fall back to simply predicting the nearest point of
$\mathcal P$
(Line~\ref{line:reduction-rounding}). Every prediction
is followed by observing $y_t$ and ending the round.

\subsection{The scale-sum lemma}

The following lemma bounds the calibration error of the conditional-mean
construction in terms of two sums of SPR guarantees. The sums correspond
to the coarsest level and the finer levels, respectively. We will apply
the bounds on SPR to these sums to prove
\Cref{thm:intro-reduction}.

\begin{lemma}
  \label{lem:reduction-scale-sum}
  Define
  \begin{align}
    S_0& :=
    \sum_{j = i_0 + 2}^{i_0 + h}2^{j - i_0}
    \opt(2^{i_0},2^{\tau - j + 1}),\notag\\
    S_1& :=
    \sum_{i = i_0 + 1}^{i_0 + h - 1}2^{i_0 + h - i}
    \opt(2^i,2^{\tau - i_0 - h}).\notag
  \end{align}
  By the end of round $T = 2^\tau$, the forecaster in
  \Cref{alg:reduction-forecast} satisfies
  \begin{equation}
    \mathbb E[\calerr(T)]
    = O\!\left(h2^{i_0} + 2^{\tau - i_0 - h} + S_0 + S_1
    +\sqrt{2^\tau(2^{i_0} + 2^{\tau - i_0 - h})}\right).
    \label{eq:calibration-scale-sum}
  \end{equation}
\end{lemma}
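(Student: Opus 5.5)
We split each per-round error $y_t-p_t$ as $(e_t-p_t)+(y_t-e_t)$, as in \eqref{eq:overview-calibration-split}, and apply the triangle inequality at each prediction value. This bounds $\calerr(T)$ by a \emph{bias term} $\sum_{p}|\sum_{t:p_t=p}(e_t-p_t)|$ plus a \emph{variance term} $\sum_p|\sum_{t:p_t=p}(y_t-e_t)|$.

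\textbf{Variance term.} The increments $y_t-e_t$ form a martingale difference sequence with respect to $H_t$. The indicator $\mathbf 1\{p_t=p\}$ is $H_t$-measurable, because $p_t$ is a deterministic function of $H_t$ and $e_t$, and $e_t$ is itself a function of $H_t$. Hence $\mathbb E\bigl(\sum_{t:p_t=p}(y_t-e_t)\bigr)^2\le\mathbb E[n_T(p)]$. Cauchy--Schwarz over the set of predictions actually used then bounds the expected variance term by $\sqrt{L\cdot T}$, where $L$ bounds the number of distinct predictions. So we need $L=O(2^{i_0}+2^{\tau-i_0-h})$, which gives the last term of \eqref{eq:calibration-scale-sum}.

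\textbf{Counting predictions.} The coarsest level contributes $O(2^{i_0})$ endpoints. A cell at a finer level $i>i_0$ can only be placed into when every coarser cell containing $e_t$ is ineligible. In particular, the parent cell at level $i-1$ with threshold index $i_0+h$ (or $i_0$'s largest threshold) must have active bias at least $2^{i_0+h-i+1}$. By \eqref{eq:overview-placement-increment}, each placement adds at most $2^{-i}$ to that bias, so this requires at least $2^{i_0+h}$ placements there. Removal steps only decrease magnitudes, and the total time is $2^\tau$. Summing geometrically over levels therefore bounds the number of fine cells ever used by $O(2^{\tau-i_0-h})$. The rounding fallback fires only on points of $\mathcal P$ adjacent to cells whose finest-level bias is saturated, and by the same counting it contributes $O(2^{\tau-i_0-h})$ distinct points. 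The same bound handles its bias contribution: each rounding step has $|e_t-p_t|\le 2^{-(i_0+h+1)}$, and these steps occur only at saturated finest cells. This is a point I would make careful via the parent-saturation argument, and it gives the $2^{\tau-i_0-h}$ term.

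\textbf{Bias term.} Predictions sharing a value $p$ come from endpoint records of various cells (a grid point can be an endpoint in several games) and from rounding. By the triangle inequality, the bias term is at most the sum over all games of the stored $|\biasub|$ plus the rounding contribution. Bias removal runs only when the magnitude exceeds $1$, and updates have size at most $1$, so removal never flips a sign. Placement only happens when the magnitude is below the threshold. Hence an active record has magnitude at most $2^{j-i}+1$.

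For inactive records, we use the fact that a placement only occurs when no removal is possible. A sign legally deleted by the SPR move has its endpoint on the deletable side of $e_t$. The parity separation ensures that the cell lies strictly to one side of $e_t$, so the removal test would have fired unless its bias had magnitude at most $1$. Checking this orientation (minus to the left / plus to the right) against the removal test is a necessary verification. Thus each game stores at most $2^{j-i}S_G+2N_G$, as in \eqref{eq:overview-stored-bias}. The $N_G$ terms sum to $O(h2^{i_0}+2^{\tau-i_0-h})$ by the cell count, where the coarsest level contributes $h$ games of $2^{i_0}$ cells each.

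\textbf{Main obstacle: bounding $S_G$ by $\opt$ at the right time scale.} The remaining work is to show $S_G\le\opt(2^i,r_{i,j,\ell})$ with $r_{i_0,j,\ell}=O(2^{\tau-j+1})$ and $r_{i,i_0+h,\ell}=O(2^{\tau-i_0-h})$; substituting gives $S_0+S_1$. I would wrap the base labeler with the reduced transcript. Whenever a new query's deletions erase the sign placed by the immediately preceding remaining move, pop that move; pops only delete signs, so the surviving signs are unaffected and the number of survivors is governed by the game run on the reduced sequence. I would then argue that between two consecutive retained queries to the same cell, some coarser or lower-threshold record must have been rebuilt from magnitude at most $1$ up to its threshold. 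That rebuild takes $\gtrsim 2^{j-1-i}\cdot 2^{i+1}=2^{j}$ placements for the coarsest level, and $2^{i_0+h}$ for finer levels. A charging argument distributing these rounds over the $2^\tau$ total then gives $r=O(2^{\tau-j+1})$, respectively $O(2^{\tau-i_0-h})$. The smallest coarsest threshold $j=i_0+1$ is excluded, and its $O(2^{i_0})$ contribution is absorbed. Making this charging rigorous, with no round charged twice, is where I expect the real difficulty. I would first try charging each retained query to the placements that filled the blocking parent record since the previous query.
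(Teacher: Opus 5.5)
Your decomposition follows the paper's: the bias/variance split, the per-game bound $2^{j-i}S_G+2N_G$, reduced transcripts, parent saturation, and Cauchy--Schwarz over $L$ predictions. The gap is that the step carrying the lemma, $S_G\le\opt(2^i,r_{i,j,\ell})$, is only asserted, and you have placed its difficulty in the wrong spot. What needs proof is this: the blocking record falls to magnitude at most $1$ between two consecutive retained queries of $c$. The blocking record is cell $c$ in $G_{i_0,j-1,\ell}$, or the parent of $c$ at level $i-1$ with threshold index $i_0+h$.

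The paper's mechanism runs as follows.
\begin{enumerate}
\item The retained move immediately after the first query cannot legally delete $c$'s sign, or it would have popped that query. So it lies on the preserving side of $c$.
\item The move that actually empties $c$ on the real board lies on the deleting side.
\item Hence between the two queries there are placement rounds in the same game with $e_t$ to the left and to the right of $c$'s interval. On each of them the removal test did not fire for the blocking record.
\item Apply the sign invariant (plus record $\ge0$, minus record $\le0$). A plus-active blocking record lies in $[0,1]$ at the left move, and a minus-active one lies in $[-1,0]$ at the right move. If the blocking cell's sign changed, it was empty in between, with active bias $0$.
\end{enumerate}
At finer levels this is exactly where the parity split is needed. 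The parent interval contains only one cell of $G_{i,j,\ell}$, so both moves have $e_t$ outside the parent interval. Without parity, the sibling could supply the left move with $e_t$ inside the parent interval, and removal would not be forced.

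Once this is established, the charging you expect to be hard is immediate. Each round updates one record, the rebuild windows of one cell are disjoint in time, and distinct cells have distinct blocking records.

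There are also two smaller corrections. First, popping does not ``only delete signs.'' Restoring the labeler state resurrects signs the popped move had deleted, so the restored board is a superset of the real board, with agreeing signs. It is this containment that gives real count $\le$ restored count $\le\opt$. You also need a rule for a query to a cell that is empty on the real board but occupied on the restored board; the paper reuses that sign without recording a move. Otherwise the replayed play is illegal.

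Second, the used-cell count is not a geometric sum over levels. Each saturated parent costs $2^{i_0+h+1}$ placement rounds whatever its level. The $O(2^{\tau-i_0-h})$ bound therefore comes from disjointness of these rounds across all parents at all levels simultaneously; summing a per-level bound would lose a factor $h$. (Conversely, parity is not needed for the inactive-bias bound, where the non-strict removal tests already suffice.)
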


\paragraph{Proof overview of \Cref{lem:reduction-scale-sum}.}
We bound calibration error by conditional-mean bias and the
fluctuations $y_t-e_t$, then bound each contribution.
\begin{itemize}
  \item We first bound the conditional-mean bias in terms of the signs
  remaining in each game. An active endpoint has bias magnitude at most
  $2^{j-i}+1$, while an inactive endpoint has magnitude at most one by \Cref{lem:reduction-bias-invariants}. Thus a game with $S_G$ signs
  and $N_G$ used cells contributes at most $2^{j-i}S_G+2N_G$,
  and rounding adds $O(2^{\tau-i_0-h})$.
  \item To bound $S_G$, we undo wasted moves using the reduced transcript
  in \Cref{def:reduced-transcript}. Except at the smallest coarsest-level threshold, repeated recorded queries to a cell require
  bias to be rebuilt elsewhere. The resulting transcript bounds give
  $S_G\leq\opt(2^i,r_{i,j,\ell})$ with the round budgets from
  \Cref{lem:reduction-round-count}. Multiplying by the thresholds and
  summing over both parities bounds the bias from signs by $2S_0+2S_1+O(2^{i_0})$, where the last term accounts for the smallest coarsest-level threshold.
  \item We next account for the $N_G$ terms and the number of distinct
  predictions. Before a fine cell can be used, its parent must
  receive enough placements to reach its threshold. This shows that at most 
  $O(2^{\tau-i_0-h})$ fine cells are used, and there are at most
  $L=O(2^{i_0}+2^{\tau-i_0-h})$
  distinct predictions, including rounding, by \Cref{lem:reduction-used-intervals}.
  The coarsest-level cells contribute $O(h2^{i_0})$.
  \item Finally, the increments $y_t-e_t$ have conditional mean zero, so the
  prediction-wise sums have total second moment at most $T$. Cauchy--Schwarz and the bound
  on $L$ give an expected contribution of at most $\sqrt{LT}$.
  Adding this to the bias bound proves \eqref{eq:calibration-scale-sum}.
\end{itemize}
To deduce \Cref{thm:intro-reduction}, we apply the SPR power bound to
$S_0+S_1$ in \Cref{lem:reduction-scale-sum}, set $h=\tau-2i_0$,
and choose $i_0=\lfloor\tau/(3+2\varepsilon)\rfloor$ to balance the
leading bias and variance terms up to constant factors. We extend to arbitrary horizons
and apply minimax to remove access to the conditional means.

\subsection{Lemmas for the proof of \texorpdfstring{\Cref{lem:reduction-scale-sum}}{Lemma \getrefnumber{lem:reduction-scale-sum}}}

We first record the bias bounds and accounting identity,
adapting Lemma~A.1 of Dagan et al.\ \cite{dagan2024} to our endpoint predictions
and simulation
with restored signs.

\begin{samepage}
  \begin{lemma}
    \label{lem:reduction-bias-invariants}
    Let $T = 2^\tau$. Throughout \Cref{alg:reduction-forecast},
    the following hold.
    \begin{enumerate}
      \item For every cell $c$ in $G$,
      $\biasub(c,+1,G)\geq0$ and $\biasub(c,-1,G)\leq0$.
      \item For every cell $c$ in $G_{i,j,\ell}$ and sign type $\sigma \in \{-1,
      1 \}$, $|\biasub(c, \sigma, G)| \leq 2^{j - i} + 1$ if that value is
      active, and $|\biasub(c, \sigma, G)| \leq 1$ if that value is inactive.
      The magnitude of either positive or negative bias
      for a cell can only increase by at most $2^{-(i + 1)}$ per round.
      \item At the end of round $T$,
      \begin{equation}
        \sum_{p \in \mathcal P}\left|
        \sum_{t = 1}^T\mathbf1\{p_t = p\}(e_t - p)\right|
        \leq \sum_{G,c,\sigma}|\biasub_T(c,\sigma,G)|+O(2^{\tau - i_0 - h}).
        \label{eq:reduction-bias-accounting}
      \end{equation}
    \end{enumerate}
  \end{lemma}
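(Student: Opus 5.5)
We prove the three parts in order. Parts 1 and 2 follow by a single induction over rounds on the state of all stored records; part 3 is then an accounting identity.

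For part 1, every record starts at zero. The record $\biasub(c,+1,G)$ is only updated when the prediction is the left endpoint $\probub(c,+1,G)$. This happens in two situations. The first is a placement with $e_t\in\interval(c,G)$. Then $e_t-p_t\in[0,2^{-(i+1)}]$, so the record does not decrease. The second is a removal. A removal with $\sigma=+1$ is only performed when the active bias exceeds $1$ and $e_t\le p_t$. The update then has magnitude at most $1$, because $e_t,p_t\in[0,1]$. So the record drops by at most $1$ from a value above $1$ and stays positive. The case of the right endpoint is symmetric. This proves part 1, and it also proves the per-round increase bound in part 2: magnitudes grow only on placements, and then by at most $2^{-(i+1)}$.

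For the active bound in part 2, a placement happens only when $|\biasub(c,G)|<2^{j-i}$. One increment of at most $2^{-(i+1)}\le 1$ then gives magnitude below $2^{j-i}+1$. Removals only shrink magnitudes, by the sign preservation just shown.

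The inactive bound is the delicate step. A record becomes inactive when its sign is deleted by an SPR move, or replaced after a cell is emptied and refilled. We must show the deleted sign had active bias of magnitude at most $1$ at that moment. I would use the observation stated in the overview. The call to $\simulateGame$ occurs only when the removal loop has found no eligible cell. Legal deletions in $G$ are minuses in cells to the left of the requested cell $c$, and pluses to its right. Because same-game intervals are separated by an interval of the opposite parity, a cell $c'$ to the left of $c$ has right endpoint $\probub(c',-1,G)<e_t$. So if the minus in $c'$ had active bias $<-1$, the removal test would have fired at $c'$. Hence its magnitude is at most $1$. Symmetrically, a deleted plus to the right satisfies $e_t<\probub(c',+1,G)$ and so has bias at most $1$.

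One must also check that the simulation's restored signs, from the reduced transcript, only reinstate a sign whose record was never modified while inactive. Inactive records are never updated, since updates touch only the active endpoint. So the bound $|\cdot|\le1$ is maintained. I expect this interaction with the reduced-transcript simulation to be the main point needing care. Restoring a sign reactivates a record of magnitude at most $1$, which is consistent with the active bound.

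For part 3, partition the rounds by which branch produced $p_t$. Every removal or placement round adds exactly $e_t-p_t$ to one record $\biasub(c,\sigma,G)$ whose prediction is $\probub(c,\sigma,G)=p_t$. So for each $p$, the sum over those rounds of $(e_t-p)$ equals the sum of the final records of all pairs $(c,\sigma,G)$ with that prediction value. The triangle inequality then bounds this part by $\sum_{G,c,\sigma}|\biasub_T(c,\sigma,G)|$.

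The remaining rounds are rounding-fallback rounds. For these, $|e_t-p_t|\le 2^{-(i_0+h+2)}$, and there are at most $T=2^\tau$ of them. Their total contribution over all $p$ is at most $2^{\tau-i_0-h-2}=O(2^{\tau-i_0-h})$. Summing both parts gives \eqref{eq:reduction-bias-accounting}.
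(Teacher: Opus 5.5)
Your proposal is correct and follows essentially the same argument as the paper: an induction over rounds in which removals move an active value of magnitude above one toward zero without changing its sign, placements add at most $2^{-(i+1)}$ in the labeler's direction, every deletable sign has magnitude at most one because the removal test would otherwise have fired, and part~3 groups the never-reset records by their fixed prediction value and adds half the grid spacing for each rounding round. The step you flag as delicate is simpler than you expect. Rewinding in the reduced transcript changes only the restored board, so the real board only ever gains a sign at the requested, previously empty cell, whose two records are inactive and hence already of magnitude at most one.
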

\end{samepage}

\begin{proof}
  We use the induction underlying \cite[Lemma~A.1]{dagan2024}, keeping
  track of the two endpoint biases separately. Initially both are zero.
  A removal update moves an active value of magnitude greater than one
  toward zero by at most one, so it preserves its sign and cannot increase
  its magnitude.

  Before a call to $\simulateGame(c,G)$, every sign that may be deleted by $c$
  has bias of magnitude at most one, since a larger bias
  would have resulted in bias removal for that cell instead.
  Every sign deleted by this call therefore leaves
  behind a stored value of magnitude at most one.
  Since $c$ was empty before the call, both its biases have magnitude at most
  one. The returned sign therefore has
  active magnitude at most one, and this call preserves the
  inactive bound.

  At level $i$, a chosen cell's active bias has
  magnitude below $2^{j - i}$. Notice that by predicting at the endpoint
  corresponding to the chosen sign $\sigma$, we have
  \[
    0 \leq \sigma(e_t - p_t)\leq2^{-(i + 1)}.
  \]
  Thus a placement at a cell preserves the
  sign of the stored value and increases its magnitude by at most
  $2^{-(i + 1)}$, leaving it below $2^{j - i} + 1$.
  These observations complete the induction for
  items~1--2.

  For item~3, each stored value is the sum of the increments $e_t - p_t$
  assigned to its fixed endpoint, since it is never reset. Grouping
  these increments by prediction and applying the triangle inequality
  gives a bound of $\sum_{G,c,\sigma}|\biasub_T(c,\sigma,G)|$.
  Each rounding round adds
  at most half the grid spacing, so their total contribution is at most
  $T2^{-(i_0 + h + 2)} = O(2^{\tau - i_0 - h})$.
\end{proof}

Following Dagan et al.\ \cite[Appendix~A.2]{dagan2024},
we modify each instance's
labeler to remove wasted moves. If a plus is followed by a query to
its left, or a minus by a query to its right, that sign is deleted
immediately, so we can skip the move that placed it.
We formalize this intuition in the definition of a reduced transcript.

\begin{definition}[Reduced transcript]
  \label{def:reduced-transcript}
  Given a game $G$, call the forecaster's board the \emph{real board} and the
  base labeler's internal board the \emph{restored board}.
  Save the base labeler's full state before each move,
  including the restored board state. Suppose for a game $G$,
  the reduced transcript shows the pointer previously selected cells $\{c_1,
  c_2, \dots, c_{k - 1}\}$, from earliest to latest, and $\simulateGame(c_k,
  G)$ is called in the current round. First,
  simulate all sign deletions that are legally allowed by the call to cell $c_k$
  in the real board. If the transcript is nonempty and the sign placed in cell
  $c_{k - 1}$ can legally be deleted by this call,
  delete the previous call to $c_{k - 1}$ and restore the saved base-labeler
  state, including the restored board, from before that call.

  While the transcript is nonempty, continue to undo the last remaining call to
  $c_{k'}$ in reverse chronological order, from latest to earliest,
  while the call $c_k$ could legally delete the sign in cell $c_{k'}$.
  After these moves have been undone, if cell $c_k$ is occupied on the restored
  board, reuse its sign without querying the labeler or recording a new move;
  otherwise, query the labeler for a sign at $c_k$ and record the move.
  Then place the sign at cell $c_k$ on the real board.
  The recorded calls that are not deleted and remain after this process form the \emph{reduced
  transcript} of $G$.
\end{definition}

Note that a reduced transcript only affects the signs present on the board,
but not any of the bias values associated with a cell.
Rewinding restores older signs only on the
restored board, so that board contains
every actual sign, with the same sign in every actually occupied cell,
throughout the call. Restoring the full labeler state makes the reduced
transcript a legal play of the fixed base labeler.
The sign count coming from the reduced transcript thus
bounds the actual sign count after each completed call.

The next lemma bounds the length of the reduced transcript in
\Cref{def:reduced-transcript}; undone moves do not contribute to
its length.

\begin{lemma}
  \label{lem:reduction-round-count}
  Through round $T = 2^\tau$, the number of recorded moves in the
  reduced transcript of $G = G_{i,j,\ell}$ is at most $r_{i,j,\ell}$,
  where
  \begin{equation*}
    r_{i,j,\ell} :=
    \begin{cases}
      2^\tau,&i = i_0,\ j = i_0 + 1,\\
      2^{\tau - j + 1},&i = i_0,\ j \geq i_0 + 2,\\
      2^{\tau - i_0 - h},&i > i_0,\ j = i_0 + h.
    \end{cases}
  \end{equation*}
  Thus, the number of signs
  on the actual board of $G_{i,j,\ell}$ is at most $\opt(2^i,r_{i,j,\ell})$.
\end{lemma}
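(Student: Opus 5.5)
The plan is to charge each recorded move in the reduced transcript of $G=G_{i,j,\ell}$, other than the first move at a cell, to a disjoint block of rounds in which bias is built up in some other game. The case $i=i_0$, $j=i_0+1$ is trivial, since every recorded move consumes a distinct round, giving at most $T=2^\tau$ moves. For the other cases, fix a cell $c$ of $G$ and consider two consecutive recorded queries to $c$ in the reduced transcript. Between them $c$ must have been emptied on the real board by a deletion. By the reduced-transcript rule, that deleting call is not undone later, because otherwise the old sign would be restored and reused without recording a new move. By \Cref{lem:reduction-bias-invariants}, at the moment of deletion the active bias of $c$ has magnitude at most one; otherwise bias removal would have fired instead of placement.

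The key claim is that at every round where a placement reaches $G$, every previous pair $(i',j')$ in the loop order is ineligible at $e_t$. So each recorded query to $G$ certifies that the cell $\cellub(i',j',e_t)$ of the immediately preceding threshold has active bias at least $2^{j'-i'}$ in magnitude. For $i=i_0$ and $j\ge i_0+2$, the preceding pair is $(i_0,j-1)$, with threshold $2^{j-1-i_0}$. By \Cref{lem:reduction-bias-invariants}, each round raises a magnitude by at most $2^{-(i_0+1)}$. Building a bias of size $2^{j-1-i_0}-1$ from a value of magnitude at most one therefore takes roughly $2^{j-1-i_0}\cdot 2^{i_0+1}=2^{j}$ placements at a single cell of $G_{i_0,j-1,\cdot}$. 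Those placements happen at a fixed cell of the coarser game. The intervals of $G$ and of $G_{i_0,j-1,\cdot}$ coincide, so cell $c$ of $G$ corresponds to one cell $c'$ there.

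The charging argument then goes as follows. Between successive recorded queries to $c$, the deletion of $c$'s sign came from a non-undone call elsewhere. I would like to argue that the bias at $c'$ must drop and be rebuilt. The expected mechanism is this: $c$ was deleted, so the sign at $c'$ (which mirrors the structure) was also subject to a removal or deletion. Its active bias is then reset to an inactive value of magnitude at most one, or removed. Rebuilding takes $\Omega(2^{j})$ rounds. Since these blocks are disjoint over all cells and pairs of consecutive queries, the number of recorded moves is at most $2^\tau/2^{j-1}=2^{\tau-j+1}$, with constants absorbed by the choice of the power. For $i>i_0$, I would use the parent cell at level $i_0$ with top threshold $2^{h}$. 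Rebuilding it requires about $2^{h}\cdot2^{i_0+1}$ rounds, which gives $2^{\tau-i_0-h}$ recorded moves.

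The main obstacle is making the rebuilding step rigorous, namely showing that a new recorded query to $c$ really forces the preceding-threshold bias to have been consumed and regained. My first attempt would charge instead to \emph{placements} at the preceding game. Every round that reaches $G$ consumes no placement there, but the preceding cell's bias can fall only through removal rounds. Removal rounds can be charged against earlier placement mass, because removal never increases total magnitude. Combining this with the per-round increment bound should bound the total number of queries to $G$ by the total bias mass that ever passed through the preceding threshold.

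Once the transcript length is at most $r_{i,j,\ell}$, the final statement follows. The reduced transcript is a legal play of the base labeler with at most $r_{i,j,\ell}$ rounds on $2^i$ cells, and the restored board contains every actual sign. Hence the actual sign count is at most $\opt(2^i,r_{i,j,\ell})$.
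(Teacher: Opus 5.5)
\textbf{There is a genuine gap.} It sits at exactly the step you flag, and neither of your fixes closes it.

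The ``mirroring'' mechanism is false. $G_{i_0,j-1,\ell}$ and $G_{i_0,j,\ell}$ are separate SPR instances with separate labelers, so deleting $c$'s sign in $G$ implies nothing about the sign or bias at $c'$.

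Your fallback, charging queries to the bias mass at the preceding threshold, fails for the same reason. Suppose $c'$ carries a minus with bias below $-1$. A plus placed at $c$ in $G$ and then deleted by a request to its left never triggers removal at $c'$, because on that round $e_t$ lies to the left of $\interval(c',G_{i_0,j-1,\ell})$. Such cycles can repeat while $c'$ stays at its threshold, so the raw number of queries to $G$ is not controlled by anything at $c'$.

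What rules this out is a property of \Cref{def:reduced-transcript} that you never use. The recorded move right after the first query of $c$ cannot delete its sign, or that query would have been undone, so it lies on the \emph{preserving} side of $c$. The move that later deletes the sign lies on the \emph{deleting} side. Both occur between the two queries, on rounds where $e_t$ is strictly left of $\interval(c,G)$ in one case and strictly right in the other.

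A placement happens only when the removal test fails in every game. So whichever of these two rounds lies on the removal side of the active sign at $c'$ certifies $|\biasub(c',G_{i_0,j-1,\ell})|\leq 1$. If instead the sign at $c'$ changes in between, $c'$ was emptied and its active bias was zero. Combined with the inactive bound of \Cref{lem:reduction-bias-invariants}, the value active at the next query must climb from at most $1$ to $2^{j-1-i_0}$. That costs at least $2^{j-1}$ rounds.

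Your case $i>i_0$ also charges to the wrong cell. The level-$i_0$ ancestor's interval contains many cells of $G_{i,i_0+h,\ell}$, which causes two problems:
\begin{itemize}
\item The two opposite-side moves can have $e_t$ inside that ancestor's interval, so they certify nothing about its bias.
\item A single rebuild of the ancestor would be charged to many different cells $c$, so the charged blocks are not disjoint.
\end{itemize}
You need the immediate parent $c'$ in $G_{i-1,i_0+h,\ell'}$. The parity split puts the sibling of $c$ in the other game, so $c$ is the only cell of $G_{i,i_0+h,\ell}$ inside $\interval(c',G_{i-1,i_0+h,\ell'})$. For the same reason, distinct cells of one game have distinct parents. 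The parent's threshold $2^{i_0+h-i+1}$ and per-round increment $2^{-i}$ still give at least $2^{i_0+h+1}-2^i\geq 2^{i_0+h}$ rounds per rebuild.

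Finally, you must also charge the first query at each cell. The preceding bias starts at zero, so reaching the threshold takes the same number of rounds. Excluding first queries would add up to $2^i$ extra moves, which can exceed $r_{i,j,\ell}$.
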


\begin{proof}
  Fix an instance and its reduced transcript through round $T$.
  Consider two successive recorded queries of a cell $c$.
  By definition of a reduced transcript, the move immediately after the first
  query must not delete the sign, and therefore lies on its preserving side,
  while the move that actually
  deletes that sign lies on its
  deleting side. These moves lie on opposite sides of $c$ and occur between
  the two queries of
  $c$. We consider two cases depending on the
  value of $i$.

  \paragraph{Case 1: $i = i_0$.}
  If $j = i_0 + 1$, the reduced transcript has length at most $T = r_{i,j,\ell}$
  because there are at most $T$ rounds played. Now consider $j \geq i_0 + 2$.
  Whenever a move at
  $c$ is recorded in $G_{i_0,j,\ell}$,
  $|\biasub(c,G_{i_0,j - 1,\ell})|\geq2^{j - 1 - i_0}$.
  Otherwise, the bias placement loop in \Cref{alg:reduction-forecast} would
  have stopped at the earlier instance because the condition on
  Line~\ref{line:reduction-placement} would be true. We now show that
  at some round between successive queries of $c$, we have
  $|\biasub(c,G_{i_0,j - 1,\ell})|\leq1$.
  If the sign at $c$ in $G_{i_0,j - 1,\ell}$
  changes between these queries, the conclusion is immediate because the cell
  must first become empty, when its active bias is zero. Otherwise,
  this follows because
  $\interval(c,G_{i_0,j,\ell}) = \interval(c,G_{i_0,j - 1,\ell})$,
  and the two moves to the left and right of $c$ are only possible when
  $|\biasub(c,G_{i_0,j - 1,\ell})|\leq1$; otherwise, the condition on
  Line~\ref{line:reduction-removal} of \Cref{alg:reduction-forecast}
  would be true and bias removal would be performed on $c$ in
  $G_{i_0,j - 1,\ell}$.

  By \Cref{lem:reduction-bias-invariants},
  both stored values have magnitude at most one at that round.
  Thus, between successive queries of $c$,
  the stored value active at the next query
  must increase in magnitude from at most $1$ to
  $2^{j - 1 - i_0}$. Each round assigned to its endpoint can increase its
  magnitude by at most
  $2^{-(i_0 + 1)}$, so this takes at least
  \[
    2^{i_0 + 1}(2^{j - 1 - i_0} - 1) = 2^j - 2^{i_0 + 1}\geq2^{j - 1}
  \]
  rounds. The first query needs as many rounds because the biases
  start at zero.

  \paragraph{Case 2: $i > i_0$.}
  Let $c'$ denote the parent of $c$ in $G_{i - 1,j,\ell'}$. Similarly,
  whenever a move at $c$ is recorded in $G_{i,j,\ell}$,
  $|\biasub(c',G_{i - 1,j,\ell'})|\geq2^{i_0 + h - i + 1}$.
  Otherwise, the bias placement loop in \Cref{alg:reduction-forecast} would
  have stopped at the earlier instance because the condition on
  Line~\ref{line:reduction-placement} would be true. Similarly, we show
  that at some round between successive queries of $c$, we have
  $|\biasub(c',G_{i - 1,j,\ell'})|\leq1$.
  If the sign at $c'$ in $G_{i - 1,j,\ell'}$
  changes between these queries, the conclusion is immediate because the cell
  must first become empty, when its active bias is zero. Otherwise,
  this follows because the only cell
  in $G_{i,j,\ell}$ that lies in $\interval(c',G_{i - 1,j,\ell'})$ is
  $c$. The other cell belongs to a game of different parity. Therefore,
  the conditional means for the two moves lie to the left and right of the
  parent interval, respectively.
  This implies
  $|\biasub(c',G_{i - 1,j,\ell'})|\leq1$; otherwise, the condition on
  Line~\ref{line:reduction-removal} of \Cref{alg:reduction-forecast}
  would be true and bias removal would be performed on $c'$ in
  $G_{i - 1,j,\ell'}$.

  Again, both stored values have magnitude at most one at that round.
  Thus, between successive queries of $c$,
  the stored value active at the next query
  must increase in magnitude from at most $1$ to
  $2^{i_0 + h - i + 1}$. Each round assigned to its endpoint can increase its
  magnitude by at most
  $2^{-i}$, so this takes at least
  \[
    2^i(2^{i_0 + h - i + 1} - 1) = 2^{i_0 + h + 1} - 2^i\geq2^{i_0 + h}
  \]
  rounds. The first query needs as many rounds because the biases
  start at zero.

  Note that between successive queries of one cell and across distinct cells,
  the required rounds that build bias are disjoint: cells have distinct
  corresponding cells in Case~1 or distinct parents in Case~2,
  and each round updates at most one of them. Thus, since $T = 2^\tau$,
  the above cases give the
  claimed bounds on $r_{i,j,\ell}$.
\end{proof}

The next lemma bounds the number of predictions used.

\begin{samepage}
  \begin{lemma}
    \label{lem:reduction-used-intervals}
    Through round $T = 2^\tau$,
    the forecaster uses at most
    $2^{i_0 + 1} + 1 + 2^{\tau - i_0 - h - 1}$ distinct prediction values.
  \end{lemma}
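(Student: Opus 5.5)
We split the prediction values by where they are produced. Every prediction is either an endpoint $\probub(c,\sigma,G)$ chosen by a removal or a placement, or a rounded point produced on Line~\ref{line:reduction-rounding}. The plan is to show three things: all endpoints coming from coarsest-level games lie in a set of size $2^{i_0+1}+1$; endpoints from finer-level games add at most $2^{\tau-i_0-h-2}$ new values; and the rounding fallback adds at most $2^{\tau-i_0-h-2}$ further values. Summing these gives the stated bound $2^{i_0+1}+1+2^{\tau-i_0-h-1}$.

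\textbf{Coarse level.} Every game $G_{i_0,j,\ell}$ has intervals of length $2^{-(i_0+1)}$. So each of its endpoint predictions lies in $\{k2^{-(i_0+1)}:0\le k\le 2^{i_0+1}\}$, which has $2^{i_0+1}+1$ elements. This holds no matter which threshold $j$ or which parity $\ell$ the game has.

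\textbf{Finer levels.} A cell at level $i>i_0$ can only produce a prediction, whether by placement or later by removal, if it has first received a placement. At that first placement, the placement loop reached level $i$, so the cell $c'$ at level $i-1$ containing $e_t$ with threshold $i_0+h$ had active bias magnitude at least $2^{i_0+h-i+1}$. We use the same accounting as in Case~2 of the proof of \Cref{lem:reduction-round-count}. Just before $c'$ is first placed at, its biases are at most one in magnitude. Each round moves that magnitude by at most $2^{-i}$. Hence the first use of $c$ requires at least $2^{i_0+h}$ placement rounds charged to its parent $c'$. These charged rounds are disjoint across distinct fine cells, because distinct cells in one game have distinct parents and each round updates only one record. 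Therefore each finer level $i$ has at most $T/2^{i_0+h}=2^{\tau-i_0-h}$ used cells in total across both parities.

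Summing over $h-1$ levels this way would give a $\log$ factor, so instead we bound new prediction values directly. Each fine used cell contributes at most one value not already on the coarser grid: one endpoint of a level-$i$ interval is always at a point of level $i-1$ or coarser. We then charge that new value to the same disjoint block of $2^{i_0+h}$ rounds. The intended outcome is at most $2^{\tau-i_0-h-2}$ new values over all finer levels. Showing that the blocks are disjoint \emph{across levels} is the main obstacle. Rounds spent on a level-$i$ parent are themselves placements at level $i-1$ cells, and those may be the very rounds charged to level $i-1$. I would try to resolve this by charging only the rounds that push the parent from magnitude $1$ up to its threshold, and by noting that these are assigned to a single level per round. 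If the counts do not quite match, the cruder bound $2^{\tau-i_0-h}$ per level still gives $O(h2^{\tau-i_0-h})$, which suffices for later use except for the constant.

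\textbf{Rounding fallback.} A rounding round occurs only when no cell at any level is eligible. In particular, the finest-level cell containing $e_t$ has bias at least its threshold $2^{1}$ after at least $2^{i_0+h}$ charged rounds. So each distinct rounded grid point, which identifies the finest cell, again consumes a disjoint block of rounds. This bounds the number of distinct rounded values by $2^{\tau-i_0-h-2}$, up to the same bookkeeping caveat. Adding the three contributions completes the count.
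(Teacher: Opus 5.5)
Your decomposition is the right one: the $2^{i_0+1}+1$ coarse grid points, plus one new value per parent interval, paid for by the rounds needed to push that parent's bias to its threshold. However, you leave open the step that makes it work, and your fallback does not prove the lemma. There is no cross-level obstacle. The rounds you charge to a parent $c'$ at level $i-1$ are placements that update a bias record of $c'$ in the game $G_{i-1,i_0+h,\ell'}$. The rounds charged for the first use of $c'$ itself update a record of \emph{its} parent, which lives in a different game $G_{i-2,i_0+h,\cdot}$. Every round updates at most one record of one cell in one game. So the blocks attached to all parents are pairwise disjoint and share a single budget of $T$ rounds. This covers every level $i_0,\dots,i_0+h-1$, including the finest-level cells whose threshold has been passed on a rounding round. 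The fallback $O(h2^{\tau-i_0-h})$ is not ``correct up to the constant'': with $h=\tau-2i_0$ it inflates $L$ to $O(h2^{i_0})$ and costs a $\sqrt{h}$ factor in the variance term.

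The constants also do not close as written. Measuring the climb from magnitude $1$, as in \Cref{lem:reduction-round-count}, gives blocks of only $2^{i_0+h}$ rounds. That yields at most $2^{\tau-i_0-h}$ charged parents in total, not $2^{\tau-i_0-h-2}$ for each of two separately budgeted categories, and nothing in your argument supports that split. What you need is this: the stored bias starts at zero, and its magnitude is only increased by placements at that record. So the first time a level-$(i-1)$ parent reaches $2^{i_0+h-i+1}$ takes at least $2^{i}\cdot2^{i_0+h-i+1}=2^{i_0+h+1}$ rounds. One combined count $R$ of parents, fine-level and rounding alike, then satisfies $R\cdot 2^{i_0+h+1}\le T$.

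Count parents rather than children. The two children of a parent lie in games of opposite parity, draw on the same block, and contribute the same single new value (the parent's midpoint). Hence your per-cell disjointness claim, and the bound of $T/2^{i_0+h}$ used cells ``across both parities,'' are off by a factor of two. Finally, state the closure property: a counted parent's own parent is also counted, since it had to receive placements. By induction, every endpoint prediction and every rounded value then lies in the coarse grid together with the midpoints of counted parents. This gives $2^{i_0+1}+1+R\le 2^{i_0+1}+1+2^{\tau-i_0-h-1}$.
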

\end{samepage}
\begin{proof}
  Let $R$ count the distinct parent intervals at levels
  $i_0$, \dots, $i_0 + h - 1$ for which either the forecaster selects one
  of their children and predicts its
  endpoint, or the
  conditional mean lies in a child interval at level $i_0 + h$ on a
  final rounding round. Fix such a parent at level $i - 1$, where
  $i > i_0$. The first prediction in a child cannot be a removal round,
  since this would require an earlier placement at that child.
  When one of its children first contributes to $R$, the
  forecaster has passed over the parent in the bias placement loop.
  Thus the test on Line~\ref{line:reduction-placement} fails for the
  parent's cell in its instance with index $i_0 + h$, and its active
  stored bias has magnitude at least $2^{i_0 + h - i + 1}$.

  An endpoint's stored bias starts at zero,
  and each prediction assigned to this parent's
  endpoint increases its magnitude by at most $2^{-i}$. Hence reaching
  the threshold requires at
  least $2^{i_0 + h + 1}$ rounds. Each such round updates only one cell of one
  instance, so we can conclude
  \[
    R \leq T/2^{i_0 + h + 1} = 2^{\tau - i_0 - h - 1}.
  \]
  Every parent counted by $R$ at a level greater than $i_0$ must first have been
  used to predict an endpoint to build its bias,
  so its own parent is also counted.
  Starting from the coarsest partition, each counted parent adds at most one new
  prediction value, the common endpoint of its two children.
  The coarsest partition has $2^{i_0 + 1} + 1$ endpoints, so there are at most
  $2^{i_0 + 1} + 1 + R\leq2^{i_0 + 1} + 1 + 2^{\tau - i_0 - h - 1}$ distinct
  predictions.
\end{proof}

\subsection{Proof of \texorpdfstring{\Cref{lem:reduction-scale-sum}}{Lemma \getrefnumber{lem:reduction-scale-sum}}}

We bound calibration error by considering two separate terms.
These are the stored conditional-mean
bias, or the bias term, and the martingale fluctuations,
or the variance term. The preceding lemmas control
the bias by establishing the length of each reduced transcript $r_{i, j,
\ell}$ and control the fluctuations by bounding the number of distinct predictions.

\begin{proof}[Proof of \Cref{lem:reduction-scale-sum}]
  For each prediction $p$, the identity
  $y_t - p = (e_t - p) - (e_t - y_t)$ splits its cumulative calibration error
  into two sums. The first, from $e_t - p$, is the bias term bounded by
  \eqref{eq:reduction-bias-accounting} in
  \Cref{lem:reduction-bias-invariants}; the second, from the centered
  differences $e_t - y_t$, is the variance term. The triangle inequality gives
  \[
    \calerr(T) \leq
    \sum_{p \in \mathcal P}\left|
    \sum_{t = 1}^T\mathbf1\{p_t = p\}(e_t - p)\right|
    +\sum_{p \in \mathcal P}\left|
    \sum_{t = 1}^T\mathbf1\{p_t = p\}(e_t - y_t)\right|.
  \]
  \paragraph{Bounding the bias.} Let $N_G$ be the number of cells of $G$ that
  received placements through round $T$, and let $S_G$ be its number of signs
  at that time. By \Cref{lem:reduction-bias-invariants}, an occupied cell
  contributes at most $2^{j - i} + 2$, and an empty used cell contributes at
  most two. Consequently,
  \[
    \sum_{c,\sigma}|\biasub_T(c,\sigma,G)|
    \leq2^{j - i}S_G + 2N_G.
  \]
  At level $i = i_0$, each parity instance has $2^{i_0}$ cells.
  For $j = i_0 + 1$, the bias threshold is two and $S_G \leq N_G$, so its two
  instances contribute $O(2^{i_0})$ bias. For each
  $j = i_0 + 2$, \dots, $i_0 + h$, the board inclusion above and
  \Cref{lem:reduction-round-count} give
  $S_G \leq \opt(2^{i_0},2^{\tau - j + 1})$ in each parity instance.
  The $2^{j - i_0}S_G$ terms from these instances sum to at most
  $2S_0$; their $2N_G$ terms contribute $O(h2^{i_0})$.

  For $i = i_0 + 1$, \dots, $i_0 + h - 1$, the threshold index is
  $j = i_0 + h$. The same inclusion and round-count bound give
  $S_G \leq \opt(2^i,2^{\tau - i_0 - h})$ in each parity instance.
  Multiplying by the threshold $2^{i_0 + h - i}$ and summing over these
  levels gives at most $2S_1$. The remaining
  $2N_G$ terms count used intervals at levels $i > i_0$.
  Each such interval is a child of a parent counted in
  \Cref{lem:reduction-used-intervals}, and each parent has at most two
  children. Thus the total number of these intervals is
  $O(2^{\tau - i_0 - h})$. Hence, all
  contributions from cells that have received placements are
  $O(h2^{i_0} + 2^{\tau - i_0 - h})$.
  This bounds the bias by $O(h2^{i_0} + 2^{\tau - i_0 - h} + S_0 + S_1)$.

  \paragraph{Bounding the variance.} Let
  \[
    M_p := \sum_{t = 1}^T\mathbf1\{p_t = p\}(e_t - y_t),\qquad p \in \mathcal P.
  \]
  Note that the variables $p_t$, $e_t$ are measurable with respect to the
  history
  before $y_t$ is drawn, and $\mathbb E[y_t\mid H_t] = e_t$.
  Thus, for each $p$, the increments of $M_p$ have conditional mean
  zero, and increments at different times have zero expected product.
  Consequently,
  \begin{align*}
    \mathbb E\sum_{p \in \mathcal P}M_p^2
    &= \sum_{t = 1}^T\sum_{p \in \mathcal P}
    \mathbb E\!\left[\mathbf1\{p_t = p\}(y_t - e_t)^2\right]\\
    &= \sum_{t = 1}^T\mathbb E[e_t(1 - e_t)] \leq T.
  \end{align*}
  By \Cref{lem:reduction-used-intervals}, at most
  $L := 2^{i_0 + 1} + 1 + 2^{\tau - i_0 - h - 1}$ distinct prediction values
  are used on every run. Cauchy--Schwarz on each run, followed by Jensen's
  inequality, gives
  \[
    \mathbb E\sum_{p \in \mathcal P}|M_p|
    \leq \sqrt{L\,\mathbb E\sum_{p \in \mathcal P}M_p^2}
    \leq \sqrt{LT}
    = O\!\left(\sqrt{2^\tau(2^{i_0} + 2^{\tau - i_0 - h})}\right).
  \]
  Combining the two terms proves the lemma.
\end{proof}

\subsection{Proof of \texorpdfstring{\Cref{thm:intro-reduction}}{Theorem \getrefnumber{thm:intro-reduction}}}
\label{sec:reduction-power-bounds}

We apply the separate space and time bound to the two scale sums,
then set the value of the coarsest level $i_0$ to balance the bias and variance
terms. We conclude the proof by removing the construction's
access to conditional means.

\begin{proof}[Proof of \Cref{thm:intro-reduction}]
  First let the construction's horizon be $T = 2^\tau$. Choose
  \[
    i_0 = \left\lfloor\frac{\tau}{3 + 2\varepsilon}\right\rfloor,
    \qquad h = \tau - 2i_0.
  \]
  For sufficiently large $\tau$, these parameters satisfy
  $1 \leq i_0 < i_0 + h \leq \tau$. Each SPR board at level $i = i_0$
  has $2^{i_0}$ cells. Each instance at a level $i > i_0$ has a
  round budget of $2^{i_0}$, since $2^{\tau - i_0 - h} = 2^{i_0}$.
  Recall the estimate from \Cref{lem:reduction-scale-sum}:
  \[
    \mathbb E[\calerr(T)]
    = O\!\left(h2^{i_0} + 2^{\tau - i_0 - h} + S_0 + S_1
    +\sqrt{2^\tau(2^{i_0} + 2^{\tau - i_0 - h})}\right).
  \]
  We now apply the hypothesis $\opt(n,t) = O(n^\alpha t^\beta)$ to express the
  two sums $S_0$ and $S_1$ in terms of the parameters.
  Write $\tau = 2i_0 + h$ and $\alpha + \beta = 1 - \varepsilon$.
  The exponents in the two sums can then be rewritten as
  \begin{align*}
    j - i_0 + \alpha i_0 + \beta(\tau - j + 1)
    &= \tau - (1 + \varepsilon)i_0 + \beta
    -(1 - \beta)(i_0 + h - j),\\
    i_0 + h - i + \alpha i + \beta(\tau - i_0 - h)
    &= \tau - (1 + \varepsilon)i_0
    -(1 - \alpha)(i - i_0).
  \end{align*}
  The common leading term produces the factor
  $2^{\tau - (1 + \varepsilon)i_0}$ in both bounds. The extra $2^\beta$
  in the first identity is a constant. Thus, for the terms at
  level $i = i_0$,
  \begin{align*}
    2^{j - i_0}\opt(2^{i_0},2^{\tau - j + 1})
    &= O\!\left(2^{j - i_0 + \alpha i_0 + \beta(\tau - j + 1)}\right)\\
    &= O\!\left(2^{\tau - (1 + \varepsilon)i_0}
    \cdot 2^{-(1 - \beta)(i_0 + h - j)}\right).
  \end{align*}
  The exponent $i_0 + h - j$ is nonnegative for every $j$ in this
  sum. Similarly, for the terms at levels $i > i_0$,
  \begin{align*}
    2^{i_0 + h - i}\opt(2^i,2^{\tau - i_0 - h})
    &= O\!\left(2^{i_0 + h - i + \alpha i + \beta(\tau - i_0 - h)}\right)\\
    &= O\!\left(2^{\tau - (1 + \varepsilon)i_0}
    \cdot 2^{-(1 - \alpha)(i - i_0)}\right).
  \end{align*}
  Here $i - i_0\geq1$. Expanding the definitions of $S_0$ and $S_1$
  and applying these bounds term by term yields
  \begin{align*}
    S_0 + S_1
    &= \sum_{j = i_0 + 2}^{i_0 + h}2^{j - i_0}
    \opt(2^{i_0},2^{\tau - j + 1})
    +\sum_{i = i_0 + 1}^{i_0 + h - 1}2^{i_0 + h - i}
    \opt(2^i,2^{\tau - i_0 - h})\\
    &= O\!\left(2^{\tau - (1 + \varepsilon)i_0} \cdot
    \left(\sum_{j = i_0 + 2}^{i_0 + h}2^{-(1 - \beta)(i_0 + h - j)}
    +\sum_{i = i_0 + 1}^{i_0 + h - 1}2^{-(1 - \alpha)(i - i_0)}\right)\right)\\
    &= O\!\left(2^{\tau - (1 + \varepsilon)i_0} \cdot
    \left(\sum_{q = 0}^{h - 2}2^{-(1 - \beta)q}
    +\sum_{q = 1}^{h - 1}2^{-(1 - \alpha)q}\right)\right)\\
    &= O\!\left(2^{\tau - (1 + \varepsilon)i_0}\right).
  \end{align*}
  In the penultimate line, set $q = i_0 + h - j$ in the first sum and
  $q = i - i_0$ in the second. Because $\alpha < 1$ and $\beta < 1$, both ratios
  $2^{-(1 - \beta)}$ and $2^{-(1 - \alpha)}$ are strictly below $1$;
  the finite sums are therefore bounded by
  $1/(1 - 2^{-(1 - \beta)})$ and $1/(1 - 2^{-(1 - \alpha)})$,
  respectively, independently of $h$. Empty sums are zero.
  Substituting $S_0 + S_1 = O(2^{\tau - (1 + \varepsilon)i_0})$ and
  $2^{\tau - i_0 - h} = 2^{i_0}$ into this estimate, and using $h\geq1$,
  gives
  \[
    \mathbb E[\calerr(T)]
    = O\!\left(
    2^{\tau - (1 + \varepsilon)i_0}
    +h2^{i_0} + 2^{(\tau + i_0)/2}\right)
  \]
  for the forecaster that uses $e_t = \mathbb E[y_t\mid H_t]$ before
  predicting. Our choice of $i_0$ balances
  the first and third terms up to constant factors, since
  \[
    1 - \frac{1 + \varepsilon}{3 + 2\varepsilon}
    = \frac12 + \frac{1}{2(3 + 2\varepsilon)}
    = \frac{2 + \varepsilon}{3 + 2\varepsilon}.
  \]
  The remaining term is smaller: $h \leq \tau$, and
  \[
    \frac{2 + \varepsilon}{3 + 2\varepsilon}
    -\frac{1}{3 + 2\varepsilon}
    = \frac{1 + \varepsilon}{3 + 2\varepsilon} > 0.
  \]
  Rounding $i_0$ changes only constant factors. Thus
  \[
    \mathbb E[\calerr(T)]
    = O\!\left(
    T^{\frac{2 + \varepsilon}{3 + 2\varepsilon}}\right)
    = O\!\left(
    T^{\frac23 - \frac{\varepsilon}{9 + 6\varepsilon}}\right).
  \]
  To generalize to an arbitrary sufficiently large horizon $T$, take
  $\tau = \lceil\log_2 T\rceil$ and run the construction with
  budget $2^\tau$ for its first $T$ rounds. The bias and variance
  arguments use only $2^\tau$ as an upper bound on the number of
  elapsed rounds, so they apply to this prefix. Since $2^\tau < 2T$,
  this changes only the implied constant in the bound.
  For the remaining small horizons, enlarge the implied constant and use
  $\calerr(T) \leq T$.

  We must now address the fact that so far the forecaster has used $e_t$ before
  predicting, but this is not allowed in the original game setting.
  It remains to remove the forecaster's access to $e_t$, which is unavailable in
  the original game.
  For each fixed randomized adversary, suppose that the forecaster knows the
  adversary's strategy. The forecaster can compute $e_t = \Pr(y_t = 1\mid H_t)$
  from the observed history and run the construction above,
  achieving the stated bound against that adversary. Thus,
  every randomized adversary admits a forecaster achieving the
  same bound. Since the fixed
  prediction grid $\mathcal P$ is finite, this
  allows us to apply finite-game minimax,
  as in Hart \cite{hart2023minimax} and Dagan et al.\ \cite{dagan2024},
  to obtain one randomized
  forecaster with the same bound against any adversary,
  without knowing the adversary's strategy and without
  access to $e_t$.
\end{proof}

  \section{Explicit parameters and the main theorem}
\label{sec:explicit-parameters}
\label{subsec:real-parameter-optimization}

The following parameter choice completes the proof of our calibration
bound. We defer the verification that these parameters satisfy the necessary
inequalities of \Cref{def:real-admissible} to \Cref{app:parameter-verification}.

\begin{proposition}
  \label{prop:explicit-parameters}
  The following six-parameter tuple is admissible:
  \begin{equation}
    \begin{gathered}
      (\alpha,\beta,\lambda,\rho,\kappa,\delta)\\
      = \bigl(0.0082540892,\ 0.9574603467,\ 1.9444858156,\\
      0.0164237243,\ 0.4148723339,\ 9.4880928733\bigr).
    \end{gathered}
    \label{eq:explicit-parameters}
  \end{equation}
  We may also choose the constants
  \begin{equation}
    F = 10^{10},\qquad K_p = 0.8812155678,\qquad K_t = 0.9208553424,
    \qquad C = F^2/K_p
    \label{eq:explicit-auxiliary-parameters}
  \end{equation}
  and define $M$, $D$, $P$, $Q$ as in
  \Cref{subsec:real-auxiliary-verification}. These values satisfy
  \eqref{eq:real-induction-slack}.
  Moreover, $0 < \alpha < \beta < 1$ and
  \[
    \varepsilon := 1 - \alpha - \beta = 0.0342855641 > 0.
  \]
\end{proposition}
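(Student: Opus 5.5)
The proof is a finite verification of every condition in \Cref{def:real-admissible} at the tuple \eqref{eq:explicit-parameters} with the constants \eqref{eq:explicit-auxiliary-parameters}. I would first fix $M$, $D$, $P$, $Q$ explicitly, each as a supremum of the relevant ratio rounded up by a tiny margin:
\[
D:=\sup_{s\in(0,1)}\max\Bigl\{K_t s^\beta+(1-s)^\beta,\ Ms^\beta+\lambda^{-1}2^\alpha(1-s)^\beta\Bigr\},
\]
with $M$ the supremum of $(K_p\tau^\beta+t_h^\beta+t_l^\beta)/(\tau+t_h+t_l)^\beta$ over the non-trigger region of \eqref{eq:real-waiting-condition}. $P$ and $Q$ are the maxima over $\mathcal T$ of the left sides of \eqref{eq:real-normalized-p-plus}--\eqref{eq:real-normalized-q} divided by $(z+1+r+\kappa)^\beta$. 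With these definitions, conditions \eqref{eq:real-one-half-condition}, \eqref{eq:real-continuation-condition}, \eqref{eq:real-waiting-condition} and the trigger-domain inequalities hold by construction. What remains is to certify the numerical bounds on these suprema and then check the scalar conditions \eqref{eq:real-auxiliary-ranges} and \eqref{eq:real-induction-slack}.

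\textbf{Homogeneous conditions.} All these conditions are homogeneous of degree $\beta$, so each reduces to a one- or two-variable problem on a compact set.

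For \eqref{eq:real-one-half-condition} and \eqref{eq:real-continuation-condition}, set $s=x/(x+y)$. The function $a s^\beta+c(1-s)^\beta$ is concave, so its maximum is attained in closed form at
\[
s^*=\frac{a^{1/(1-\beta)}}{a^{1/(1-\beta)}+c^{1/(1-\beta)}},
\]
giving the value $\bigl(a^{1/(1-\beta)}+c^{1/(1-\beta)}\bigr)^{1-\beta}$.

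For \eqref{eq:real-waiting-condition}, normalize by $t_h=1$. The region where $u=t_l\le 1$ and $z=\tau$ satisfy $\delta u<z$ or $u<\rho$ is bounded in the relevant direction. The function $g=(K_pz^\beta+1+u^\beta)/(z+1+u)^\beta$ is smooth, so I would locate its maximum by examining interior critical points and the boundary curves $u=\rho$ and $z=\delta u$. On each piece the problem is one-dimensional, and concavity-type arguments bound the maximum.

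For \eqref{eq:real-trigger-condition}, the slack $+1$ absorbs the jump of one unit across the boundary. I would reduce this to \eqref{eq:real-waiting-condition} applied at $t_l-1$, using $t_l^\beta-(t_l-1)^\beta\le1$ and monotonicity of the right-hand side.

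The set $\mathcal T$ consists of two segments, so $P$ and $Q$ are maxima of explicit univariate functions on intervals.

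\textbf{Terminal conditions.} Conditions \eqref{eq:real-terminal-p-minus}--\eqref{eq:real-intermediate-condition} involve the floor $L=\lfloor\kappa t_h\rfloor$. I would handle it by sandwiching $\kappa t_h-1<L\le\kappa t_h$ and using $t\ge F=10^{10}$. Then the integer effects perturb the normalized quantities by $O(F^{-1})$ and $O(F^{-\beta})$, which is exactly the slack that \eqref{eq:real-induction-slack} reserves through the terms $P(1+F^{-1})+F^{-\beta}$ and $Q(1+F^{-1})+F^{-\beta}$.

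Concretely, I would bound each terminal left side by the corresponding normalized trigger inequality. This uses the fact that the trigger state lies in $\mathcal T$, up to an additive error of $1$ in $t_l$ coming from the last step. It then yields, for example,
\[
\text{left side of \eqref{eq:real-terminal-p-plus}}\le \bigl(P(1+F^{-1})+F^{-\beta}\bigr)t^\beta< K_pt^\beta.
\]
For \eqref{eq:real-intermediate-condition}, I would note that in the variable $t_c$ the difference between the right and left sides is concave. It therefore suffices to check the endpoints $t_c=0$ (trivial) and $t_c=L$, which is covered by \eqref{eq:real-terminal-p-minus}.

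\textbf{Rigorous certification.} The main obstacle is making all the suprema rigorous rather than merely numerical, especially for the two-variable bound on $M$ and for $P$, which must come out below $K_p-F^{-\beta}$ with very small margin, since $K_p\approx0.881$. I would first derive closed forms wherever concavity allows. For the remaining one-dimensional maximizations, I would use interval arithmetic on a fine partition, with a Lipschitz bound on each cell obtained from derivative estimates. I would also choose the definitions of $M$, $D$, $P$, $Q$ with explicit rounded-up decimals, so that the final scalar checks are simple inequalities between decimal numbers:
\begin{itemize}
\item $K_p\le M\le D$;
\item $D+10^{-10\beta}<2^\alpha\approx1.00574$;
\item $P(1+10^{-10})+10^{-10\beta}<K_p$.
\end{itemize}
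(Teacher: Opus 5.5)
Your overall plan matches the paper's. You define $M,D,P,Q$ so that the concavity, waiting and normalized endpoint conditions hold by construction. You reduce \eqref{eq:real-trigger-condition} to \eqref{eq:real-waiting-condition} at $t_l-1$. You handle the floor by placing the trigger state in $\mathcal T$ and spending the $F^{-1}$, $F^{-\beta}$ slack of \eqref{eq:real-induction-slack}. Finally, you certify the scalar inequalities rigorously.

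\textbf{The step for \eqref{eq:real-intermediate-condition} is wrong as stated.} With $S:=\tau+t_l+t_h$ and
\[
h(t_c):=K_p(S+t_c)^\beta-K_p\tau^\beta-t_l^\beta-\lambda2^\alpha t_c^\beta ,
\]
one has, for $t_c>0$,
\[
h''(t_c)=\beta(1-\beta)\bigl[\lambda2^\alpha t_c^{\beta-2}-K_p(S+t_c)^{\beta-2}\bigr]>0,
\]
since $\lambda2^\alpha>1>K_p$ and $t_c^{\beta-2}\geq(S+t_c)^{\beta-2}$. So the difference is convex, not concave. Nonnegativity at $t_c=0$ and $t_c=L$ then says nothing about an interior minimum. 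The $t_c=0$ case is also not trivial: it fails for large $\tau/t_l$ and genuinely uses $\tau\le\delta t_l$.

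The fix is the paper's first-order argument in \Cref{lem:real-terminal-bounds}. By the same two comparisons,
\[
h'(t_c)=\beta\bigl[K_p(S+t_c)^{\beta-1}-\lambda2^\alpha t_c^{\beta-1}\bigr]<0 .
\]
Hence $h$ is decreasing on $[0,L]$, and $h(t_c)\ge h(L)>0$ by \eqref{eq:real-terminal-p-minus}. Only the endpoint $t_c=L$ matters.

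\textbf{Otherwise your route differs only in how the suprema are certified.}

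For $M$, you plan a two-variable critical-point and boundary search. The paper instead splits the waiting region into $\delta t_l<\tau$ and $t_l<\rho t_h$. In each case it maximizes a one-variable ratio in closed form: at $x_*=\min\{\delta^{-1},K_p^{-1/(1-\beta)}\}$ in the first, and by monotonicity up to $\rho$ in the second. It then absorbs the remaining term with \Cref{lem:real-weighted-concavity}, giving the closed form \eqref{eq:real-phase-one-coefficient}. This loses nothing where it matters. The $\rho$-case bound is attained as $t_l/t_h\to\rho$ by the equality case of weighted concavity, and for these parameters it is the larger term. So it coincides with your exact supremum.

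For $P$ and $Q$, the paper observes that on each segment of $\mathcal T$ the numerator of the derivative is strictly decreasing. Each maximizer is therefore an explicit clipped critical point, so interval arithmetic is needed at only six points rather than on a partition with Lipschitz bounds.

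Your approach would work but is heavier. In particular, your two-variable sketch would also have to treat the edges $u=0$, $z=0$, $u=1$ and the unbounded $z$-direction. The last is harmless, since the ratio tends to $K_p<1$, which is below its value at $z=u=0$.
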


\begin{proof}[Proof of \Cref{thm:intro-calibration}]
  Use the parameters and constant $C$ in \Cref{prop:explicit-parameters}.
  By \Cref{thm:real-root-bound}, the labeling
  strategy guarantees uniformly over integers $n\geq1$ and $t\geq0$ that for $K
  = 2^{1 + \alpha}C$,
  \[
    \opt(n,t) \leq K n^{0.0082540892}t^{0.9574603467}.
  \]
  Since $0 < \alpha < \beta < 1$ and $\varepsilon > 0$,
  \Cref{thm:intro-reduction} shows that there exists a forecaster whose expected
  calibration error against any adversary is
  \[
    \mathbb E[\calerr(T)] = O\!\left(T^\theta\right),
    \qquad
    \theta = \frac23 - \frac{\varepsilon}{9 + 6\varepsilon}.
  \]
  $\theta < 0.662942288$ by \Cref{lem:parameter-calibration-exponent},
  which yields the desired upper bound on calibration error.
\end{proof}

\begin{remark}
  The parameters presented in \Cref{prop:explicit-parameters} are close to the
  best values found in our numerical search. We minimize the calibration
  exponent by maximizing $\varepsilon = 1 - \alpha - \beta$ subject to the
  parameter inequalities. In Python, SciPy's
  \path{scipy.optimize.differential_evolution} first searches broadly for
  candidate parameters, penalizing parameters if they violate inequalities.
  We then refine the candidate and additional random starting points with
  \path{scipy.optimize.minimize},
  using sequential least squares programming to impose the inequalities
  as explicit constraints.
  The smallest calibration exponent found is approximately
  $0.66294228152554$, improving on our choice by only $6.3\cdot10^{-9}$.
\end{remark}

  \bibliography{references}

  \appendix
  \section{Verification of the explicit parameters}
\label{app:parameter-verification}

This appendix proves \Cref{prop:explicit-parameters} and verifies the resulting
calibration exponent. We give explicit choices of $M$, $D$, $P$, $Q$,
reduce admissibility to three strict inequalities, and check them using
Python with the \texttt{mpmath} library.

\subsection{Verification of the explicit bounds}
\label{subsec:real-auxiliary-verification}
\label{subsec:parameter-verification}

Fix
$\alpha$, $\beta$, $\lambda$, $\rho$, $\kappa$,
$\delta$  satisfying \eqref{eq:real-parameter-ranges}, but we do not yet assume
admissibility for this tuple in the proof. Choose constants $F \geq 1$, $0 < K_p < 1$,
and $K_t > K_p$. Set $x_* := \min\{\delta^{-1},K_p^{-1/(1 - \beta)}\}$ and
define the phase~1 coefficient
\begin{equation}
  \begin{split}
    M := \max\Biggl\{&
    \left[ \left( \frac{K_p + x_*^\beta}{(1 + x_*)^\beta} \right)^{ \frac{1}{1 -
    \beta}} + 1 \right]^{1 - \beta},
    \\
    &
    \left[K_p^{ \frac{1}{1 - \beta}} + \left( \frac{1 + \rho^\beta}{(1 +
    \rho)^\beta} \right)^{ \frac{1}{1 - \beta}} \right]^{1 - \beta}
    \Biggr\}.
  \end{split}
  \label{eq:real-phase-one-coefficient}
\end{equation}

The coefficients $P$, $Q$ control persistent and total signs,
respectively, at the end of phase~2 before rounding, with
$\mathcal T$ as in \eqref{eq:real-trigger-domain}:
\begin{align}
  P& := \sup_{(z,r) \in \mathcal T}
  \max\left\{
  \frac{K_p z^\beta + r^\beta + \lambda2^\alpha \kappa^\beta}{(z + 1 + r +
  \kappa)^\beta}, \frac{K_p z^\beta + 1 + \lambda^{-1}2^\alpha \kappa^\beta}{(z
  + 1 + r + \kappa)^\beta}
  \right\},\label{eq:real-p-coefficient}\\
  Q& := \sup_{(z,r) \in \mathcal T}
  \frac{K_p z^\beta + 1 + r^\beta + \lambda^{-1}2^\alpha \kappa^\beta}{(z + 1 +
  r + \kappa)^\beta}.
  \label{eq:real-q-coefficient}
\end{align}

Finally, $D$ combines the bounds for an execution that may stop in
either phase:
\begin{equation*}
  D := \max \left\{ \left(M^{ \frac{1}{1 - \beta}} + \left( \lambda^{-1}2^\alpha
  \right)^{ \frac{1}{1 - \beta}} \right)^{1 - \beta},
  \left(K_t^{ \frac{1}{1 - \beta}} + 1 \right)^{1 - \beta} \right\}.
\end{equation*}

\paragraph{Ranges and bounds for unfinished epochs.}
We begin by establishing a weighted concavity inequality.
\begin{lemma}
  \label{lem:real-weighted-concavity}
  For every $\beta \in (0,1)$ and all $A\geq0$, $B\geq0$, $x\geq0$, $y\geq0$,
  \begin{equation*}
    Ax^\beta + By^\beta
    \leq \left(A^{ \frac{1}{1 - \beta}} + B^{ \frac{1}{1 - \beta}} \right)^{1 -
    \beta}(x + y)^\beta.
  \end{equation*}
\end{lemma}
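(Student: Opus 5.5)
This is Hölder's inequality with exponents $p = 1/(1-\beta)$ and $q = 1/\beta$. The plan is to apply it to the two-term sum $Ax^\beta + By^\beta$, viewed as a pairing of the vector $(A,B)$ with $(x^\beta, y^\beta)$. Before doing so, I will dispose of the degenerate cases. If $x = y = 0$, or if $A = B = 0$, both sides vanish or the left side is zero, so the inequality is trivial.

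Write $p = \frac{1}{1-\beta}$ and $q = \frac{1}{\beta}$, so that $\frac1p + \frac1q = (1-\beta) + \beta = 1$ and both exponents lie in $(1,\infty)$ because $\beta \in (0,1)$. Hölder's inequality for two-term sums then gives
\[
  Ax^\beta + By^\beta \leq \left(A^{p} + B^{p}\right)^{1/p}\left((x^\beta)^{q} + (y^\beta)^{q}\right)^{1/q}.
\]
Substituting the exponents, we have $(x^\beta)^q = x$ and $(y^\beta)^q = y$, and $1/q = \beta$. The second factor is therefore $(x+y)^\beta$. The first factor is $\left(A^{\frac{1}{1-\beta}} + B^{\frac{1}{1-\beta}}\right)^{1-\beta}$, which is exactly the claimed bound.

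If I want to avoid citing Hölder, the alternative is to normalize $x + y = 1$, which is legitimate because both sides are $\beta$-homogeneous in $(x,y)$. It then remains to maximize the concave function $Ax^\beta + B(1-x)^\beta$ over $x \in [0,1]$. The first-order condition $A x^{\beta-1} = B(1-x)^{\beta-1}$ gives $x \propto A^{1/(1-\beta)}$, and substituting this back yields the same constant. No step is a real obstacle; the only care needed is with zero values of $A$, $B$, $x$, or $y$, where $0^\beta = 0$ and the inequality holds trivially.
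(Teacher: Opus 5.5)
Your proof is correct and essentially the paper's: the lemma is exactly the two-term Hölder inequality with conjugate exponents $\frac{1}{1-\beta}$ and $\frac{1}{\beta}$. The paper simply reproves this instance from concavity, setting $S = A^{\frac{1}{1-\beta}} + B^{\frac{1}{1-\beta}}$ and $p = A^{\frac{1}{1-\beta}}/S$ and applying Jensen to $u \mapsto u^\beta$ at $x/p$ and $y/(1-p)$, which is the same optimal weight $x \propto A^{1/(1-\beta)}$ your calculus alternative finds, so the routes differ only in citing Hölder versus proving it.
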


\begin{proof}
  If $A = 0$ or $B = 0$, the inequality follows from the monotonicity of
  $u \mapsto u^\beta$ on $[0,\infty)$. Otherwise, set
  \[
    S := A^{\frac{1}{1 - \beta}} + B^{\frac{1}{1 - \beta}},
    \qquad p := \frac{A^{\frac{1}{1 - \beta}}}{S} \in (0,1).
  \]
  Since $u \mapsto u^\beta$ is concave on $[0,\infty)$,
  \begin{align*}
    Ax^\beta + By^\beta
    &= S^{1 - \beta}\left[p\left(\frac{x}{p}\right)^\beta
    +(1 - p)\left(\frac{y}{1 - p}\right)^\beta\right]\\
    & \leq S^{1 - \beta}(x + y)^\beta. 
  \end{align*}
\end{proof}

The second term defining $M$ is greater than $K_p$.
The definition of $D$ gives
$D \geq M$. The functions defining $P$, $Q$ are positive and continuous
on the nonempty compact domain $\mathcal T$,
so their suprema are finite and positive. This verifies all the
remaining ranges in \eqref{eq:real-auxiliary-ranges}.
For $x\geq0$, $y\geq0$, the weighted concavity inequality
\Cref{lem:real-weighted-concavity} gives
\begin{align*}
  K_tx^\beta + y^\beta
  & \leq \left(K_t^{\frac{1}{1 - \beta}} + 1\right)^{1 - \beta}(x + y)^\beta
  \leq D(x + y)^\beta,\\
  Mx^\beta + \lambda^{-1}2^\alpha y^\beta
  & \leq \left(M^{\frac{1}{1 - \beta}}
  +(\lambda^{-1}2^\alpha)^{\frac{1}{1 - \beta}}\right)^{1 - \beta}
  (x + y)^\beta
  \leq D(x + y)^\beta.
\end{align*}
Thus \eqref{eq:real-one-half-condition} and
\eqref{eq:real-continuation-condition} hold.

\paragraph{Phase~1 inequalities.}
The following lemma verifies \eqref{eq:real-waiting-condition} and
\eqref{eq:real-trigger-condition} for the explicit choice of $M$.

\begin{lemma}
  \label{lem:real-waiting-counts}
  For the phase~1 coefficient $M$ in \eqref{eq:real-phase-one-coefficient},
  \eqref{eq:real-waiting-condition} and
  \eqref{eq:real-trigger-condition} hold under their respective
  count conditions. In particular, they apply when $\AlgB.\phase = 1$
  at the end of a round and when a request changes the phase from
  1 to 2, respectively.
\end{lemma}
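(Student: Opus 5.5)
The plan is to prove \eqref{eq:real-waiting-condition} by splitting on which of the two phase~1 conditions holds. In each case I would first merge two of the three terms using a one-variable monotonicity bound, and then absorb the third term with \Cref{lem:real-weighted-concavity}. The two cases are designed to produce exactly the two entries in the maximum \eqref{eq:real-phase-one-coefficient}. Afterwards I would derive \eqref{eq:real-trigger-condition} from \eqref{eq:real-waiting-condition} applied one step earlier. Throughout, the arguments work for real nonnegative $\tau, t_h, t_l$, and degenerate cases where some variable is zero are handled by monotonicity of $u\mapsto u^\beta$.

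\emph{Case A: $\delta t_l < \tau$.} Then $\tau>0$, and I write $t_l = x\tau$ with $0\le x<\delta^{-1}$. Set $g(x) := (K_p + x^\beta)/(1+x)^\beta$, so that $K_p\tau^\beta + t_l^\beta = g(x)(\tau+t_l)^\beta$. Differentiating $\log g$, one has $g'(x)\ge 0$ iff $x^{\beta-1}(1+x)\ge K_p+x^\beta$, i.e.\ iff $x^{\beta-1}\ge K_p$, i.e.\ iff $x\le K_p^{-1/(1-\beta)}$. Hence $g$ increases on $[0,K_p^{-1/(1-\beta)}]$ and decreases afterwards. It follows that $g(x)\le g(x_*)$ for all $x\le \delta^{-1}$, where $x_*=\min\{\delta^{-1},K_p^{-1/(1-\beta)}\}$. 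Applying \Cref{lem:real-weighted-concavity} with $A=g(x_*)$, $B=1$ to the pair $(\tau+t_l,\ t_h)$ gives
\[
K_p\tau^\beta+t_h^\beta+t_l^\beta\le \bigl[g(x_*)^{\frac1{1-\beta}}+1\bigr]^{1-\beta}(\tau+t_h+t_l)^\beta .
\]
This is the first entry of the maximum defining $M$.

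\emph{Case B: $t_l<\rho t_h$.} Then $t_h>0$, and I write $t_l=yt_h$ with $0\le y<\rho\le1$. Set $k(y):=(1+y^\beta)/(1+y)^\beta$. The same derivative computation shows $k$ is increasing iff $y^{\beta-1}\ge1$, which holds on $[0,1]$. Hence $t_h^\beta+t_l^\beta\le k(\rho)(t_h+t_l)^\beta$. Applying \Cref{lem:real-weighted-concavity} with $A=K_p$, $B=k(\rho)$ to the pair $(\tau,\ t_h+t_l)$ gives the second entry of $M$.

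\emph{Trigger condition.} Assume the count conditions of \eqref{eq:real-trigger-condition}. Then the triple $(\tau,t_h,t_l-1)$ consists of nonnegative integers with $t_l-1\le t_h$, and it satisfies $\delta(t_l-1)<\tau$ or $t_l-1<\rho t_h$. So \eqref{eq:real-waiting-condition}, already proved, applies to it:
\[
K_p\tau^\beta+t_h^\beta+(t_l-1)^\beta\le M(\tau+t_h+t_l-1)^\beta\le M(\tau+t_h+t_l)^\beta .
\]
Since $0<\beta<1$, subadditivity of $u\mapsto u^\beta$ gives $t_l^\beta-(t_l-1)^\beta\le 1$, which adds exactly the extra $1$ required. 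The final remark about the algorithm follows from the definition of phase~1. At the end of a round in which phase~1 persists, the trigger test on Line~\ref{line:trigger-test} has failed, so one of the waiting conditions holds. On the triggering request, the test failed on the previous request to that half, i.e.\ with $t_l-1$; only the request to the light half can trigger, as argued in \Cref{lem:real-continuation-counts}.

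The main obstacle is Case A. There the relevant maximizer is not the endpoint $\delta^{-1}$ in general, and one must locate the unimodal peak of $g$ correctly to justify the choice of $x_*$. The rest is routine once the sign of the derivative is checked.
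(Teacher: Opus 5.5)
Your proof is correct and follows the paper's argument essentially step for step: the same two-case split, each case bounding a monotone or unimodal ratio and then applying \Cref{lem:real-weighted-concavity}, and the same reduction of the trigger inequality to the waiting inequality at $(\tau,t_h,t_l-1)$ via $t_l^\beta-(t_l-1)^\beta\le 1$. One small wording fix: the failed test you need is the one at the end of the immediately preceding round, when the sorted counts were already $(t_h,t_l-1)$, not the one at the previous request to that half, when the other half's count (or even which half was lighter) could have been different.
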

\begin{proof}
  Suppose first that $\delta t_l < \tau$ or $t_l < \rho t_h$, as required in
  \eqref{eq:real-waiting-condition}.

  If $\delta t_l < \tau$, then $\tau > 0$ and $t_l/\tau < \delta^{-1}$.
  For $x > 0$, the derivative of $\frac{K_p + x^\beta}{(1 + x)^\beta}$ has the
  sign of $x^{\beta - 1} - K_p$, which is nonnegative exactly when $x \leq
  K_p^{-1/(1 - \beta)}$. By continuity, the ratio is maximized on
  $[0,\delta^{-1}]$ at $x_*$. Thus,
  $K_p \tau^\beta + t_l^\beta \leq \frac{K_p + x_*^\beta}{(1 + x_*)^\beta}(\tau
  + t_l)^\beta$, and by \Cref{lem:real-weighted-concavity},
  \[
    K_p \tau^\beta + t_l^\beta + t_h^\beta \leq \left[ \left( \frac{K_p +
    x_*^\beta}{(1 + x_*)^\beta} \right)^{ \frac{1}{1 - \beta}} + 1 \right]^{1 -
    \beta}(\tau + t_l + t_h)^\beta.
  \]
  Instead, if $\rho t_h > t_l$, then $t_h > 0$ and $\frac{t_l}{t_h} < \rho$.
  For $x \in (0,1]$, the derivative of $\frac{1 + x^\beta}{(1 + x)^\beta}$ has
  the sign of $x^{\beta - 1} - 1$, which is nonnegative. By continuity,
  this ratio is also nondecreasing on $[0,1]$. Thus,
  $t_l^\beta + t_h^\beta \leq \frac{1 + \rho^\beta}{(1 + \rho)^\beta}(t_l +
  t_h)^\beta$, and by \Cref{lem:real-weighted-concavity},
  \[
    K_p \tau^\beta + t_l^\beta + t_h^\beta \leq \left[K_p^{ \frac{1}{1 - \beta}}
    + \left( \frac{1 + \rho^\beta}{(1 + \rho)^\beta} \right)^{ \frac{1}{1 -
    \beta}} \right]^{1 - \beta}(\tau + t_l + t_h)^\beta.
  \]
  These two cases establish \eqref{eq:real-waiting-condition}.
  Under the conditions of \eqref{eq:real-trigger-condition},
  \eqref{eq:real-waiting-condition} applies to $(\tau,t_h,t_l - 1)$. Since
  $(x + 1)^\beta - x^\beta\leq1$ for all $x\geq0$ and $0 < \beta < 1$,
  \[
    K_p \tau^\beta + t_h^\beta + t_l^\beta \leq M(\tau + t_h + t_l - 1)^\beta +
    1 \leq M(\tau + t_h + t_l)^\beta + 1.
  \]
  For an actual epoch, a round ending in phase~1 fails the trigger
  test in line~\ref{line:trigger-test} of \Cref{alg:B}, so the first
  condition holds. At the first trigger, the requested half has a
  smaller count just before the request: increasing a strictly larger
  count cannot make either part of the trigger test become true, and
  increasing one of two equal counts cannot do so either. Thus the
  sorted counts change from $(t_h,t_l - 1)$ to $(t_h,t_l)$, which gives
  the second condition.
\end{proof}

\paragraph{Phase~2 inequalities.}
Each expression in the maximum defining $P$ is at most $P$, and
the expression defining $Q$ is at most $Q$. Multiplying by the
positive denominator $(z + 1 + r + \kappa)^\beta$ proves
\eqref{eq:real-normalized-p-plus}, \eqref{eq:real-normalized-p-minus},
and \eqref{eq:real-normalized-q}, respectively.
The next lemma verifies \eqref{eq:real-terminal-p-minus},
\eqref{eq:real-terminal-p-plus},
\eqref{eq:real-terminal-q}, and
\eqref{eq:real-intermediate-condition}, conditional on the first two
inequalities in \eqref{eq:real-induction-slack}.

\begin{lemma}
  \label{lem:real-terminal-bounds}
  Take the endpoint coefficients $P$, $Q$ from
  \eqref{eq:real-p-coefficient}--\eqref{eq:real-q-coefficient} and suppose
  $P(1 + F^{-1}) + F^{-\beta} < K_p$ and $Q(1 + F^{-1}) + F^{-\beta} < K_t$.
  Then \eqref{eq:real-terminal-p-minus}, \eqref{eq:real-terminal-p-plus},
  \eqref{eq:real-terminal-q}, and
  \eqref{eq:real-intermediate-condition} hold for all counts in
  the domain specified in \Cref{def:real-admissible}.
\end{lemma}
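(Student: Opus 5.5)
The plan is to normalize by $t_h$, reduce each terminal inequality to the normalized trigger-point inequalities \eqref{eq:real-normalized-p-plus}--\eqref{eq:real-normalized-q}, and pay for the rounding in $L=\lfloor\kappa t_h\rfloor$ and for non-extremal trigger points with the slack terms $PF^{-1}$ and $F^{-\beta}$.

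\textbf{Counts at the trigger.} Fix counts satisfying the conditions of \eqref{eq:real-trigger-condition}, so $t_h\ge t_l\ge1$. Set $z=\tau/t_h$ and $r=\max\{\tau/\delta,\rho t_h\}/t_h$. Write $s=t_l/t_h$, so $t_l\ge r t_h$ and hence $s\ge r$. Since $t_l\le t_h$ we have $r\le1$, and by definition $z\le\delta r$ and $r\ge\rho$. One of these two is an equality according to which term attains the max, so $(z,r)\in\mathcal T$.

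\textbf{Handling $t_l$ versus $r t_h$.} The actual $t_l$ may exceed $r t_h$. The minimality condition says either $\delta(t_l-1)<\tau$ or $t_l-1<\rho t_h$, so $t_l<r t_h+1$. On the left-hand sides, a $t_l^\beta$ term is at most $(rt_h+1)^\beta\le (rt_h)^\beta+1$. On the right-hand sides, $t\ge\tau+rt_h+t_h+L$.

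\textbf{Handling $L$.} We have $L\le\kappa t_h$ and $L>\kappa t_h-1$. On the left we use $L^\beta\le(\kappa t_h)^\beta$. On the right, $t$ is at least $t_h(z+1+r+\kappa)-1$.

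\textbf{Terminal inequalities.} Take \eqref{eq:real-terminal-p-minus} as the template. By \eqref{eq:real-normalized-p-minus},
\[
K_p\tau^\beta+t_l^\beta+\lambda2^\alpha L^\beta\le t_h^\beta P(z+1+r+\kappa)^\beta+1\le P(t+1)^\beta+1 .
\]
Since $t\ge F\ge1$, we have $(t+1)^\beta\le t^\beta(1+1/t)\le t^\beta(1+F^{-1})$ and $1\le F^{-\beta}t^\beta$. So the left side is at most $\bigl(P(1+F^{-1})+F^{-\beta}\bigr)t^\beta<K_pt^\beta$. The cases \eqref{eq:real-terminal-p-plus} (no $t_l$ term, so even easier) and \eqref{eq:real-terminal-q} (using $Q$ and $K_t$) go the same way.

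\textbf{Intermediate condition.} For \eqref{eq:real-intermediate-condition}, consider
\[
g(t_c)=K_p(\tau+t_l+t_h+t_c)^\beta-\lambda2^\alpha t_c^\beta .
\]
Need $g(t_c)\ge K_p\tau^\beta+t_l^\beta$ on $[0,L]$.
\begin{itemize}
\item At $t_c=0$: need $K_p(\tau+t_l+t_h)^\beta\ge K_p\tau^\beta+t_l^\beta$. Pick $t_c=0$ in the terminal-style argument.
\item At $t_c=L$: this is implied by \eqref{eq:real-terminal-p-minus} whenever $t\ge F$.
\end{itemize}
Interior points are the issue, since $g$ is a difference of concave functions. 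I would show instead that $\tilde g(t_c)=K_p(\tau+t_l+t_h+t_c)^\beta-K_p\tau^\beta-t_l^\beta-\lambda2^\alpha t_c^\beta$ is concave minus concave with a single sign change of derivative. Its derivative is $\beta K_p(\cdot)^{\beta-1}-\beta\lambda2^\alpha t_c^{\beta-1}$, which is negative near $0$ and can change sign at most once. So $\tilde g$ decreases, then increases, and its minimum over $[0,L]$ is at an endpoint.

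The $t_c=0$ endpoint needs separate care, because $K_p<1$ makes $K_p(\tau+t_l+t_h)^\beta\ge K_p\tau^\beta+t_l^\beta$ nontrivial. Follow the terminal estimate with $\kappa$ replaced by $0$, or note that $\tilde g$ is decreasing until the crossing point and argue the condition is inherited from the $t_c=L$ bound.

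This endpoint analysis of \eqref{eq:real-intermediate-condition} is the step most likely to be the real obstacle. I expect to need $P$'s first expression, evaluated with the continuation term dropped, to dominate at $t_c=0$.
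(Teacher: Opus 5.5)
Your treatment of the three terminal inequalities is correct and is essentially the paper's argument. You normalize by $t_h$, use $t_l < rt_h+1$ and $\kappa t_h < L+1$ to reach $P(t+1)^\beta+1$ (respectively $Q(t+1)^\beta+1$), and absorb the rounding using $t\ge F$.

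The gap is in \eqref{eq:real-intermediate-condition}, which your proposal does not actually prove.

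First, your inference is backwards. If $\tilde g$ decreased and then increased, its minimum over $[0,L]$ would be at the interior turning point, not at an endpoint. So checking $t_c=0$ and $t_c=L$ would not suffice even under your claimed sign pattern.

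Second, you leave the $t_c=0$ endpoint open. The route you suggest, rerunning the terminal estimate with $\kappa$ replaced by $0$, is not supported by the hypotheses. $P$ is a supremum computed with the fixed $\kappa$, and nothing in the lemma directly controls $\sup_{\mathcal T}(K_pz^\beta+r^\beta)/(z+1+r)^\beta$ below $K_p$.

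The missing observation is that $\tilde g'$ never changes sign. For $t_c>0$ we have $t_c\le\tau+t_l+t_h+t_c$. Since $\beta-1<0$, this gives $(\tau+t_l+t_h+t_c)^{\beta-1}\le t_c^{\beta-1}$. Combined with $K_p<1<\lambda2^\alpha$ (from $\lambda>1$ and $\alpha>0$), this yields $\tilde g'(t_c)=\beta\bigl(K_p(\tau+t_l+t_h+t_c)^{\beta-1}-\lambda2^\alpha t_c^{\beta-1}\bigr)<0$ throughout.

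Hence $\tilde g$ is decreasing on $[0,L]$, using continuity at $0$. So $\tilde g(t_c)\ge\tilde g(L)=K_pt^\beta-\bigl(K_p\tau^\beta+t_l^\beta+\lambda2^\alpha L^\beta\bigr)>0$ by \eqref{eq:real-terminal-p-minus}, which you have already established.

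This is exactly how the paper argues. It makes the $t_c=0$ endpoint, and the degenerate case $L=0$, automatic consequences of the terminal minus bound rather than separate obstacles.
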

\begin{proof}
  Fix counts satisfying the conditions in \Cref{def:real-admissible},
  and set $L = \lfloor\kappa t_h\rfloor$ and $t = \tau + t_l + t_h + L \geq F$.
  Set $\widetilde t_l := \max\{\tau/\delta,\rho t_h\}$, $z = \tau/t_h$,
  and $r = \widetilde t_l/t_h$.
  The specified count conditions give $t_l - 1 < \widetilde t_l \leq t_l$
  and $(z,r) \in \mathcal T$.
  Thus,
  \begin{align*}
    K_p\tau^\beta + t_l^\beta + \lambda2^\alpha L^\beta
    & \leq K_p\tau^\beta + \widetilde t_l^\beta + \lambda 2^\alpha (\kappa
    t_h)^\beta + 1 \\
    &= t_h^\beta(K_pz^\beta + r^\beta + \lambda 2^\alpha \kappa^\beta) + 1\\
    & \leq P(\tau + \widetilde t_l + t_h + \kappa t_h)^\beta + 1\\
    & \leq P(t + 1)^\beta + 1 \\
    &= Pt^\beta(1 + t^{-1})^\beta + 1 \\
    & \leq \bigl(P(1 + F^{-1}) + F^{-\beta}\bigr)t^\beta \\
    &< K_pt^\beta.
  \end{align*}
  The second inequality follows from the definition of $P$,
  the third inequality follows because $\kappa t_h < L + 1$ implies $\tau +
  \widetilde t_l + t_h + \kappa t_h < t + 1$,
  and the last inequality follows from the hypothesis of the lemma.
  The second expression defining $P$ similarly gives
  \[
    K_p \tau^\beta + t_h^\beta + \lambda^{-1}2^\alpha L^\beta \leq P(\tau +
    \widetilde t_l + t_h + \kappa t_h)^\beta < K_pt^\beta.
  \]
  Finally, the expression defining $Q$ similarly gives
  \begin{align*}
    K_p\tau^\beta + t_h^\beta + t_l^\beta
    +\lambda^{-1}2^\alpha L^\beta
    & \leq K_p\tau^\beta + t_h^\beta + \widetilde t_l^\beta
    +\lambda^{-1}2^\alpha(\kappa t_h)^\beta + 1\\
    &= t_h^\beta\bigl(K_pz^\beta + 1 + r^\beta
    +\lambda^{-1}2^\alpha\kappa^\beta\bigr) + 1\\
    & \leq Q(\tau + \widetilde t_l + t_h + \kappa t_h)^\beta + 1\\
    & \leq Q(t + 1)^\beta + 1\\
    &= Qt^\beta(1 + t^{-1})^\beta + 1 \\
    & \leq \bigl(Q(1 + F^{-1}) + F^{-\beta}\bigr)t^\beta \\
    &< K_t t^\beta.
  \end{align*}

  For the intermediate bound, define
  $f(t_c) = K_p \tau^\beta + t_l^\beta + \lambda2^\alpha t_c^\beta - K_p(\tau +
  t_l + t_h + t_c)^\beta$.
  For $t_c > 0$,
  \[
    f'(t_c) = \beta \left(\lambda2^\alpha t_c^{\beta - 1} - K_p(\tau + t_l + t_h
    + t_c)^{\beta - 1} \right) \geq 0
  \]
  since $\lambda2^\alpha > 1 > K_p$ and $\beta < 1$. By continuity at zero,
  $f$ is nondecreasing on $[0,L]$. The first inequality proved above gives
  $f(L) < 0$, so $f(t_c) \leq f(L) < 0$ for all $t_c \in [0,L]$, including the
  case $L = 0$.
\end{proof}

The arguments above reduce all the conditions in
\Cref{def:real-admissible} to the three inequalities in
\eqref{eq:real-induction-slack}. These depend on the remaining
choices of $F$, $K_p$, $K_t$ as well as the six parameters.
The next lemma verifies these inequalities for the explicit values in
\Cref{prop:explicit-parameters}.

  \begin{lemma}
  \label{lem:parameter-verification}
  The tuple in \eqref{eq:explicit-parameters}, with the auxiliary
  choices in \eqref{eq:explicit-auxiliary-parameters}, satisfies
  \[
    \begin{gathered}
      P(1 + F^{-1}) + F^{-\beta} < K_p,\qquad Q(1 + F^{-1}) + F^{-\beta} <
      K_t,\\
      D + F^{-\beta} < 2^\alpha.
    \end{gathered}
  \]
\end{lemma}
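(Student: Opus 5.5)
The plan is to verify the three inequalities as rigorous numerical computations. The explicit formulas for $M$ and $D$ in \Cref{subsec:real-auxiliary-verification} can be evaluated directly. The suprema defining $P$ and $Q$ must first be reduced to one-dimensional problems.

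\textbf{Reducing the suprema.} The domain $\mathcal T$ in \eqref{eq:real-trigger-domain} is the union of two segments. The first is $S_1=\{(z,\rho):0\le z\le\delta\rho\}$ and the second is $S_2=\{(\delta r,r):\rho\le r\le 1\}$. Each function $g(z,r)=N(z,r)/(z+1+r+\kappa)^\beta$ appearing in \eqref{eq:real-p-coefficient}--\eqref{eq:real-q-coefficient} restricts to a smooth function of one variable on each segment, so $P$ and $Q$ are maxima of finitely many one-variable functions on compact intervals. On each interval we bound the supremum rigorously in two steps.
\begin{itemize}
\item Locate the critical points. The derivative in $z$ along $S_1$ has the sign of $K_p z^{\beta-1}(z+1+\rho+\kappa)-N$, which is monotone enough to give at most one interior critical point, found by interval bisection. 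Along $S_2$ we would instead bound the derivative directly.
\item Evaluate $g$ at the critical points and at the endpoints, such as $(0,\rho)$, $(\delta\rho,\rho)$ and $(\delta,1)$.
\end{itemize}
As a fallback that needs no structural claim, we partition each interval into $N$ subintervals. On each piece we bound the numerator from above using monotonicity of every term in $z$ and $r$, since all terms are increasing there. We bound the denominator from below at the left endpoint. This gives a certified upper bound $\overline P,\overline Q$ whose error shrinks with the mesh. Upper bounds suffice, because each inequality only needs $P$ and $Q$ to be small.

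\textbf{Evaluating $M$, $D$ and the final checks.} Next compute $x_*=\min\{\delta^{-1},K_p^{-1/(1-\beta)}\}$. Here $K_p^{-1/(1-\beta)}$ is huge because $1-\beta\approx0.0425$, so $x_*=\delta^{-1}$. Then evaluate the two terms of $M$ and the two terms of $D$ in \texttt{mpmath} with interval arithmetic or high precision, say 50 digits, using outward rounding. The three targets are then checked as follows.
\begin{itemize}
\item $F^{-\beta}=10^{-9.57\ldots}$ is negligible compared with the margins.
\item $P(1+F^{-1})$ is essentially $P$, so the first two checks become $\overline P<K_p-10^{-9}$ and $\overline Q<K_t-10^{-9}$.
\item The third check is $D<2^\alpha-10^{-9}$, with $2^\alpha\approx1.005738$.
\end{itemize}

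\textbf{Main obstacle.} The chosen tuple is near-optimal: the remark says the exponent is within $6.3\cdot10^{-9}$ of the numerical optimum, so at least one constraint, probably $D+F^{-\beta}<2^\alpha$ or the $P$ constraint, is nearly tight. The margins may be on the order of $10^{-9}$ to $10^{-10}$. Large powers such as $(\cdot)^{1/(1-\beta)}$, with exponent about $23.5$, amplify relative errors roughly $23$-fold before the outer $(\cdot)^{1-\beta}$ contracts them. The certification of $\sup_{\mathcal T}$ must therefore be sharp to about $10^{-11}$ relative accuracy. We would first locate the interior maximizer by high-precision root-finding on the derivative. We would then certify a small enclosing interval via a second-derivative bound. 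Only if that fails would we fall back to an adaptive mesh near the maximizer. Finally, we record the computed values together with their margins as the proof.
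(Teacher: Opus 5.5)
Your plan is essentially the paper's: split $\mathcal T$ into its two segments, find the maximizer of each one-variable restriction from the sign of its derivative, and certify $M$, $D$, $P$, $Q$ and the three strict inequalities with \texttt{mpmath} interval arithmetic. The paper checks $P(1+F^{-1})+F^{-\beta}<K_p$ and its companions directly, not through your $10^{-9}$ proxy, which could fail spuriously if the margins were as small as you anticipate. The one simplification you missed is that, writing each numerator as $K_pz^\beta+ar^\beta+b$, the derivative numerator on both segments collapses to a strictly decreasing power minus a constant: your $S_1$ expression is just $K_p(1+\rho+\kappa)z^{\beta-1}-(a\rho^\beta+b)$, and on $S_2$ one gets $(K_p\delta^\beta+a)(1+\kappa)r^{\beta-1}-(1+\delta)b$. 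Both maximizers are therefore explicit clipped closed forms, so no bisection, second-derivative certificate, or mesh fallback is needed.
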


\begin{proof}
  We first identify the points at which the suprema defining $P$ and $Q$
  are attained. Recall that the trigger domain is the union of two segments:
  \[
    \mathcal T
    = \{(z,\rho):0 \leq z \leq \delta\rho\}
    \cup\{(\delta r,r):\rho \leq r\leq1\}.
  \]
  Each expression in \eqref{eq:real-p-coefficient} and
  \eqref{eq:real-q-coefficient} has the form
  \[
    \frac{K_pz^\beta + ar^\beta + b}{(z + 1 + r + \kappa)^\beta},
  \]
  where the coefficients $a$, $b$ are chosen as follows:
  \[
    \begin{array}{c|cc}
      & a & b\\
      \hline
      \text{First expression for }P
      &1&\lambda2^\alpha\kappa^\beta\\
      \text{Second expression for }P
      &0&1 + \lambda^{-1}2^\alpha\kappa^\beta\\
      \text{Expression for }Q
      &1&1 + \lambda^{-1}2^\alpha\kappa^\beta
    \end{array}
  \]
  Consider one row of the table. On the first segment,
  substituting $r = \rho$ yields, for $z \in [0, \delta \rho]$,
  \[
    f(z) = \frac{K_pz^\beta + a\rho^\beta + b}
    {(z + 1 + \rho + \kappa)^\beta}.
  \]
  On the second segment, substituting $z = \delta r$ yields, for $r \in [\rho,
  1]$,
  \[
    g(r) = \frac{(K_p\delta^\beta + a)r^\beta + b}
    {((1 + \delta)r + 1 + \kappa)^\beta}.
  \]
  Both functions are continuous on their closed intervals and differentiable in
  their interiors, since their denominators are positive. Each therefore
  attains its maximum at an endpoint or an interior critical point.
  Differentiation gives
  \[
    f'(z) = \frac{\beta\left[
    K_p(1 + \rho + \kappa)z^{\beta - 1} - (a\rho^\beta + b)
    \right]}{(z + 1 + \rho + \kappa)^{\beta + 1}}
  \]
  and
  \[
    g'(r) = \frac{\beta\left[
    (K_p\delta^\beta + a)(1 + \kappa)r^{\beta - 1} - (1 + \delta)b
    \right]}{((1 + \delta)r + 1 + \kappa)^{\beta + 1}}.
  \]
  Because $0 < \beta < 1$, each numerator is strictly decreasing and has a
  unique positive zero. If the critical point lies in the interval,
  the maximum occurs at the critical point; otherwise, it occurs at the nearest endpoint.
  The maximizing points are therefore
  \[
    z_* = \min\left\{\delta\rho,\,
    \left(\frac{K_p(1 + \rho + \kappa)}{a\rho^\beta + b}
    \right)^{1/(1 - \beta)}\right\}
  \]
  and
  \[
    r_* = \min\left\{1,\max\left\{\rho,\,
    \left(\frac{(K_p\delta^\beta + a)(1 + \kappa)}
    {(1 + \delta)b}\right)^{1/(1 - \beta)}\right\}\right\}.
  \]
  For each row of the table, the supremum over $\mathcal T$ must be
  $\max\{f(z_*),g(r_*)\}$. This means $P$ is the maximum of the four values
  corresponding to the first two rows, and $Q$ is the maximum of the two
  values corresponding to the third row.

  It remains to certify
  \[
    P < \frac{K_p - F^{-\beta}}{1 + F^{-1}},\qquad
    Q < \frac{K_t - F^{-\beta}}{1 + F^{-1}},\qquad
    D + F^{-\beta} < 2^\alpha.
  \]
  The Python program in Listing~\ref{lst:parameter-verifier} verifies these
  bounds using \texttt{mpmath}~1.3.0 interval arithmetic at 80 decimal digits
  of precision. The code uses outward upper bounds for $M$, $D$, $P$, $Q$;
  the strict inequalities verified for these bounds also hold for the
  exact coefficients defined above.
\end{proof}

\begin{proof}[Proof of \Cref{prop:explicit-parameters}]
  The parameter ranges and the identity
  $\varepsilon = 1 - \alpha - \beta = 0.0342855641$ follow directly from
  the exact decimals in \eqref{eq:explicit-parameters} and
  \eqref{eq:explicit-auxiliary-parameters}. The explicit construction
  above verifies the remaining
  auxiliary ranges, the two bounds for unfinished epochs, and the
  three normalized endpoint bounds. \Cref{lem:real-waiting-counts}
  verifies the two phase~1 inequalities.
  \Cref{lem:parameter-verification} establishes all three inequalities
  in \eqref{eq:real-induction-slack}; the first two then allow
  \Cref{lem:real-terminal-bounds} to verify the four phase~2 inequalities.
  Thus every condition in \Cref{def:real-admissible} holds, proving
  admissibility of the six-parameter tuple. Finally,
  $C = 10^{20}/K_p = F^2/K_p$ is a sufficient choice for the induction.
\end{proof}

It remains to verify the resulting calibration exponent, which can be
checked using rational arithmetic alone.

\begin{lemma}
  \label{lem:parameter-calibration-exponent}
  For the parameters in \eqref{eq:explicit-parameters} and $\varepsilon = 1 -
  \alpha - \beta$,
  \[
    \frac23 - \frac{\varepsilon}{9 + 6\varepsilon}
    = \frac{20342855641}{30685711282} < 0.662942288.
  \]
\end{lemma}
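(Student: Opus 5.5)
Since every quantity involved is a terminating decimal, the plan is to verify the lemma by exact rational arithmetic. First, compute $\varepsilon = 1-\alpha-\beta$ from the decimals in \eqref{eq:explicit-parameters}:
\[
  \varepsilon = 1 - 0.0082540892 - 0.9574603467 = 0.0342855641 = \frac{342855641}{10^{10}},
\]
in agreement with \Cref{prop:explicit-parameters}. Next, rewrite the exponent as a single fraction before substituting:
\[
  \frac23 - \frac{\varepsilon}{9+6\varepsilon} = \frac{2(9+6\varepsilon) - 3\varepsilon}{3(9+6\varepsilon)} = \frac{18 + 9\varepsilon}{27+18\varepsilon} = \frac{2+\varepsilon}{3+2\varepsilon}.
\]
This matches the form $\frac{2+\varepsilon}{3+2\varepsilon}$ of \Cref{thm:intro-reduction}.

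Then substitute $\varepsilon = 342855641/10^{10}$ and clear the denominator $10^{10}$ from both numerator and denominator. The numerator becomes $2\cdot 10^{10} + 342855641 = 20342855641$. The denominator becomes $3\cdot 10^{10} + 685711282 = 30685711282$. This yields exactly the stated fraction $\frac{20342855641}{30685711282}$. Reducing it to lowest terms is unnecessary.

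Finally, prove the strict inequality by cross-multiplying positive integers. It suffices to show
\[
  20342855641\cdot 10^{9} < 662942288 \cdot 30685711282.
\]
Equivalently, the right side minus the left side is a positive integer. I would carry out this one integer multiplication explicitly, splitting $30685711282 = 30685\cdot 10^6 + 711282$ to keep the partial products manageable. As a sanity check, the ratio is about $0.66294228797$, so the gap is roughly $3\times 10^{-11}$ relative to $1$. The difference of the two integers should therefore be positive, on the order of $10^{9}$.

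The only real obstacle is arithmetic reliability in this final multiplication, since the margin is tiny relative to the size of the numbers. I would double-check it by an independent computation of the decimal expansion of the fraction to about twelve digits, for example by long division or by the same exact rational check run in Python. Everything else is a direct substitution.
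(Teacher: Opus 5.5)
Your proposal is correct and follows the paper's route: you substitute exactly into $(2+\varepsilon)/(3+2\varepsilon)$ and then compare rationals exactly. Your cross-multiplied gap works out to $662942288\cdot 30685711282 - 20342855641\cdot 10^{9} = 5196493216 = 32\cdot 162390413$, which is precisely the paper's displayed difference $\frac{162390413}{958928477562500000}$ after clearing denominators. One small correction: your sanity-check decimal is slightly off. The fraction is about $0.6629422878$, so the margin is about $1.7\times 10^{-10}$ rather than $3\times 10^{-11}$, but this plays no role in the exact argument.
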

\begin{proof}
  Substituting the exact decimals into
  $(2 + \varepsilon)/(3 + 2\varepsilon)$ gives the displayed
  fraction. The strict inequality follows from
  \[
    0.662942288 - \frac{20342855641}{30685711282}
    = \frac{162390413}{958928477562500000} > 0.
  \]
\end{proof}

\subsection{Python code}
\label{app:parameter-code}

Listing~\ref{lst:parameter-verifier} checks that the stated parameters satisfy
all inequalities in
\Cref{def:real-admissible} and computes the resulting calibration exponent.

\begin{lstlisting}[
language=Python,
basicstyle=\ttfamily\footnotesize,
numbers=left,
numberstyle=\tiny,
numbersep=8pt,
xleftmargin=2em,
breaklines=true,
columns=fullflexible,
keepspaces=true,
showstringspaces=false,
caption={Independent verification of the explicit parameters.},
label={lst:parameter-verifier}
]
# Run with Python and mpmath 1.3.0: python3 verify_parameters.py.

from fractions import Fraction
from mpmath import iv

iv.dps = 80
parameters = (
    '0.0082540892', '0.9574603467', '1.9444858156',
    '0.0164237243', '0.4148723339',
    '0.8812155678', '0.9208553424', '9.4880928733',
)
alpha, beta, lam, rho, kappa, K_p, K_t, delta = map(iv.mpf, parameters)
F = iv.mpf('1e10')
p = 1/(1-beta)
s = 1+kappa
v = 2**alpha
rounding = F**(-beta)
inflation = 1+1/F

def clip(x, lower, upper):
    """Enclose min(upper, max(lower, x)) using monotonicity."""
    return iv.mpf([
        min(upper.a, max(lower.a, x.a)),
        min(upper.b, max(lower.b, x.b)),
    ])


def norm(x, y):
    """Coefficient in the weighted concavity inequality."""
    return (x**p+y**p)**(1/p)


def endpoint_upper(a, b):
    """Upper bound from the two maximizing points in Lemma A.1."""
    z = clip((K_p*(rho+s)/(a*rho**beta+b))**p,
             iv.mpf(0), delta*rho)
    r = clip(((K_p*delta**beta+a)*s/((1+delta)*b))**p,
             rho, iv.mpf(1))
    f = (K_p*z**beta+a*rho**beta+b)/(z+rho+s)**beta
    g = ((K_p*delta**beta+a)*r**beta+b)/((1+delta)*r+s)**beta
    return max(f.b, g.b)


# Phase 1: the two cases in the waiting-counts lemma.
x_star = clip(K_p**(-p), iv.mpf(0), 1/delta)
M_tau = norm((K_p+x_star**beta)/(1+x_star)**beta, 1)
M_rho = norm(K_p, (1+rho**beta)/(1+rho)**beta)

# Upper endpoints are exact witnesses, so non-strict checks are rigorous.
M = max(M_tau.b, M_rho.b)
D_one_half = norm(K_t, 1)
D_continuation = norm(M, v/lam)
D = max(D_one_half.b, D_continuation.b)

# The three normalized endpoint coefficients.
P_plus = endpoint_upper(0, 1+v*kappa**beta/lam)
P_minus = endpoint_upper(1, lam*v*kappa**beta)
Q_total = endpoint_upper(1, 1+v*kappa**beta/lam)
P = max(P_plus, P_minus)
Q = Q_total

# Bounds after integer rounding, from the terminal-bounds lemma.
P_rounded = P*inflation+rounding
Q_rounded = Q*inflation+rounding
D_rounded = D+rounding
terminal_minus = P_minus*inflation+rounding
terminal_plus = P_plus*inflation+rounding
terminal_total = Q_total*inflation+rounding

epsilon = 1-alpha-beta
epsilon_exact = 1-Fraction(parameters[0])-Fraction(parameters[1])
exponent = (2+epsilon)/(3+2*epsilon)


# Parameter ranges, in the order of the admissibility definition.
assert alpha > 0
assert 0 < beta < 1
assert 1 < lam <= 2
assert 0 < rho <= 1
assert 0 < kappa <= 1
assert delta > 0

# 1. Auxiliary ranges.
assert F >= 1
assert 0 < K_p < 1
assert K_t > K_p
assert K_p <= M
assert M <= D
assert P > 0
assert Q > 0

# 2. Induction slack.
assert P_rounded < K_p
assert Q_rounded < K_t
assert D_rounded < v

# 3. Unfinished epochs: weighted concavity covers every x,y >= 0.
assert D_one_half.b <= D
assert D_continuation.b <= D

# 4. Phase 1: these two cases certify the waiting inequality.
# The same bounds certify the first-trigger inequality with its +1,
# since (x+1)**beta-x**beta <= 1 for every x >= 0 and 0 < beta < 1.
assert M_tau.b <= M
assert M_rho.b <= M

# 5. Normalized endpoints: the three inequalities over all of T.
assert P_plus <= P
assert P_minus <= P
assert Q_total <= Q

# 6. Phase 2: the three terminal inequalities, in their stated order.
assert terminal_minus < K_p
assert terminal_plus < K_p
assert terminal_total < K_t
# The intermediate inequality follows from the terminal-minus bound
# and monotonicity of its deficit on 0 <= t_c <= L.
assert lam*v >= K_p

# Reduction to calibration and the exact value of epsilon.
assert alpha < beta
assert epsilon_exact == Fraction('0.0342855641')
assert epsilon > 0
assert exponent < iv.mpf('0.662942288')
print('PASS: all admissibility conditions and the calibration exponent.')
\end{lstlisting}

\end{document}